\numberwithin{equation}{section}
\theoremstyle{plain}
\newtheorem{theo}{Theorem}[section]
\newtheorem{prop}[theo]{Proposition}
\newtheorem{lemm}[theo]{Lemma}
\newtheorem{coro}[theo]{Corollary}
\theoremstyle{definition}
\newtheorem{defi}[theo]{Definition}
\newtheorem{exam}[theo]{Example}
\newtheorem{obse}[theo]{Observation}
\newtheorem{rema}[theo]{Remark}
\newtheorem{conv}[theo]{Convention}
\begin{document}
\title{Separation and collapse of equilibria inequalities on AND--OR trees without shape constraints}
\author{ 
Fuki Ito\thanks{Partially supported by JST, the establishment of university fellowships towards the creation of science technology innovation, Grant Number JPMJFS2139. The current affiliation is Serverworks Co., Ltd. E-mail {\tt fukiito2022@gmail.com}}, Toshio Suzuki\thanks{Corresponding author. Partially supported by JSPS KAKENHI Grant Number 21K03340, 25K07105. E-mail {\tt toshio-suzuki@tmu.ac.jp}}
\\
{\small Department of Mathematical Sciences, Tokyo Metropolitan University}, \\ {\small 1-1 Minami-Osawa, Hachioji, Tokyo 1920397, Japan}
}

\maketitle

\begin{abstract}
Herein, we investigate the zero-error randomized complexity, which is the least cost against the worst input, of AND--OR tree computation by imposing various restrictions on the algorithm to find the Boolean value of the root of that tree and no restrictions on the tree shape. When a tree satisfies a certain condition regarding its symmetry, directional algorithms proposed by Saks and Wigderson (1986), special randomized algorithms, are known to achieve the randomized complexity. Furthermore, there is a known example of a tree that is so unbalanced that no directional algorithm achieves the randomized complexity (Vereshchagin 1998). In this study, we aim to identify where deviations arise between the general randomized Boolean decision tree and its special case, directional algorithms. We show that for any AND--OR tree, randomized depth-first algorithms, which form a broader class compared with directional algorithms, have the same equilibrium as that of the directional algorithms. Thus, we get the collapse result on equilibria inequalities that holds for an arbitrary AND--OR tree. This implies that there exists a case where even depth-first algorithms cannot be the fastest, leading to the separation result on equilibria inequality. Additionally, a new algorithm is introduced as a key concept for proof of the separation result. 

Keywords: 
AND--OR tree, depth-first algorithm, directional algorithm, randomized complexity. 

2000 MSC: 
Primary 68T20, 68Q17; Secondary 03D15, 91A60;

CCS Concepts: Theory of computation $\to$ Probabilistic computation; Oracles and decision trees; 
\end{abstract}

\tableofcontents

\section{Introduction}

\subsection{Background}  

AND--OR tree is a computation model of a \emph{read-once} Boolean function, that is, a function in which each of its variables appears exactly once in its formula. In other words, a tree $T$ is an AND-OR tree if each internal node is labeled with AND or OR, and each leaf is a different Boolean variable. We are interested in the \emph{decision tree complexity} (or \emph{deterministic complexity}) of a tree here, i.e., the computational cost is measured by the number of leaves the algorithm (i.e., the decision tree) has queried. $\alpha$ runs the algorithms computing the tree and $\omega$ runs the assignments to the tree, then the decision tree complexity is defined as follows:
\begin{equation}
\min_{\alpha} \max_{\omega} \mathrm{cost} (\alpha,\omega).
\end{equation}

We are interested in the case where we allow randomization on algorithms and/or assignments. The expected value of cost is then simply called cost. 
For a fixed tree $T$, we consider two types of equilibrium. These equilibria are good indicators of the complexity of the problem of finding the value of the root. 
One type of the equilibrium is the \emph{$($zero-error$)$ randomized complexity}, where $A$ runs randomized algorithms and $\omega$ runs (nonrandomized) assignments to the tree, and is defined as 
\begin{equation} \label{eq:defofr}
R(T) := \min_{A} \max_{\omega} \mathrm{cost} (A,\omega).
\end{equation}
The other equilibrium, which can be considered the dual concept of $R$, is the \emph{distributional complexity}, defined in \eqref{eq:deofop}. Here, $S$ runs randomized assignments and $\alpha$ runs  (nonrandomized) algorithms.
\begin{equation} \label{eq:deofop}
P(T) := \max_{S} \min_{\alpha} \mathrm{cost} (\alpha,S)
\end{equation} 

It is well known that the randomized complexity may be smaller than the deterministic complexity \cite{SW86}. In other words, randomization may improve efficiency. There is a considerable amount of prior research on separation between deterministic complexity and randomized complexity. In particular, an open problem in \cite{SW86} has been solved in mid 2010s, and remarkable developments continue to take place \cite{ABBLSS2017, MS2015, MRS2018}.

Note the differences in symbols in the literature. Our definition of $R$ is the same as \cite{SW86}. 
We later define $R_{0}$ and $R_{1}$ also in the style of \cite{SW86}; let us call this notation ``classical style''. On the other hand, in some recent literature \cite{ABBLSS2017, MS2015, MRS2018}, the symbols  $R$, $R_{0}$, and $R_{1}$ are used to mean something different from what they do in \cite{SW86}; let us call it ``modern style''. Table~\ref{table:symbol4r} provides summary of differences. In the row of ``zero error, the root is $i$'', 
the truth assignment is restricted to those with root value $i$, and the algorithm is zero-error regardless of the root value.  
This paper adopt classical style. 

\begin{table}[htp]
\begin{center}
\caption{Symbols for variants of randomized complexity \label{table:symbol4r}}
\begin{tabular}{|c|c|c|}
\hline
meaning & classical style (this paper) & modern style  
\\
\hline
zero error version & $R$ & $R_{0}$
\\
zeo error, the root is $0$ & $R_{0}$ & 
\\
zero error, the root is $1$ & $R_{1}$ & 
\\
bounded error version & & $R$
\\
one-sided error version & & $R_{1}$
\\
\hline
\end{tabular}
\end{center}
\end{table}%

It is reasonable to restrict ourselves to algorithms with alpha--beta pruning procedures: For each OR node $v$, we assume that once an algorithm finds a child node of $v$ has value 1 (true) then the algorithm does not search the rest of the descendants of $v$ further and finds that $v$ has value 1,  and we make similar assumptions for AND nodes. 
 
It is often the case that \emph{depth-first} algorithms are considered to examine essential instances without any unnecessary complication. A randomized algorithm of each type (general or depth-first) is defined as a probability distribution on each class of algorithms. Moreover, we consider a special type of randomized algorithms called \emph{directional algorithms} proposed by Saks and Wigderson \cite{SW86}. We refer to directional algorithms in this sense as RDA for technical reasons, which stands for randomized directional algorithm. 
The following is a somewhat lengthy definition, but it is important and should be stated in full here.

\begin{conv} \label{conv:shape}
Throughout this paper, no restrictions are placed on the shape of an AND--OR tree unless otherwise specified. 
An internal node may have arbitrary number ($\geq 2$) of child nodes. 
We also relax the constraints that the tree is alternating; to be more precise, each internal node of an AND--OR tree $T$ may be labeled with AND or OR regardless of its position, in other words, regardless of whether it is a root or not, and regardless of whether its parent node is labeled with AND or OR. 
\end{conv}

\begin{defi} \label{defi:symbols}
Suppose that $T$ is an AND-OR tree in the sense of Convention~\ref{conv:shape}.
\begin{enumerate}
\item \textbf{(Assignment)} The value of the root of $T$ is a Boolean function of the values of the leaves of $T$.
An \emph{assignment} to $T$ is a $\{0,1\}$ string of such input values to the leaves. 
More precisely, if $\ell_1,\ldots,\ell_n$ are the leaves of $T$, then an assignment is a mapping $\omega : \{\ell_1,\ldots,\ell_n\} \rightarrow \{0,1\}$. 

\item \textbf{(Assignment sets $\Omega (T)$ and $\Omega_{i} (T)$)} We denote the set of assignments to $T$ by $\Omega(T)$, and the set of assignments which gives the root value $i\in\{0,1\}$ by $\Omega_i(T)$.

\item \textbf{($\mathcal{A} (T)$, the deterministic algorithms)} 
A \emph{deterministic algorithm} $\alpha$ on $T$ is a Boolean decision tree finding the Boolean function described above. In particular, we only consider algorithms with \emph{alpha--beta pruning procedure}, which is described as follows. For any AND node [resp. OR node] $v$ and its children $v_1\ldots,v_n$, if it evaluates any $v_i$ with the value 0 [resp. 1], the algorithm immediately determines the value of $v$ as 0 [resp. 1] and omits the rest of the evaluation of $v$'s descendants. We let $\mathcal{A} (T)$ denote the set of all deterministic algorithms on $T$. 

\item \textbf{($\mathcal{A}_{\mathrm{DF}} (T)$, the depth-first algorithms)} A deterministic algorithm is said to be \emph{depth-first} if it satisfies the following. Whenever it starts to evaluate an internal node $v$, it does not query the leaves that are not descendants of $v$, until the value of $v$ is determined. We let $\mathcal{A}_{\mathrm{DF}} (T)$ denote the set of all deterministic depth-first algorithms on $T$. 

\item \textbf{($\mathcal{A}_{\mathrm{dir}} (T)$, the directional algorithms)} A deterministic depth-first algorithm is said to be \emph{directional} if the algorithm has a fixed order of leaf queries, and does not skip the queries unless in the case where it omits by alpha--beta pruning procedure explained above.
Note that in this paper, a directional algorithm is always depth-first. 
We let $\mathcal{A}_{\mathrm{dir}} (T)$ denote the set of all deterministic directional algorithms on $T$. 
Thus we have $\mathcal{A}_{\mathrm{dir}} (T) \subseteq \mathcal{A}_{\mathrm{DF}} (T) \subseteq \mathcal{A} (T)$. 
The two subset symbols are proper. The former will be seen later in Example~\ref{exam:df_not_dir}. 

\item \textbf{($\mathcal{D}(M)$, the distributions on a set $M$)} For a finite set $M$, let $\mathcal{D}(M)$ denote the set of probability distributions on $M$, that is,
\begin{equation}
\mathcal{D}(M) = \left\{ \delta : M \rightarrow [0,1] \ \left| \ \sum_{m\in M} \delta(m) = 1 \right. \right\}.
\end{equation}

\item \textbf{(Randomized algorithm)} A \emph{randomized algorithm} on $T$ is an element of $\mathcal{D}(\mathcal{A}(T))$, that is, a probability distribution on $\mathcal{A} (T)$.
For a randomized algorithm $A$ on $T$ and each $\alpha \in \mathcal{A} (T)$, $A(\alpha) $ denotes the probability of $\alpha$. 

\item \textbf{(Randomized depth-first algorithm,)} A \emph{randomized depth-first algorithms} is an element of $\mathcal{D}(\mathcal{A}_{\mathrm{DF}}(T))$, that is, a probability distribution on $\mathcal{A}_{\mathrm{DF}}(T)$. 

\item \textbf{($\mathfrak{S}_n$)} We let $\mathfrak{S}_n$ denote the $n$th symmetric group, that is, the set of all permutations on $\{ 1,\dots,n \}$. 

\item \textbf{(RDA)} A \emph{randomized directional algorithm in the sense of Saks and Wigderson}  (RDA, for short) is recursively defined. A randomized algorithm $A$ is an \emph{RDA}  if there exist a sequence of RDA $\langle A_{1}, \dots, A_{n} \rangle$ on the subtrees $T_{1}, \dots, T_{n}$ just under the root and a probability distribution $\pi$ on $n$th symmetric group $\mathfrak{S}_n$ with the following property. 
For each permutation $\sigma$ on $\{ 1,\dots,n \}$, with probability $\pi (\sigma)$, $A$ performs $A_{\sigma (1)}, \dots, A_{\sigma (n)} $ in this order.
\end{enumerate}
\end{defi}

\begin{rema} 
For the above-mentioned tree $T$, the set of all \emph{alpha--beta pruning algorithms} in the sense of Knuth and Moore (1975) \cite{KM75} equals $\mathcal{A}_{\mathrm{DF}} (T)$. 
\end{rema}

Inclusion relationships between  algorithm classes are as follows:

\begin{equation} \label{eq:inclusionrelation}
(\text{randomized algorithms}) \supseteq (\text{randomized depth-first algorithms}) 
\supseteq  (\text{RDA}) 
\end{equation}

For a fixed tree $T$, by restricting the type of randomized algorithms in \eqref{eq:defofr} to randomized depth-first algorithms (RDAs, repectively), we define $R_{\mathrm{DF}}(T)$ ($d(T)$, respectively). 
Precise definitions of $R_{\mathrm{DF}}(T)$ and $d(T)$ are given in Definition~\ref{defi:equi}.
We often consider dual complexity $P_{\mathrm{DF}} (T)$ by restricting the algorithm type in \eqref{eq:deofop} to depth-first ones. Note that the set of all randomized depth-first algorithms is convex in the following sense. For any randomized depth-first algorithms $A$ and $B$ and a nonnegative real number $p \leq 1$, the algorithm $p A + (1-p) B$ (the algorithm that works as $A$ with probability $p$ and as $B$ with probability $1-p$) is also a randomized depth-first algorithm. 
Using the properties of convex sets, we obtain an important property that the randomized complexity and its dual are equal. 

\smallskip

Yao's principle \cite{Ya77}: 
For any AND--OR tree $T$, we have $R(T)=P(T)$, and $R_{\mathrm{DF}}(T)=P_{\mathrm{DF}}(T)$  
(We will review the proof for this in Proposition~\ref{prop:vonneumanneq}). 

\smallskip

While RDA has the advantage of being easy to use via induction on tree height, it also has the drawback of not being convex. Thus, Yao's principle is not applicable for RDAs. 
We can define a minimum convex set of randomized algorithms containing all RDAs, which lies between the randomized depth-first algorithms and RDAs in the inclusion relationships \eqref{eq:inclusionrelation}. 
We let ``a randomized directional algorithm in the broad sense'' to denote an element of this convex set. 

\begin{defi} \label{defi:symbols2}
\textbf{(Randomized directional algorithm in the broad sense)} 
A \emph{randomized directional algorithm in the broad sense} is an element of $\mathcal{D}(\mathcal{A}_{\mathrm{dir}}(T))$, that is, a probability distribution on $\mathcal{A}_{\mathrm{dir}} (T)$. 
\end{defi}

$R_{\mathrm{dir}} (T)$ and $P_{\mathrm{dir}} (T)$ are defined in the same way as above for randomized directional algorithms in the broad sense, and the Yao's principle holds in this case: $R_{\mathrm{dir}} (T) = P_{\mathrm{dir}} (T)$. 

Previous studies have reported various results regarding algorithms of the above-mentioned types. 
A \emph{balanced tree} is a tree such that the internal nodes of the same depth (distance from the root) have the same number of child nodes, and all leaves have the same depth.  
In \cite{LT07,SN12,P16}, it was shown that for every complete binary tree (or more generally, a balanced tree), there exists a unique randomized assignment that attains $P(T)$ (the same applies to $P_{\mathrm{DF}}(T)$), but infinitely many assignments attains $P_{\mathrm{dir}}(T)$. \cite{SN15}, \cite{P17}, and \cite{P22} investigated the case in which randomized assignments are restricted to ones with independent probabilities for each leaves, and showed that directional algorithms are sufficient to achieve the minimum cost for such assignments. 

\cite{SW86} and \cite{KSS22} revealed some important facts related to our result; when we consider a tree with certain mild condition applied to its symmetry (which we call the \emph{weak-balance condition}), $R=d$ holds.

\begin{defi} \cite{KSS22} (implicitly in \cite{SW86}) \label{defi:weakly-balanced}
Let $T$ be an AND--OR tree. For each node $v$ of $T$, let $T_v$ be the subtree of $T$ where the root node is $v$, and let $R_i(T_v)$ be the randomized complexity of $T_v$ with respect to the assignments setting the value of $v$ to $i$. $T$ is said to be \emph{weakly balanced} if the following holds: For each internal node $v$ of $T$ and its children $v_1,\ldots,v_n$, 
if $v$ is an AND (OR, respectively) node, then $R_0(T_{v_j}) \leq R_0(T_{v_k}) + R_1(T_{v_j})$ ($R_1(T_{v_j}) \leq R_1(T_{v_k}) + R_0(T_{v_j})$, respectively) for each $j,k\in\{1,\ldots,n\}$.
\end{defi}

It is known that every complete binary tree and balanced tree is weakly balanced. 
The following fact is significant. A more concise proof is provided in \cite{It24}. 

\begin{theo} \cite[Theorem 2.1]{KSS22} (see also \cite[Lemma 5.1]{SW86}) \label{theo:weaklybalanced}
Suppose an AND--OR tree $T$ is weakly balanced. Then $R(T)=d(T)$.
\end{theo}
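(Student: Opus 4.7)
The inequality $R(T) \leq d(T)$ is immediate from the inclusion \eqref{eq:inclusionrelation}, since shrinking the class of algorithms in the inner minimization can only increase the value. The content of the theorem is the converse $d(T) \leq R(T)$, which I would attack by induction on the height of $T$, strengthening the statement to track both value-conditioned complexities at once: for every node $v$ of a weakly balanced tree, \emph{there exists a single RDA $A^*_v$ on $T_v$ with $\mathrm{cost}(A^*_v,\omega) \leq R_i(T_v)$ for each $i\in\{0,1\}$ and every $i$-input $\omega$.} Applied at the root, this yields $d(T) \leq \max(R_0(T),R_1(T)) \leq R(T)$, whence equality.

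The base case $v$ a leaf is trivial. For the inductive step let $v$ be an AND node (the OR case is dual) with children $v_1,\ldots,v_n$ and inductively given sub-RDAs $A^*_{v_j}$. I would build $A^*_v$ as a randomization over directional orderings: sample a permutation $\pi$ of $\{1,\ldots,n\}$ with probability $p_\pi$ from a distribution to be chosen, then process the children in the order $v_{\pi(1)},v_{\pi(2)},\ldots$ using the respective $A^*_{v_{\pi(k)}}$ and halting on the first $0$-reply (alpha--beta pruning). On any 1-input all children must be evaluated, so the cost is independent of $\pi$ and equals $\sum_j R_1(T_{v_j})$, matching $R_1(T_v)$; the work is to choose $(p_\pi)$ so that the expected cost on every 0-input is at most $R_0(T_v)$.

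A 0-input is characterised by a nonempty set $S \subseteq \{1,\ldots,n\}$ of 0-valued children, with expected cost
\[
\sum_\pi p_\pi \left(\sum_{k < m_\pi(S)} R_1(T_{v_{\pi(k)}}) + R_0(T_{v_{\pi(m_\pi(S))}})\right),
\]
where $m_\pi(S)$ is the first position in $\pi$ occupied by an element of $S$. The main obstacle is that a single $(p_\pi)$ must satisfy this bound for every nonempty $S$. This is where weak balance plays its decisive role. First, a termwise comparison using $R_0(T_{v_j}) \leq R_0(T_{v_k}) + R_1(T_{v_j})$ shows that enlarging $S$ never increases the cost, so the adversary's worst 0-input is some singleton $S=\{j\}$; this collapses the exponential family of constraints to $n$ linear constraints in the pairwise ordering marginals $\Pr_\pi[v_k \text{ before } v_j]$. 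Second, I would argue that these singleton constraints are simultaneously feasible at value $R_0(T_v)$ by pairing the construction with an optimal distributional assignment witnessing $R(T_v)=P(T_v)$ via the Yao principle (Proposition~\ref{prop:vonneumanneq}): the matching distributional lower bound forces every singleton value to lie at or below $R_0(T_v)$. Extracting a distribution on permutations that realizes the required marginals and closing the induction then completes the proof.
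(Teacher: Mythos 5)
Your overall strategy — strengthen the induction to produce a single RDA that simultaneously bounds the conditional complexities $R_0$ and $R_1$, reduce the family of $0$-inputs to singletons using weak balance, then argue feasibility — is structurally sound, and the singleton reduction is correct: writing $S' = S \cup \{j'\}$ and comparing the cost term for a fixed $\pi$, the difference is $R_1(T_{v_{\pi(m')}}) + \sum_{m' < k < m} R_1(T_{v_{\pi(k)}}) + R_0(T_{v_{\pi(m)}}) - R_0(T_{v_{\pi(m')}}) \geq 0$ by weak balance, so enlarging $S$ never increases cost and the adversary's best $0$-input is a singleton. The paper itself does not prove Theorem~\ref{theo:weaklybalanced}; it cites \cite{KSS22} and \cite{SW86}. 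The original Saks--Wigderson proof proceeds by a direct adversary lower bound $R_i(T_v) \geq d_i(T_v)$ by induction, using weak balance in the adversary's strategy; your route via minimax/Yao on the singleton game is a legitimate alternative framing of the same obstacle.

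The genuine gap is the last paragraph. Asserting that ``the matching distributional lower bound forces every singleton value to lie at or below $R_0(T_v)$'' is not an argument. Unwinding the minimax, what you need is: for every distribution $q$ over singletons there exists a permutation $\pi$ with $\sum_j q_j\,\mathrm{cost}(\pi,\{j\}) \leq R_0(T_v)$. The natural move — build $S_q \in \mathcal{D}(\Omega_0(T_v))$ by picking the $0$-child according to $q$, assigning a hardest $0$-input to it and hardest $1$-inputs to the others, and then invoking $R_0(T_v)=P_0(T_v)\geq \min_\alpha \mathrm{cost}(\alpha,S_q)$ — gives a lower bound by an \emph{arbitrary} deterministic algorithm, not a directional one. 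To close the gap you must show that an optimal deterministic algorithm against $S_q$ can be taken directional (and moreover uses the optimal sub-RDAs), so that $\min_\alpha \mathrm{cost}(\alpha,S_q) = \min_\pi \sum_j q_j\,\mathrm{cost}(\pi,\{j\})$. This is precisely the nontrivial content: $S_q$ is not a product distribution (the roots of the subtrees are correlated through the choice of singleton), so directional optimality is not automatic, and this is where weak balance must be used a second time. As it stands, that step is a placeholder, not a proof. Separately, the cost bound on $1$-inputs also silently uses $\sum_j R_1(T_{v_j}) \leq R_1(T_v)$, which is true but requires its own (short) adversary argument; it is not a bookkeeping identity.
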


On the other hand, there exists an example in which the tree is utterly unbalanced, and thus, this equality does not hold.

\begin{exam} \cite{Ve98} (see also \cite{Am11}) \label{exam:vereshchagin}
There exists an AND--OR tree $V$ where
\[
R(V) \leq 51, \ d(V) = \frac{33525}{640} = 52.38\ldots
\]
\end{exam}

\subsection{Our goal: Collapse and separation} 

Our main motivations for focusing on the directional algorithms is precisely because Example~\ref{exam:vereshchagin} holds. 
Considering Example~\ref{exam:vereshchagin}, it is natural to ask where does the gap lies between $R$ and $d$. 

Incidentally, in Saks-Wigderson \cite{SW86} and in Vereshchagin \cite{Ve98}, 
there appears to be an implicit consensus that instead of considering $R_\mathrm{DF}$, it is sufficient to consider $d$. This would be based on an intuition that it is not surprising for $R_\mathrm{DF}$ and $d$ to be equal. 

\begin{obse} \label{obse:intuitive}
Suppose that a randomized depth-first algorithm probes subtrees $T_{1}$ and $T_{2}$ under a node $v$ of a  tree. The intuitive observation would be that, since probing $T_{1}$ and probing $T_{2}$ are ``independent'', when performing depth-first-probing $T_{1}$, there is no need to consider $T_{2}$ in choosing an ``optimal'' algorithm to probe $T_{1}$, nor is it necessary to consider the history prior to starting the probe of $T_{1}$. 
\end{obse}

It must be noted that Observation~\ref{obse:intuitive} is not true in the case where we consider arbitrary depth-first algorithms. The intuition is false because the ``independence'' fails in the following two ways. 

Let $A$ be a randomized depth-first algorithm. Firstly, when we consider arbitrary randomized depth-first algorithms, probabilistic independence between the sub-algorithms of $A$ does not hold by its definition. 

Secondly, the sub-algorithms may not be independent in some cases, even if $A$ is deterministic. Suppose that $A$ is deterministic, and it evaluates $T_1$ first and then $T_2$. As we will see in Example~\ref{exam:df_not_dir}, It is possible that the query order in $T_2$ changes depending on the query result of $T_1$. Thus, the movement of a depth-first algorithm in subtrees is not always independent in the sense of query order, either. 

Furthermore, there is a more important point that must be noted: The hierarchy of equilibrium values collapses when we make mild assumptions about the symmetry of the tree. 

It is immediate that $R(T) \leq R_{\mathrm{DF}}(T) \leq R_{\mathrm{dir}}(T)\leq d(T)$. Under a certain hypothesis on $T$ (weakly balanced in the sense of Definition~\ref{defi:weakly-balanced}), it is shown that $R(T) = d(T)$ (Theorem~\ref{theo:weaklybalanced}). 

Various types of trees, including binary trees, are weakly-balanced. So we must ask: In Observation~\ref{obse:intuitive}, have we not unconsciously made assumptions about the shape of the tree?

So we paraphrase the previous question, where does the gap lies between $R$ and $d$, in a more particular form. What assumptions about the symmetry of the tree are sufficient conditions for deriving $R_\mathrm{DF}=d$?
Our conclusion is that no assumptions need be made. 

We show a collapse on equilibria 
\begin{equation} \label{eq:rdftisdt}
R_{\mathrm{DF}}(T)=d(T)
\end{equation}
\noindent
for all AND--OR tree $T$, regardless of the assumption of weakly balanced. 
The randomized complexity with respect to the depth-first algorithms equals that with respect to the directional algorithms (RDAs). Here, no shape constraints are imposed on the tree $T$. Any node of $T$ can have any number (${\geq}2$) of child nodes, and there can be a mixture of AND and OR nodes among the child nodes. \eqref{eq:rdftisdt} is a strict equality without Landau symbol $O$.  
Thus, we obtain a separation result on the equilibria 
\begin{equation}
R(T) < R_{\mathrm{DF}}(T)
\end{equation}
\noindent
for a certain $T$: The gap lies between the general algorithms and the depth-first algorithms. In other words, our result implies that there exists a case where a depth-first algorithm cannot attain the randomized complexity $R$.
This study thus provides a deeper understanding of the depth-first algorithm for AND--OR trees.

\subsection{Technical obstacles}

The ideal scenario for proof is as follows.

\begin{enumerate}
\item[(i)] Consider an algorithm $A$ that achieves $R_\mathrm{DF}(T)$. 
\item[(ii)] By adopting a certain induction hypothesis, it is shown that there exists a good algorithm $B_{i}$ for each subtree $T_{i}$ just under the root of $T$. 
\item[(iii)] Replace the behavior of $A$ in subtree $T_{i}$ with algorithm $B_{i}$ for each $i$. Then, let $A^{\prime}$ be the resulting algorithm. 
\item[(iv)] We want to show that $A^{\prime}$ is an RDA, and that the cost of $A^{\prime}$ is lower than or equal to the cost of $A$. 
\item[(v)] We prove that $R_\mathrm{DF}(T) = d(T)$ using Yao's principle if necessary. 
\end{enumerate}

However, when it is time to implement this scenario in a proof, the following difficulties arise. 

\begin{description}  
\item[(Obstacle 1)]
As mentioned earlier, we cannot apply the Yao's principle to RDAs. 
\item[(Obstacle 2)]
When $A$ in (i) is not an RDA, the meaning of the operations described in (iii) is unclear. 
\end{description}

\subsection{Ways to overcome difficulties}

\textbf{Big-picture strategy:} ~ 
To overcome Obstacle 1, we divide the problem of showing $R_{\mathrm{DF}}(T)=d(T)$ into two problems.
\begin{itemize}
\item Problem 1: $R_{\mathrm{dir}}(T) =d(T)$,
\item Problem 2: $R_{\mathrm{DF}}(T)=R_{\mathrm{dir}}(T)$. 
\end{itemize}
The set of all randomized directional algorithms in the broad sense equals the convex hull of the set of all RDAs. 
We can apply Yao's principle to randomized directional algorithms in the broad sense. 

\textbf{Problem 1:} ~  
In Section~\ref{sec:problem1}, we solve Problem 1. To overcome Obstacle 2 in Problem 1, we provide precise definition on replacement of sub-algorithm. 
Suppose that $T$ is an AND-OR tree and that $T_{1}, \dots, T_{n}$ are subtrees just under the root. 
Suppose that $\alpha$ is a directional algorithm of $T$, $k \in \{1,\ldots,n\}$, and $Y_{(k)}$ is an RDA of $T_k$.  
In Definition~\ref{defi:rep_subalgo01}, we give a precise definition of  a new algorithm $\alpha [ Y_{(k)} / T_k ]$ by replacing the behavior on $T_{k}$ of $\alpha$ with $Y_{(k)}$. 
In Theorem~\ref{theo:rdiri_is_di} (2), we show the following.  

Let $X$ be a randomized directional algorithm on $T$ in the broad sense. Then there exists an RDA $Z$ on $T$ with the following properties.
\begin{align*}
\max_{\xi \in \Omega(T)} \mathrm{cost}(Z, \xi) \leq \max_{\xi \in \Omega(T)} \mathrm{cost}(X, \xi) 
\\
\max_{\xi \in \Omega_0(T)} \mathrm{cost}(Z, \xi) \leq \max_{\xi \in \Omega_0(T)} \mathrm{cost}(X, \xi) 
\\
\max_{\xi \in \Omega_1(T)} \mathrm{cost}(Z, \xi) \leq \max_{\xi \in \Omega_1(T)} \mathrm{cost}(X, \xi) 
\end{align*}

Theorem~\ref{theo:rdiri_is_di} (2) is shown by means of the following three. 
\begin{itemize}
\item \cite{SW86} A known result that there exists an RDA that attains both $d_{0}$ and $d_{1}$. 
\item A precise evaluation of cost of $\alpha [ Y_{(k)} / T_k ]$. 
\item Induction on the height of the tree.
\end{itemize}

The item 1 is stated in \cite[Lemma 3.1]{SW86} without proof. To keep the paper self-contained, we give a proof the item 1.  
By Theorem~\ref{theo:rdiri_is_di} (2), we show $R_{\mathrm{dir},i}(T) = d_{i}(T)$. In this way, Problem 1 will be  solved in Section~\ref{sec:problem1}.


\textbf{Problem 2:} ~  
Problem 2 is more difficult to solve. In Section~\ref{sec:problem2}, to solve Problem 2, we investigate the case when $A$ in (i) is a deterministic depth-first algorithm $\alpha$. Instead of $A^{\prime}$ in (iii), we elaborately define a randomized directional algorithm in the broad sense $B_{\alpha,\delta}$, which is a key concept in this study. Figure~\ref{fig:dependency00} illustrates interdependence among concepts necessary to define $B_{\alpha,\delta}$. 

\begin{figure}[h] 
  \centering
  \includegraphics[width=110mm,bb=0 0 1422 663]{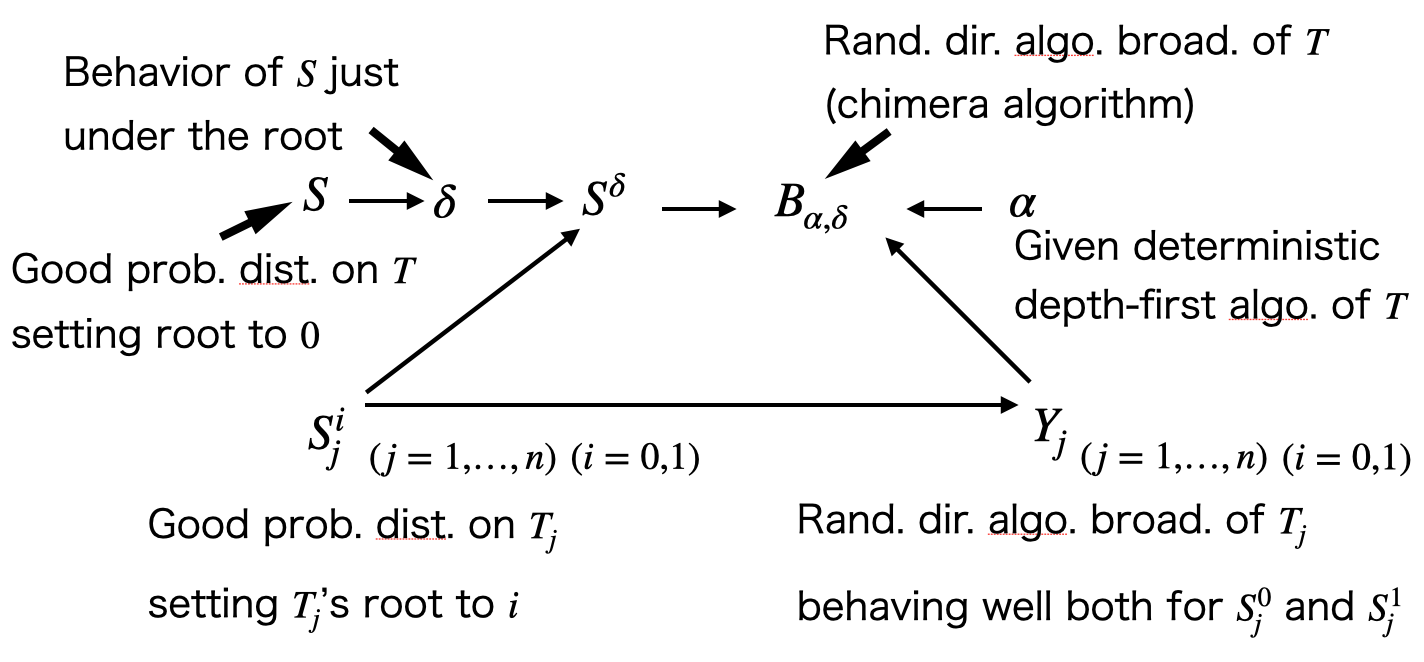}
  \caption{Interdependence of definitions, simplified version}
  \label{fig:dependency00}
\end{figure}

\begin{itemize}
\item $\alpha$ is a given deterministic depth-first algorithm of $T$.
\item $S$ is a good probability distribution on $T$ setting value of root $0$. 
\item $\delta$ is the behavior of $S$ just under the root. Thus, $\delta$ is a probability distribution on the truth assignments to nodes just under the root. 
\item For each $j = 1,\dots,n$ and $i=0,1$, $S^{i}_{j}$ is a good probability distribution that sets value of the root of $T_{j}$ to $i$. 
\item $S^{\delta}$ is a probability distribution on $T$ made by combining $\delta$ and $S^{i}_{j}$. 
At the depth-1 level of $T$, $S^{\delta}$ behaves $S$-like. 
On each $T_{j}$, $S^{\delta}$ behaves $S^{i}_{j}$-like. 
\item $Y_{j}$ is a randomized directional algorithm in the broad sense of $T_{j}$ such that $Y_{j}$ behaves well both for $S^{0}_{j}$ and $S^{1}_{j}$. 
\item $B_{\alpha,\delta}$ is a  randomized directional algorithm in the broad sense of $T$. 
\begin{itemize}
\item At the depth-1 level of $T$, $B_{\alpha,\delta}$ behaves similarly to the movement of $\alpha$ with respect to $S^{\delta}$.
\item On each $T_{j}$, $B_{\alpha,\delta}$ behaves $Y_{j}$-like. 
\end{itemize}
\end{itemize}

The nickname ``chimera'' for algorithm $B_{\alpha,\delta}$ arises from the fact that $B_{\alpha,\delta}$ is composed of parts from several other algorithms and probability distributions as well as $\alpha$. 
We show Inequality \eqref{eq:lboundalphasdelta2}:
\[
\mathrm{cost}(B_{\alpha,\delta},S^{\delta}) \leq \mathrm{cost}(\alpha, S^{\delta}) 
\]
This inequality is a key to prove $R_{\mathrm{DF},i}(T)=R_{\mathrm{dir},i}(T)$. 
In this way, Problem 2 will be  solved in Section~\ref{sec:problem2}.
Figure~\ref{fig:dependency00} is a simplified version of Figure~\ref{fig:dependency01} in subsection~\ref{subsec:4outline}. 

In Section~\ref{sec:def}, we give the basic definitions and list some fundamental facts regarding equilibrium values. 
In Section~\ref{sec:remark}, we summarize our results and describe some immediate consequences along with those reported in previous studies. 
In Section~\ref{sec:appendix}, we provide rigorous proofs of some facts used in other sections.
Looking back at our proof, we can say that Observation 1.8 oversimplifies the facts. 
It would be an interesting direction of future research to extend our proof to computational models other than AND-OR trees, in particular trees composed of threshold functions or majority functions \cite{CP22}.

\section{Definitions and Preliminaries}
\label{sec:def}  

In this section, we will define some basic concepts and see some fundamental facts on AND--OR trees. See also \cite[Chapter 12]{AB09} and \cite{Su17a} for helpful examples.

Saks and Wigderson \cite{SW86} defined RDA in order to get a recurrence formula of randomized complexity. 
In general, a probability distribution on the deterministic directional algorithms is not necessarily an RDA .

\begin{exam} \label{exam:df_not_dir}
 \textbf{(A depth-first algorithm that is not directional)}
We observe a depth-first algorithm that is not directional. 
Let $T$ be a complete binary tree of height 2 such that the root is labeled with AND, and each of child nodes $v_{1},v_{2}$ of the root is labeled with OR. Let $x_{1}, x_{2}$ be the leaves that are child nodes of $v_{1}$, 
and $x_{3}, x_{4}$ be the leaves that are child nodes of $v_{2}$. 
Let $\alpha \in \mathcal{A}_{\mathrm{DF}} (T)$ be the algorithm whose priority of probing leaves is as follows. 
\begin{itemize}
\item $\alpha$ begins with probing $x_{1}$. 
\item If $x_{1}$ has value $1$ then $\alpha$ probes $x_{3}$. If $x_{3}$ has value $1$ then $\alpha$ returns $1$ and halts. Otherwise, $\alpha$ returns value of $x_{4}$ and halts. 
\item If $x_{1}$ has value $0$ then $\alpha$ probes $x_{2}$. If $x_{2}$ has value $0$ then $\alpha$ returns $0$ and halts. Otherwise, $\alpha$ probes $x_{4}$. If $x_{4}$ has value $1$ then $\alpha$ returns $1$ and halts. Otherwise, $\alpha$ returns value of $x_{3}$ and halts.  
\end{itemize}

For truth assignment $\omega_{1}=(1,1,0,0)$ (that is, $\omega_{1}$ assigns $(1,1,0,0)$ to $(x_{1},x_{2},x_{3},x_{4})$), $\alpha$ probes $T$ in the order of $x_{1},x_{3},$ and $x_{4}$. 
For truth assignment $\omega_{2}=(0,1,0,0)$, $\alpha$ probes $T$ in the order of $x_{1},x_{2},x_{4}$ and $x_{3}$. 
Since priority of $x_{3}$ and $x_{4}$ is not fixed, $\alpha$ is not directional.  

It holds that $\alpha \in \mathcal{A}_\mathrm{DF} (T) \setminus \mathcal{A}_\mathrm{dir} (T)$. 
At the same time, we have 
$\alpha \in \mathcal{D}( \mathcal{A}_\mathrm{DF} (T) ) \setminus \mathcal{D}( \mathcal{A}_\mathrm{dir} (T)$ ). 
\end{exam}

\begin{exam} \label{exam:dir_not_rda} 
\textbf{(A randomized directional algorithm in the broad sense that is not an RDA)} 
We observe a randomized directional algorithm in the broad sense that is not an RDA. 
Let $T$ be as in Example~\ref{exam:df_not_dir}. 
Let $\alpha \in \mathcal{A}_{\mathrm{dir}} (T)$ be the algorithm whose priority of probing leaves is in the order of $x_{1},x_{2},x_{3},$ and $x_{4}$. 
Let $\beta \in \mathcal{A}_{\mathrm{dir}} (T)$ be the algorithm whose priority of probing leaves is in the order of $x_{2},x_{1},x_{4},$ and $x_{3}$. 
Let $A \in \mathcal{D}( \mathcal{A}_{\mathrm{dir}} (T) )$ be the randomized algorithm such that $A$ works as $\alpha$ with probability $1/2$ and as $\beta$ with probability $1/2$. 
Let ``$x_{i} \to x_{j}$'' denote the event that $x_{i}$ is probed before $x_{j}$.
Then we have the following, where $\mathrm{Pr}_{A}$ denotes probability with respect to $A$. 

\begin{align*}
& \mathrm{Pr}_{A} [ x_{3} \to x_{4} | x_{1} \to x_{2} \land v_{1} \text{ has value } 1 ] (= 1)
\\
\ne & \mathrm{Pr}_{A} [ x_{3} \to x_{4} | x_{2} \to x_{1} \land v_{1} \text{ has value } 1 ] (= 0)
\end{align*}

Therefore $A$ is not an RDA. 
\end{exam}

\begin{defi} \label{defi:equi}
Suppose that $T$ is an AND-OR tree in the sense of Convention~\ref{conv:shape}. 
\begin{enumerate}
\item Saks-Wigderson's $d$ is defined as follows \cite{SW86}.
\begin{equation}
d(T) := \min_{A \text{:RDA}} ~ \max_{\omega\in\Omega(T)} \mathrm{cost} (A, \omega)
\end{equation}
Here $A$ runs over the RDAs on $T$. $\mathrm{cost} (A, \omega)$ denotes expected value for a particular  $A$, in other words, 
 $\sum_{\alpha} A(\alpha) \mathrm{cost} (\alpha, \omega)$, where $\alpha$ runs over $\mathcal{A} (T)$ and $A(\alpha)$ is the probability of $A$ being $\alpha$. 
\item 
\begin{equation}
R_{\mathrm{dir}}(T) := \min_{A\in\mathcal{D}(\mathcal{A}_{\mathrm{dir}}(T))} ~ \max_{\omega\in\Omega(T)} \mathrm{cost} (A, \omega) 
\end{equation}
\item 
\begin{equation}
R_{\mathrm{DF}}(T) := \min_{A\in\mathcal{D}(\mathcal{A}_{\mathrm{DF}}(T))} ~ \max_{\omega\in\Omega(T)} \mathrm{cost} (A, \omega)
\end{equation}
\item For each $i \in \{ 0,1 \}$, we define $d_{i}$, $R_{\mathrm{dir},i}$ and $R_{\mathrm{DF},i}$ as follows. Note that the $\omega$ in each definition runs over the assignments which gives the root value $i$.
\begin{align}
d_{i}(T) := & \min_{A \text{:RDA}} ~ \max_{\omega\in\Omega_i(T)} \mathrm{cost} (A, \omega)
\\
R_{\mathrm{dir},i}(T) := & \min_{A\in\mathcal{D}(\mathcal{A}_{\mathrm{dir}}(T))} ~ \max_{\omega\in\Omega_i(T)} \mathrm{cost} (A, \omega)
\\
R_{\mathrm{DF},i}(T) := & \min_{A\in\mathcal{D}(\mathcal{A}_{\mathrm{DF}}(T))} ~ \max_{\omega\in\Omega_i(T)} \mathrm{cost} (A, \omega)
\end{align}
\item
\begin{equation}
P_{\mathrm{dir}}(T) := \max_{S \in \mathcal{D}(\Omega(T))} \min_{\alpha \in \mathcal{A}_{\mathrm{dir}}(T)} \mathrm{cost} (\alpha,S)
\end{equation}
\item
\begin{equation}
P_{\mathrm{DF}}(T) := \max_{S \in \mathcal{D}(\Omega(T))} \min_{\alpha \in \mathcal{A}_{\mathrm{DF}}(T)} \mathrm{cost} (\alpha,S)
\end{equation}
\item
For each $i \in \{ 0,1 \}$, we define $P_{\mathrm{dir},i}$ and $P_{\mathrm{DF},i}$ as follows.
\begin{align}
P_{\mathrm{dir},i}(T) := \max_{S \in \mathcal{D}(\Omega_i(T))} \min_{\alpha \in \mathcal{A}_{\mathrm{dir}}(T)} \mathrm{cost} (\alpha,S)
\\
P_{\mathrm{DF},i}(T) := \max_{S \in \mathcal{D}(\Omega_i(T))} \min_{\alpha \in \mathcal{A}_{\mathrm{DF}}(T)} \mathrm{cost} (\alpha,S)
\end{align}
\item
We say $S \in \mathcal{D}(\Omega(T))$ {\it achieves} $P(T)$ if we have
\begin{equation}
\min_{ \alpha \in \mathcal{A}(T) } \mathrm{cost} (\alpha, S) = P(T).
\end{equation}
We say $A\in\mathcal{D}(\mathcal{A}(T))$ achieves $R(T)$ if we have
\begin{equation}
\max_{\omega\in\Omega(T)} \mathrm{cost}(A,\omega) = R(T).
\end{equation}
Similarly, we say $S$ achieves $P_i(T), P_{\mathrm{dir}}(T)$ or $A$ achieves $R_i(T),R_{\mathrm{dir}}(T)$ and so on, when they satisfy the corresponding conditions. Whenever we say $S$ achieves $P_i(T)$ ($A$ achieves $R_{\mathrm{dir}}(T)$, respectively), we assume that $S \in \mathcal{D}(\Omega_{i} (T))$ ($A \in \mathcal{D} (\mathcal{A}_{\mathrm{dir}}(T)$), respectively).
\end{enumerate} 
\end{defi}

\begin{defi} \label{defi:vonneumanneq}
We define von Neumann-type equilibria as follows.
\begin{enumerate}
\item
\begin{align*} \label{eq:vonneumanneq01}
\overline{R}_{\mathrm{dir}} (T):
= 
&\min_{A \in \mathcal{D}(\mathcal{A}_\mathrm{dir}(T))} \max_{S \in \mathcal{D}(\Omega(T))} \mathrm{cost} (A,S)
\\
\overline{P}_{\mathrm{dir}} (T):
= &\max_{S \in \mathcal{D}(\Omega(T))} \min_{A \in \mathcal{D}(\mathcal{A}_\mathrm{dir}(T))} \mathrm{cost} (A,S)
\end{align*}
\item We define $\overline{R}_{\mathrm{dir},i} (T)$ and $\overline{P}_{\mathrm{dir},i} (T)$ in the same way as in Definition \ref{defi:equi} by putting a constraint on $S$ that the root of $T$ has value $i$. 
\item For depth-first algorithms, we define
$\overline{R}_{\mathrm{DF}} (T)$, $\overline{P}_{\mathrm{DF}} (T)$, 
$\overline{R}_{\mathrm{DF},i} (T)$, and $\overline{P}_{\mathrm{DF},i} (T)$
in the same way. 
\end{enumerate}
\end{defi}

\begin{prop} \label{prop:vonneumanneq}
\hspace{2em} 
\begin{enumerate}
\item $($von Neumann$)$

It holds that 
$\overline{R}_{\mathrm{dir}} (T) = \overline{P}_{\mathrm{dir}} (T)$, 
$\overline{R}_{\mathrm{dir},i} (T) = \overline{P}_{\mathrm{dir}.i} (T)$, 
$\overline{R}_{\mathrm{DF}} (T) = \overline{P}_{\mathrm{DF}} (T)$, 
and 
$\overline{R}_{\mathrm{DF},i} (T) = \overline{P}_{\mathrm{DF}.i} (T)$.  
\item
Suppose that $A \in \mathcal{D}(\mathcal{A} (T))$, and that $\Omega^{\prime}$ is either 
$\Omega(T) $ or $\Omega_{i}(T) $.
Then we have the following.
\begin{equation}
\max_{S \in \mathcal{D} (\Omega^{\prime}) } \mathrm{cost} (A,S)
=
\max_{\omega \in \Omega^{\prime} } \mathrm{cost} (A,\omega)
\end{equation}
\item
Suppose that $S \in \mathcal{D}(\Omega(T))$, and that $\mathcal{A}^{\prime}$ is one of 
$ \mathcal{A}(T) $, 
$ \mathcal{A}_\mathrm{DF} (T) $, or 
$\mathcal{A}_\mathrm{dir} (T) $. 
Then we have the following.
 
\begin{equation}
\min_{A \in \mathcal{D}(\mathcal{A}^{\prime})} \mathrm{cost} (A,S)
=
\min_{\alpha \in \mathcal{A}^{\prime}} \mathrm{cost} (\alpha,S)
\end{equation}
\item
It holds that $\overline{R}_{\mathrm{dir}} (T) = R_{\mathrm{dir}} (T)$. 
The same holds for $\overline{R}_{\mathrm{dir},i} (T)$, 
$\overline{R}_{\mathrm{DF}} (T)$, and 
$\overline{R}_{\mathrm{DF},i} (T)$.  
The same holds for $P$ in place of $R$. 
\item $($Yao's principle$)$ 
It holds that 
$R_{\mathrm{dir}} (T) = P_{\mathrm{dir}} (T)$, 
$R_{\mathrm{dir},i} (T) = P_{\mathrm{dir}.i} (T)$, 
$R_{\mathrm{DF}} (T) = P_{\mathrm{DF}} (T)$, 
and 
$R_{\mathrm{DF},i} (T) = P_{\mathrm{DF}.i} (T)$.  
\end{enumerate}
\end{prop}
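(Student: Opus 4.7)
The proposition is a package of standard minimax/bilinearity observations, so my plan is to reduce everything to two basic facts: von Neumann's minimax theorem applied to a bilinear payoff on products of simplices, and the elementary observation that a convex combination of numbers is bounded by the max (resp. bounded below by the min) of those numbers, with equality at a vertex. I would prove the parts in the order (2), (3), (1), (4), (5), which is the natural dependency order even though the proposition lists them differently.

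For part (2), I would fix $A\in\mathcal{D}(\mathcal{A}(T))$ and write
\begin{equation*}
\mathrm{cost}(A,S)=\sum_{\omega\in\Omega'} S(\omega)\,\mathrm{cost}(A,\omega),
\end{equation*}
which is a convex combination of the finitely many numbers $\mathrm{cost}(A,\omega)$. Hence $\max_{S\in\mathcal{D}(\Omega')}\mathrm{cost}(A,S)$ is attained at a Dirac mass on an $\omega$ that maximizes $\mathrm{cost}(A,\omega)$, giving the equality in (2) in both cases $\Omega'=\Omega(T)$ and $\Omega'=\Omega_i(T)$. Part (3) is proved symmetrically, fixing $S$ and writing $\mathrm{cost}(A,S)=\sum_\alpha A(\alpha)\mathrm{cost}(\alpha,S)$; the key point is that this argument works for any finite set $\mathcal{A}'$ of algorithms, in particular for $\mathcal{A}(T)$, $\mathcal{A}_{\mathrm{DF}}(T)$, and $\mathcal{A}_{\mathrm{dir}}(T)$, because the optimal Dirac mass lies in the simplex $\mathcal{D}(\mathcal{A}')$ whenever the pure algorithm lies in $\mathcal{A}'$.

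For part (1) I would invoke von Neumann's minimax theorem. All of $\mathcal{D}(\mathcal{A}_{\mathrm{dir}}(T))$, $\mathcal{D}(\mathcal{A}_{\mathrm{DF}}(T))$, $\mathcal{D}(\Omega(T))$, and $\mathcal{D}(\Omega_i(T))$ are probability simplices over finite sets, hence nonempty compact convex subsets of a finite-dimensional Euclidean space, and $(A,S)\mapsto\mathrm{cost}(A,S)=\sum_{\alpha,\omega}A(\alpha)S(\omega)\,\mathrm{cost}(\alpha,\omega)$ is bilinear, in particular continuous and concave-convex in the two arguments. The minimax theorem then yields all four equalities claimed in (1). (I would remark that the case $\Omega_i(T)=\emptyset$ is vacuous; otherwise the simplex is nonempty.)

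Parts (4) and (5) are then purely formal. For (4), combining (1) and (2) gives
\begin{equation*}
\overline{R}_{\mathrm{dir}}(T)=\min_A\max_S\mathrm{cost}(A,S)=\min_A\max_\omega\mathrm{cost}(A,\omega)=R_{\mathrm{dir}}(T),
\end{equation*}
and the same chain with $\Omega_i(T)$, $\mathcal{A}_{\mathrm{DF}}(T)$, etc., handles the remaining three cases. The $P$-versions follow in the same way by combining (1) and (3). Finally (5), Yao's principle, is obtained from (1) and (4): $R_{\mathrm{dir}}(T)=\overline{R}_{\mathrm{dir}}(T)=\overline{P}_{\mathrm{dir}}(T)=P_{\mathrm{dir}}(T)$, and analogously in the three other cases. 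I do not expect a real obstacle here; the only point to be careful about is that part (3) must be applied to the particular restricted algorithm class appearing in the definition of $\overline{P}_{\mathrm{dir}}$ or $\overline{P}_{\mathrm{DF}}$, not to $\mathcal{A}(T)$.
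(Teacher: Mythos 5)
Your proposal is correct and follows essentially the same route as the paper: part (1) via von Neumann's minimax theorem, parts (2) and (3) by the convex-combination/Dirac-mass observation, and parts (4)--(5) as formal corollaries. One tiny misattribution: in your chain for (4) you say you combine (1) and (2), but only (2) is actually used for the $\overline{R}=R$ equalities (and only (3) for $\overline{P}=P$); (1) enters only in (5), exactly as the paper records.
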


\begin{proof} 
(1) directly follows from von Neumann's minimax theorem. 

(2). The inequality $\geq$ is obvious. Let $S_{*}$ be an element of $\mathcal{D} (\Omega^{\prime}) $ such that 
\[
\mathrm{cost} (A,S_{*})
= \max_{S \in \mathcal{D} (\Omega^{\prime}) } \mathrm{cost} (A,S).
\]
Let $\omega_{*}$ be an element of $\Omega^{\prime} $ such that it maximizes 
$\mathrm{cost} (A,\omega)$ among all $\omega \in \Omega^{\prime} $ such that 
$S_{*}(\omega)>0$. Then $\mathrm{cost} (A,S_{*}) = \sum_{\omega \in  \Omega^{\prime}} S_{*}(\omega) \mathrm{cost} (A,\omega) \leq \mathrm{cost} (A,\omega_{*}) \leq \max_{\omega \in \Omega^{\prime} } \mathrm{cost} (A,\omega)$. Thus the inequality $\leq$ holds. 

(3) is shown in the same way as (2). 

(4) is shown by assertions (2) and (3).

(5) is Yao's principle. It is shown by assertions (1) and (4).
\end{proof}

\section{Randomized directional algorithms} \label{sec:problem1}

In this section, we are going to prove that $R_{\mathrm{dir},i}(T) = d_{i}(T)$. 
Similarly as in Section 2, throughout this section, let $T$ be an AND--OR tree whose root has $n$ child nodes $v_{1}, \dots, v_{n}$. Let $T_{1}, \dots, T_{n}$ be the subtrees whose roots are $v_{1}, \dots, v_{n}$ respectively. 
For each $\omega\in\Omega(T)$, $\alpha \in \mathcal{A}_{\mathrm{dir}}(T)$, and an RDA $X$, symbols $\omega_k$, $\alpha_{k}$, and $X_{k}$ denote the $T_{k}$-parts of $\omega$, $\alpha$, and $X$, respectively.

In the case when $X$ is an RDA, the meaning of operation to replace $X_{k}$ by an RDA $Y_{(k)}$ on $T_{k}$ is clear. Indeed, Lemma~\ref{lemm:sw86lem3_1} will be proved in this way. We want to investigate a similar operation in the case when $X$ is a randomized directional algorithm in the broad sense. In this case, we must be more careful than the case when $X$ is an RDA, because the $T_{k}$-part of $X$, given probabilistically, depends on the other parts of $X$. 
In subsection~\ref{subsection:rda1}, we investigate the operation of replacing a subalgorithm of a directional algorithm in the broad sense, and show equations and inequalities on computational cost. 
By means of these equations and inequalities, we show $R_{\mathrm{dir},i}(T) = d_{i}(T)$ in subsection~\ref{subsection:rdavsrdir}. 

\subsection{Replacement of sub-algorithms} \label{subsection:rda1}

The following result is written in \cite{SW86} without a proof, and we include it here. 

\begin{lemm} \emph{\cite[Lemma 3.1]{SW86}} \label{lemm:sw86lem3_1} 
There exists an RDA on $T$ that attains both $d_{0}(T)$ and $d_{1}(T)$.
\end{lemm}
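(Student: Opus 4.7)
The plan is to proceed by induction on the height of $T$. The base case (a single leaf) is trivial, since the unique algorithm attains $d_0(T)=d_1(T)=1$ simultaneously. For the inductive step, the hypothesis supplies, for each child subtree $T_k$, an RDA $A_k$ attaining both $d_0(T_k)$ and $d_1(T_k)$. I will treat the AND-root case (the OR case is dual after swapping the roles of $0$ and $1$) and look for an RDA $A=(\pi,A_1,\dots,A_n)$, with $\pi\in\mathcal{D}(\mathfrak{S}_n)$, that simultaneously attains $d_0(T)$ and $d_1(T)$.

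The $d_1$-half is essentially automatic: on any $\omega\in\Omega_1(T)$, every child evaluates to $1$, alpha--beta pruning never triggers, so $\mathrm{cost}(A,\omega)=\sum_k\mathrm{cost}(A_k,\omega_k)$ regardless of $\pi$, and maximizing over $\omega\in\Omega_1(T)$ decouples across subtrees to give $\sum_k d_1(T_k)$. The same decomposition applied to an arbitrary RDA shows $d_1(T)\geq\sum_k d_1(T_k)$, so $d_1(T)=\sum_k d_1(T_k)$ is attained by $A$ for any choice of $\pi$.

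For the $d_0$-half, parametrize $\omega\in\Omega_0(T)$ by its $0$-set $K(\omega):=\{k:v_k(\omega)=0\}\neq\emptyset$ together with its per-subtree components, and write $k^*(\omega,\sigma):=\min\{j:\sigma(j)\in K(\omega)\}$. For a generic RDA $A'=(\pi',B_1,\dots,B_n)$, one has $\mathrm{cost}(A',\omega)=\sum_\sigma\pi'(\sigma)\bigl[\sum_{j<k^*}\mathrm{cost}(B_{\sigma(j)},\omega_{\sigma(j)})+\mathrm{cost}(B_{\sigma(k^*)},\omega_{\sigma(k^*)})\bigr]$. The adversary may first fix a nonempty $K$ and then, independently for each $k$, select a worst-case $\omega_k\in\Omega_0(T_k)$ (if $k\in K$) or $\omega_k\in\Omega_1(T_k)$ (if $k\notin K$); combined with the bound $\max_{\omega_k\in\Omega_i(T_k)}\mathrm{cost}(B_k,\omega_k)\geq d_i(T_k)$, which holds by definition of $d_i(T_k)$, this yields the uniform lower bound $\max_{\omega\in\Omega_0(T)}\mathrm{cost}(A',\omega)\geq\max_{K\neq\emptyset}f_K(\pi')$, where $f_K(\pi):=\sum_\sigma\pi(\sigma)\bigl[\sum_{j<k^*(\sigma,K)}d_1(T_{\sigma(j)})+d_0(T_{\sigma(k^*)})\bigr]$. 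Taking the infimum over $A'$ gives $d_0(T)\geq\min_\pi\max_{K\neq\emptyset}f_K(\pi)$. Plugging in $B_k=A_k$ turns each of these inequalities into an equality, so choosing $\pi^*$ that minimizes $\max_Kf_K(\pi)$ makes $A^*:=(\pi^*,A_1,\dots,A_n)$ satisfy $\max_{\omega\in\Omega_0(T)}\mathrm{cost}(A^*,\omega)=d_0(T)$; since $A^*$ also attains $d_1(T)$ by the previous paragraph, the induction closes.

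The main obstacle is the lower-bound step in the $d_0$-half: one has to verify that for every choice of sub-RDAs $B_k$ the adversary can simultaneously realize the per-subtree worst cases, which rests on the fact that, once $K$ is fixed, the components $\omega_k$ of $\omega\in\Omega_0(T)$ may be chosen independently of each other. Once that decoupling is laid out, the universal inequality $\max_{\omega_k\in\Omega_i(T_k)}\mathrm{cost}(B_k,\omega_k)\geq d_i(T_k)$ together with its attainment by $B_k=A_k$ does the remaining work.
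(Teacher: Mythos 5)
Your proof is correct, but it takes a genuinely different route from the paper's. The paper's argument (Claims 1 and 2 in the Appendix) is an iterative replacement argument: it fixes an arbitrary RDA $X$ that achieves $d_0(T)$, swaps in the good sub-RDA $B_k$ for $X_k$ one subtree at a time, and shows each swap preserves achievement of $d_0(T)$ (Claim 2); once every $X_k$ has been replaced by $B_k$, achievement of $d_1(T)$ becomes automatic regardless of the permutation distribution (Claim 1). Your argument instead derives an explicit min-max characterization $d_0(T)=\min_{\pi}\max_{K\neq\emptyset}f_K(\pi)$ by decoupling the per-subtree worst cases after fixing the $0$-set $K$, and then directly builds the optimal RDA $(\pi^*,A_1,\dots,A_n)$ from a minimizer $\pi^*$ together with the induction-supplied sub-RDAs. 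Both hinge on the same decomposition of RDA cost across subtrees, but the paper's local perturbation argument avoids writing out the cost functional explicitly, while your global argument re-derives the Saks--Wigderson recurrence in passing and makes the role of the permutation distribution $\pi$ transparent. One stylistic caveat: when you claim the decoupling, you use crucially that, for an RDA, the coefficient with which $\mathrm{cost}(B_k,\omega_k)$ appears in $\mathrm{cost}(A',\omega)$ depends on $\omega$ only through $K(\omega)$; you do flag this, which is exactly the point where a reader would otherwise object.
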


\begin{proof} (sketch)
We show it by induction on the height of a tree. The height 0 case is trivial. 
In the induction step, we may assume the root is labeled with AND without loss of generality. 
For each subtree $T_{j}$, by means of induction hypothesis, we chose an RDA which achieves both $d_{0} (T_{j})$ and $d_{1} (T_{j})$. 
We chose a distribution on the permutations (of child nodes of the root) so that the RDA made of this distribution and the RDAs chosen above achieves $d_{0} (T)$. 
It is not difficult to verify that this RDA also achieves $d_{1}(T)$. 
For a more detailed proof, see Appendix. 
\end{proof}

Let $\alpha$ be a directional algorithm of $T$, $k \in \{1,\ldots,n\}$, and $Y_{(k)}$ an RDA of $T_k$. Intuitively, we want to define a new algorithm $\alpha [ Y_{(k)} / T_k ]$ by replacing the behavior on $T_{k}$ of $\alpha$ with $Y_{(k)}$. The following gives the precise definition. 

\begin{defi} \label{defi:rep_subalgo01} 
Suppose that $X$ belongs to $\mathcal{D}(\mathcal{A}_\mathrm{dir} (T))$, $k \in \{ 1,\dots,n \}$, and that $Y_{(k)}$ is an RDA on $T_{k}$. 
\begin{enumerate}
\item For a directional algorithm $\alpha \in \mathcal{A}_\mathrm{dir} (T)$ and a permutation $\sigma\in\mathfrak{S}_n$, we say $\alpha$ is \emph{order} $\sigma$ if the order of evaluation of $v_1,\ldots,v_n$ by $\alpha$ follows $\sigma$. We let $\sigma_{\alpha} \in \mathfrak{S}_{n}$ denote the permutation such that $\alpha$ is order $\sigma_{\alpha}$. 

\item We define a binary relation $\alpha \equiv \beta ~ ({\hat{k}}) $ for $\alpha, \beta \in \mathcal{A}_\mathrm{dir} (T)$ as follows. 
\begin{align} 
&\alpha \equiv \beta ~ ({\hat{k}}) 
\notag
\\
\iff &\text{$\sigma_{\alpha} = \sigma_{\beta}$, and $\alpha_{j} = \beta_{j}$ for each $j\in \{ 1,\dots, n\} \setminus \{ k \}$.} \label{eq:equivhat01}
\end{align}
Note that for a fixed $\alpha$, whether $\beta$ satisfies the property \eqref{eq:equivhat01} does not depend on $\beta_{k}$. 
\item
For $\alpha \in \mathcal{A}_\mathrm{dir} (T)$, we let symbol $\mathrm{Pr}_{X}[X \equiv \alpha ~ (\hat{k})] $ denote the probability of randomly chosen $\beta \in \mathcal{A}_\mathrm{dir} (T)$ satisfies $\alpha \equiv \beta ~ (\hat{k})$, 
where probability is measured by $X$ (we let $\mathrm{Pr}_{X}^\mathrm{dir}$ denote this probability measure on $\mathcal{A}_\mathrm{dir} (T)$, and omit the superscript $\mathrm{dir}$), that is,
\begin{equation}
\mathrm{Pr}_{X}[X \equiv \alpha ~ (\hat{k})] 
:= \sum_{\substack{\beta \in \mathcal{A}_\mathrm{dir} (T) \\ \alpha \equiv \beta ~ (\hat{k})}} X(\beta).
\end{equation}
\item 
Suppose $\alpha \in \mathcal{A}_\mathrm{dir} (T)$. 
We define $\alpha [Y_{(k)}/T_{k}] \in \mathcal{D}(\mathcal{A}_\mathrm{dir} (T))$ by defining its probability for each $\gamma \in \mathcal{A}_\mathrm{dir}(T)$ as follows. 
\begin{equation} \label{eq:alphayktk01}
\alpha [Y_{(k)}/T_{k}] (\gamma) = 
\begin{cases}
Y_{(k)} (\gamma_{k}) \quad & \text{if $\gamma \equiv \alpha ~ ({\hat{k}}) $}
\\
0 & \text{otherwise}
\end{cases}
\end{equation}
We can see that $\alpha [Y_{(k)}/T_{k}]$ is well-defined as a probabilistic distribution as follows. 
\begin{align}
\sum_{\gamma \in \mathcal{A}_\mathrm{dir} (T) } \alpha [Y_{(k)}/T_{k}] (\gamma)
=&
\sum_{\substack{\gamma \in \mathcal{A}_\mathrm{dir} (T) \\ \gamma \equiv \alpha (\hat{k})}} \alpha [Y_{(k)}/T_{k}] (\gamma)
\notag
\\
=&
\sum_{\gamma_{(k)} \in \mathcal{A}_\mathrm{dir} (T_{k}) } Y_{(k)} (\gamma_{(k)})
\notag
\\
=&1
\end{align}
\end{enumerate}
\end{defi}

\begin{defi} \label{defi:rep_subalgo02} 
Suppose $X \in \mathcal{D}(\mathcal{A}_\mathrm{dir} (T))$. 
We define $X [Y_{(k)}/T_{k}] \in \mathcal{D}(\mathcal{A}_\mathrm{dir} (T))$ by defining its probability for each $\gamma\in \mathcal{A}_\mathrm{dir}(T)$ as follows. 
\begin{equation} \label{eq:alphayktk02}
X [Y_{(k)}/T_{k}] (\gamma) = 
\sum_{\alpha \in \mathcal{A}_\mathrm{dir} (T)} X(\alpha) \cdot \alpha [Y_{(k)}/T_{k}] (\gamma)
\end{equation}
\end{defi}

In other words, we have the following. 
\begin{equation}  \label{eq:alphayktk03}
X [Y_{(k)}/T_{k}] (\gamma) 
=
\sum_{\substack{\alpha \in \mathcal{A}_\mathrm{dir} (T) \\ \alpha \equiv \gamma ~ (\hat{k})}}
 X(\alpha) 
Y_{(k)} (\gamma_{k})
 = \mathrm{Pr}_{X}[X \equiv \gamma ~ (\hat{k})] \cdot Y_{(k)} (\gamma_{k})
\end{equation}
%

\begin{lemm} \label{lemm:prob_perm} 
For each permutation $\sigma \in \mathfrak{S}_{n}$, we investigate the event ``$\alpha$ is order $\sigma$'' for $\alpha \in \mathcal{A}_\mathrm{dir} (T)$. 
Then probability of this event measured by $X$ is the same as that measured by $X [Y_{(k)}/T_{k}]$.  
\end{lemm}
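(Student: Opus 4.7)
The plan is to compute the probability of the event ``$\alpha$ has order $\sigma$'' under $X[Y_{(k)}/T_{k}]$ directly from the definition and then interchange the order of summation so that the inner sum collapses. The crucial observation is that the equivalence relation $\alpha \equiv \beta ~ (\hat{k})$ preserves order, i.e., $\sigma_{\alpha} = \sigma_{\beta}$ whenever $\alpha \equiv \beta ~ (\hat{k})$, which is built directly into \eqref{eq:equivhat01}. Therefore the event ``order is $\sigma$'' is a union of equivalence classes mod $(\hat{k})$, and the argument reduces to a pure bookkeeping identity.

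First, I would apply \eqref{eq:alphayktk03} to write
$$
\sum_{\gamma \,:\, \sigma_{\gamma} = \sigma} X[Y_{(k)}/T_{k}](\gamma) \;=\; \sum_{\gamma \,:\, \sigma_{\gamma} = \sigma} \; \sum_{\alpha \equiv \gamma ~ (\hat{k})} X(\alpha)\, Y_{(k)}(\gamma_{k}).
$$
Because $\alpha \equiv \gamma ~ (\hat{k})$ forces $\sigma_{\alpha} = \sigma_{\gamma} = \sigma$, every $\alpha$ that appears in the inner sum itself has order $\sigma$. Next, I would swap the two summations. For each fixed $\alpha$ with $\sigma_{\alpha} = \sigma$, the elements $\gamma \in \mathcal{A}_{\mathrm{dir}}(T)$ with $\gamma \equiv \alpha ~ (\hat{k})$ are in bijective correspondence with $\gamma_{k} \in \mathcal{A}_{\mathrm{dir}}(T_{k})$, since the remaining pieces $\gamma_{j} = \alpha_{j}$ for $j \ne k$ and $\sigma_{\gamma} = \sigma$ are forced by the definition of the relation. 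Hence the double sum becomes
$$
\sum_{\alpha \,:\, \sigma_{\alpha} = \sigma} X(\alpha) \sum_{\gamma_{k} \in \mathcal{A}_{\mathrm{dir}}(T_{k})} Y_{(k)}(\gamma_{k}) \;=\; \sum_{\alpha \,:\, \sigma_{\alpha} = \sigma} X(\alpha),
$$
where the inner sum equals $1$ because $Y_{(k)}$ is a probability distribution on $\mathcal{A}_{\mathrm{dir}}(T_{k})$. This is precisely the $X$-probability that the order is $\sigma$.

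There is no genuine difficulty: the lemma is essentially the statement that replacing the $T_{k}$-part leaves the marginal on $\mathfrak{S}_{n}$ untouched. The only care required is verifying that within each equivalence class mod $(\hat{k})$ the parametrization by the $T_{k}$-part is a bijection, but this is immediate from \eqref{eq:equivhat01}. I would keep the presentation brief, since the same computation will be reused when proving stronger cost-preservation results for the replacement operation in the subsequent subsection.
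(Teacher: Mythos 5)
Your proof is correct, and it is exactly the computation the paper has in mind: the authors' entire proof of Lemma~\ref{lemm:prob_perm} is the single sentence ``It is straightforward,'' so you have simply supplied the omitted details. The chain you use — expand $X[Y_{(k)}/T_{k}](\gamma)$ via \eqref{eq:alphayktk03}, note that $\equiv(\hat{k})$ preserves the order $\sigma$, swap the double sum, and collapse the inner sum over $\gamma_{k}$ using that $Y_{(k)}$ is a probability distribution — is the same bookkeeping pattern the paper itself spells out in the proofs of Lemma~\ref{lemm:rep_subalgoy} and of Theorem~\ref{theo:rdiri_is_di} (see \eqref{eq:rep_subalgo01h}–\eqref{eq:rep_subalgo01j} and \eqref{eq:rep_subalgo04}–\eqref{eq:rep_subalgo06}), so it matches the intended argument.
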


\begin{proof} 
It is straightforward. 
\end{proof} 

\begin{defi} \label{defi:rep_subalgo03} 
Suppose $\mathbf{t}=t_{1}\cdots t_{n} \in \{ 0,1 \}^{n}$. 
\begin{enumerate}
\item 
We call an assignment $\omega \in \Omega(T)$ is \emph{$\mathbf{t}$-type} if $v_{1}, \dots, v_{n}$ have values $t_{1}, \dots, t_{n}$ in the presence of $\omega$. 
\item 
For each $\sigma \in \mathfrak{S}_{n}$ and $k \in \{ 1,\dots,n \}$, we define an event $E^{k,\mathbf{t}} (\sigma)$ as follows. 

$E^{k, \mathbf{t}} (\sigma)$ : ~ 
There exists $\alpha \in \mathcal{A}_\mathrm{dir} (T)$ and $\omega \in \Omega(T)$ such that $\alpha$ is order $\sigma$, $\omega$ is $\mathbf{t}$-type, and $\alpha$ probes $T_{k}$ in the presence of $\omega$. 
\end{enumerate}
\end{defi}

$E^{k,\mathbf{t}} (\sigma)$ is equivalent to the following: 
For each $\alpha \in \mathcal{A}_\mathrm{dir} (T)$ and for each $\omega \in \Omega(T)$, if $\alpha$ is order $\sigma$ and $\omega$ is $\mathbf{t}$-type then it holds that $\alpha$ probes $T_{k}$ in the presence of $\omega$. 
Probability of $E^{k, \mathbf{t}}$ measured by $X$ is given as follows. 
\begin{equation} \label{eq:defofe1}
\mathrm{Pr}_{X}[E^{k, \mathbf{t}} ] 
= \sum_{\substack{\alpha \in \mathcal{A}_\mathrm{dir} (T) \\ E^{k, \mathbf{t}} (\sigma_{\alpha})}} X(\alpha)
\end{equation}

\begin{lemm}  \label{lemm:rep_subalgox} 
Let $X \in \mathcal{D}(\mathcal{A}_\mathrm{dir}(T))$ and $\omega$ an assignment of $\mathbf{t}$-type. 
Then we have the following. 
\begin{equation} \label{eq:rep_subalgo01a}
\mathrm{cost}(X,\omega) = \sum_{j=1}^{n} \sum_{\alpha_{(j)} \in \mathcal{A}_\mathrm{dir} (T_{j})} 
\left[
\left(
\sum_{\substack{\alpha \in  \mathcal{A}_\mathrm{dir} (T) :\\ \alpha_{j} = \alpha_{(j)} \\ \text{ and } E^{j, \mathbf{t}}(\sigma_{\alpha} )  }}
X(\alpha)
\right)
\times 
\mathrm{cost} (\alpha_{(j)}, \omega_{j})
\right]
\end{equation}
\end{lemm}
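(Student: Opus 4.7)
The plan is to begin with the linearity of cost,
\[
\mathrm{cost}(X,\omega) = \sum_{\alpha \in \mathcal{A}_{\mathrm{dir}}(T)} X(\alpha)\, \mathrm{cost}(\alpha,\omega),
\]
and to expand $\mathrm{cost}(\alpha,\omega)$ as a sum of per-subtree contributions. Because every directional algorithm is depth-first (Definition~\ref{defi:symbols}), once $\alpha$ enters a subtree $T_j$ it stays there until $v_j$ has been evaluated and never re-enters it; consequently the multiset of leaves probed by $\alpha$ on input $\omega$ decomposes across $j=1,\dots,n$ into the leaves that $\alpha_j$ probes on $\omega_j$ when $\alpha$ reaches $T_j$, and nothing otherwise. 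This gives the per-algorithm decomposition
\[
\mathrm{cost}(\alpha,\omega) = \sum_{j=1}^{n} \mathbf{1}\bigl[\alpha \text{ probes } T_j \text{ on } \omega\bigr] \cdot \mathrm{cost}(\alpha_j,\omega_j).
\]

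The crucial step is to identify this indicator with $\mathbf{1}[E^{j,\mathbf{t}}(\sigma_\alpha)]$. I would argue as follows: a directional $\alpha$ inspects the root's children in the fixed order $\sigma_\alpha$, and by alpha--beta pruning it reaches $v_j$ precisely when the values of the children $v_{\sigma_\alpha(1)}, \dots, v_{\sigma_\alpha(r)}$ preceding $v_j$ in $\sigma_\alpha$ have not yet forced the root's value. Since each $v_i$ evaluates to $t_i$ under any assignment of $\mathbf{t}$-type regardless of which sub-algorithm $\alpha_i$ is used, whether $\alpha$ probes $T_j$ depends only on $\sigma_\alpha$ and on $\mathbf{t}$; but that is exactly what Definition~\ref{defi:rep_subalgo03}(2) records as the event $E^{j,\mathbf{t}}(\sigma_\alpha)$.

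Putting the two observations together and interchanging the resulting double sum (pulling $j$ outside and grouping the inner $\alpha$'s by their $T_j$-components $\alpha_{(j)} := \alpha_j$), I obtain
\[
\mathrm{cost}(X,\omega) = \sum_{j=1}^{n} \sum_{\alpha_{(j)} \in \mathcal{A}_{\mathrm{dir}}(T_j)} \mathrm{cost}(\alpha_{(j)}, \omega_j) \Biggl( \sum_{\substack{\alpha \in \mathcal{A}_{\mathrm{dir}}(T):\\ \alpha_j = \alpha_{(j)},\; E^{j,\mathbf{t}}(\sigma_\alpha)}} X(\alpha) \Biggr),
\]
which is the claimed identity~\eqref{eq:rep_subalgo01a}. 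The only genuine obstacle is the identification carried out in the second paragraph: one must be careful that the probing event for $T_j$ really is independent of the internal behaviour of $\alpha_i$ for $i\ne j$, a fact that rests squarely on read-onceness (each $v_i$ has value $t_i$ no matter which $\alpha_i$ is used) together with alpha--beta pruning at the root. Everything else is bookkeeping.
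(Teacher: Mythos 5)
Your proof is correct and follows essentially the same route as the paper: expand by linearity, decompose the per-algorithm cost into per-subtree contributions weighted by the indicator of the probing event, identify that indicator with $[E^{j,\mathbf{t}}(\sigma_\alpha)]$, then interchange sums and group by the $T_j$-component. The paper simply writes down the decomposition with the indicator $[E^{j,\mathbf{t}}(\sigma_\alpha)]$ without further comment, so your second paragraph usefully spells out the justification (depth-first structure, alpha--beta pruning at the root, and the fact that the probing event for $T_j$ depends only on $\sigma_\alpha$ and $\mathbf{t}$) that the paper treats as immediate.
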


\begin{proof}
For each $j \in \{ 1,\dots,n \}$ and $\alpha \in \mathcal{A}_\mathrm{dir} (T)$, let symbol 
$[E^{j,\mathbf{t}}(\sigma_{\alpha})]$ denote the following. 
\begin{equation} \label{eq:rep_subalgo01b}
[E^{j,\mathbf{t}}(\sigma_{\alpha})] = 
\begin{cases}
1 & \text{ if $E^{j,\mathbf{t}}(\sigma_{\alpha})$ holds }
\\
0 & \text{ otherwise}
\end{cases}
\end{equation}
Then we have the following. 
\begin{align}
\mathrm{cost}(X,\omega) 
=&
\sum_{\alpha \in \mathcal{A}_\mathrm{dir} (T)} X(\alpha) \sum_{j=1}^{n} [E^{j,\mathbf{t}}(\sigma_{\alpha})] \mathrm{cost} (\alpha_{j}, \omega_{j})
\notag
\\
=&
\sum_{j=1}^{n} \sum_{\alpha \in \mathcal{A}_\mathrm{dir} (T)} X(\alpha) [E^{j,\mathbf{t}}(\sigma_{\alpha})] \mathrm{cost} (\alpha_{j}, \omega_{j})
\notag
\\
=&
\sum_{j=1}^{n} 
\sum_{\alpha_{(j)} \in \mathcal{A}_\mathrm{dir} (T_{j})} 
\sum_{\substack{\alpha \in \mathcal{A}_\mathrm{dir} (T) : \\ \alpha_{j} = \alpha_{(j)}}} 
X(\alpha) [E^{j,\mathbf{t}}(\sigma_{\alpha})] \mathrm{cost} (\alpha_{j}, \omega_{j})
\notag
\\
=&
\sum_{j=1}^{n} 
\sum_{\alpha_{(j)} \in \mathcal{A}_\mathrm{dir} (T_{j})} 
\sum_{\substack{\alpha \in \mathcal{A}_\mathrm{dir} (T) : \\ \alpha_{j} = \alpha_{(j)}}} 
X(\alpha) [E^{j,\mathbf{t}}(\sigma_{\alpha})] \mathrm{cost} (\alpha_{(j)}, \omega_{j})
\notag
\\
=&
\sum_{j=1}^{n} 
\sum_{\alpha_{(j)} \in \mathcal{A}_\mathrm{dir} (T_{j})} 
\left[
\left(
\sum_{\substack{\alpha \in \mathcal{A}_\mathrm{dir} (T) : \\ \alpha_{j} = \alpha_{(j)}}} 
X(\alpha) [E^{j,\mathbf{t}}(\sigma_{\alpha})] 
\right)
\times
\mathrm{cost} (\alpha_{(j)}, \omega_{j})
\right]
\label{eq:rep_subalgo01c}
\end{align}

The right-most formula of \eqref{eq:rep_subalgo01c} equals the right-hand side of \eqref{eq:rep_subalgo01a}. 
\end{proof}

\begin{lemm}  \label{lemm:rep_subalgoy} 
Let $X \in \mathcal{D}(\mathcal{A}_\mathrm{dir}(T))$ and $\omega$ an assignment of $\mathbf{t}$-type.
Suppose that $j,k$ are distinct members of $\{ 1,\dots, n \}$, $\alpha_{(j)} \in  \mathcal{A}_\mathrm{dir} (T_{j})$, $Y_{(k)}$ is an RDA on $T_{k}$, and that $Y=X[Y_{(k)}/T_{k}]$. 
Then we have the following. 
\begin{equation} \label{eq:rep_subalgo01d}
\sum_{\substack{\alpha \in  \mathcal{A}_\mathrm{dir} (T) :\\ \alpha_{j} = \alpha_{(j)} \\ \text{ and } E^{j, \mathbf{t}}(\sigma_{\alpha} )  }}
Y(\alpha)
=
\sum_{\substack{\alpha \in  \mathcal{A}_\mathrm{dir} (T) :\\ \alpha_{j} = \alpha_{(j)} \\ \text{ and } E^{j, \mathbf{t}}(\sigma_{\alpha} )  }}
X(\alpha)
\end{equation}
\end{lemm}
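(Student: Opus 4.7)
The plan is a bookkeeping argument that unwinds the definition of $Y = X[Y_{(k)}/T_{k}]$ and then swaps the order of summation, relying critically on the hypothesis $j \neq k$. First I will substitute \eqref{eq:alphayktk03} into the left-hand side of \eqref{eq:rep_subalgo01d} to obtain
\[
\sum_{\substack{\gamma \in \mathcal{A}_\mathrm{dir}(T) \\ \gamma_{j} = \alpha_{(j)},\, E^{j,\mathbf{t}}(\sigma_{\gamma})}} \; \sum_{\substack{\beta \in \mathcal{A}_\mathrm{dir}(T) \\ \beta \equiv \gamma \ (\hat{k})}} X(\beta)\, Y_{(k)}(\gamma_{k}).
\]

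The key observation is that the pair of constraints ``$\gamma_{j} = \alpha_{(j)}$'' and ``$E^{j,\mathbf{t}}(\sigma_{\gamma})$'' depends on $\gamma$ only through $\gamma_{j}$ and $\sigma_{\gamma}$. Since $j \neq k$, Definition~\ref{defi:rep_subalgo01}(2) tells us that both $\gamma_{j}$ and $\sigma_{\gamma}$ are invariants of the equivalence class modulo $(\hat{k})$; moreover, the event $E^{j,\mathbf{t}}(\sigma)$ is, by Definition~\ref{defi:rep_subalgo03}(2), a property of the permutation $\sigma$ alone. Consequently, whenever $\beta \equiv \gamma \ (\hat{k})$ we have $\beta_{j} = \gamma_{j}$ and $\sigma_{\beta} = \sigma_{\gamma}$, so the two constraints on $\gamma$ transfer verbatim to $\beta$.

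Next I will interchange the order of summation and iterate first over $\beta$ satisfying those same two constraints, and then over $\gamma$ with $\gamma \equiv \beta \ (\hat{k})$. For each fixed such $\beta$, the set $\{\gamma : \gamma \equiv \beta \ (\hat{k})\}$ is naturally parametrized by $\gamma_{k} \in \mathcal{A}_\mathrm{dir}(T_{k})$ (all other coordinates and the permutation $\sigma_{\beta}$ are pinned down), so the inner sum collapses to
\[
\sum_{\gamma_{k} \in \mathcal{A}_\mathrm{dir}(T_{k})} Y_{(k)}(\gamma_{k}) \;=\; 1
\]
because $Y_{(k)}$ is a probability distribution on $\mathcal{A}_\mathrm{dir}(T_{k})$. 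What remains is exactly the right-hand side of \eqref{eq:rep_subalgo01d}.

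There is no real obstacle here; the lemma is in essence the statement that the operation $X \mapsto X[Y_{(k)}/T_{k}]$ is designed to leave every marginal over coordinates $i \neq k$ and over the permutation $\sigma$ unchanged. The only point that must be articulated explicitly is that the event $E^{j,\mathbf{t}}(\sigma_{\gamma})$ really does depend on $\gamma$ through $\sigma_{\gamma}$ alone (and not, for example, through the algorithm's behaviour inside $T_{k}$), which is the content of Definition~\ref{defi:rep_subalgo03}(2) and is what licenses the marginal computation above.
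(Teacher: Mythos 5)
Your proposal is correct and follows essentially the same route as the paper's own proof: both substitute \eqref{eq:alphayktk03} into the left-hand side, recognize that the constraints $\gamma_j = \alpha_{(j)}$ and $E^{j,\mathbf{t}}(\sigma_\gamma)$ are invariants of the $\equiv(\hat{k})$ equivalence class (which is where $j\neq k$ is used), interchange the order of summation, and collapse the inner sum to $1$ because $Y_{(k)}$ is a probability distribution on $\mathcal{A}_\mathrm{dir}(T_k)$. The only cosmetic difference is that the paper reduces to $j=2$, $k=1$ without loss of generality, while you keep $j$ and $k$ general.
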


\begin{proof} 
It is sufficient to see the case where $j=2$ and $k=1$. Note that the bound variable $\alpha$ in the left-hand side of \eqref {eq:rep_subalgo01d} may be replaced by other letter, say $\gamma$. 
We have the following. 
\begin{align}
\sum_{\substack{\gamma \in  \mathcal{A}_\mathrm{dir} (T) :\\ \gamma_{2} = \alpha_{(2)} \\ \text{ and } E^{2, \mathbf{t}}(\sigma_{\gamma} )  }}
Y(\gamma)
=& 
\sum_{\substack{\gamma \in  \mathcal{A}_\mathrm{dir} (T) :\\ \gamma_{2} = \alpha_{(2)} \\ \text{ and } E^{2, \mathbf{t}}(\sigma_{\gamma} )  }}
\sum_{\substack{\alpha \in \mathcal{A}_\mathrm{dir} (T) \\ \alpha \equiv \gamma ~ (\hat{1})}}
 X(\alpha) 
Y_{(1)} (\gamma_{1}) \quad \text{[by \eqref{eq:alphayktk03}]}
\notag
\\
=&
\sum_{(\alpha, \gamma)}
 X(\alpha) 
Y_{(1)} (\gamma_{1}) 
\label{eq:rep_subalgo01e}
\end{align}
In the right-most formula of \eqref{eq:rep_subalgo01e}, $(\alpha, \gamma) \in \mathcal{A}_\mathrm{dir} (T)^{2}$ runs over the ones satisfying the following. 
\begin{equation} \label{eq:rep_subalgo01f}
\gamma_{2} = \alpha_{(2)} \land E^{2, \mathbf{t}}(\sigma_{\gamma} ) \land \alpha \equiv \gamma (\hat{1})
\end{equation}
By the definition of $\alpha \equiv \gamma (\hat{1})$ (see \eqref{eq:equivhat01}), 
the condition \eqref{eq:rep_subalgo01f} is equivalent to the following. 
\begin{equation} \label{eq:rep_subalgo01g}
\alpha_{2} = \alpha_{(2)} \land E^{2, \mathbf{t}}(\sigma_{\alpha} ) \land \gamma \equiv \alpha (\hat{1})
\end{equation}
Therefore, the right-most formula of \eqref{eq:rep_subalgo01e} can be transformed as follows. 
\begin{align}
\sum_{(\alpha, \gamma)}
 X(\alpha) 
Y_{(1)} (\gamma_{1}) 
=& \sum_{\substack{\alpha \in  \mathcal{A}_\mathrm{dir} (T) :\\ \alpha_{2} = \alpha_{(2)} \\ \text{ and } E^{2, \mathbf{t}}(\sigma_{\alpha} )  }}
\sum_{\substack{\gamma \in \mathcal{A}_\mathrm{dir} (T) \\ \gamma \equiv \alpha ~ (\hat{1})}}
X(\alpha) 
Y_{(1)} (\gamma_{1}) 
\notag
\\
=&
\sum_{\substack{\alpha \in  \mathcal{A}_\mathrm{dir} (T) :\\ \alpha_{2} = \alpha_{(2)} \\ \text{ and } E^{2, \mathbf{t}}(\sigma_{\alpha} )  }}
X(\alpha) 
\sum_{\substack{\gamma \in \mathcal{A}_\mathrm{dir} (T) \\ \gamma \equiv \alpha ~ (\hat{1})}}
Y_{(1)} (\gamma_{1}) 
\label{eq:rep_subalgo01h}
\end{align}
In the inner summation $\sum_{\substack{\gamma \in \mathcal{A}_\mathrm{dir} (T) \\ \gamma \equiv \alpha ~ (\hat{1})}}
Y_{(1)} (\gamma_{1}) $ of \eqref{eq:rep_subalgo01h}, $\gamma \in \mathcal{A}_\mathrm{dir} (T)$ runs over ones satisfying the following. 
\begin{equation} \label{eq:rep_subalgo01i}
\sigma_{\gamma} = \sigma_{\alpha} \land 
\gamma_{j} = \alpha_{j} \text{ for each $j \in \{ 2, \dots, n \}$} \land \gamma_{1} \in  \mathcal{A}_\mathrm{dir} (T_{1})
\end{equation}
Therefore, the inner summation is over $\gamma_{1} \in  \mathcal{A}_\mathrm{dir} (T_{1})
$. Thus we have the following. 
\begin{equation}
\sum_{\substack{\gamma \in \mathcal{A}_\mathrm{dir} (T) \\ \gamma \equiv \alpha ~ (\hat{1})}}
Y_{(1)} (\gamma_{1})  
=
\sum_{\gamma_{1} \in \mathcal{A}_\mathrm{dir} (T_{1}) }
Y_{(1)} (\gamma_{1})  
=
1
\label{eq:rep_subalgo01j}
\end{equation}
By \eqref{eq:rep_subalgo01h} and \eqref{eq:rep_subalgo01j}, we get \eqref{eq:rep_subalgo01d}. 
\end{proof}

\subsection{Equilibrium of directional algorithms} \label{subsection:rdavsrdir}

For each $k \in \{ 1, \dots, n \}$, by Lemma~\ref{lemm:sw86lem3_1}, there is an RDA $B_{k}$ on $T_{k}$ that achieves both $d_{0}(T)$ and $d_{1}(T)$. We fix such $B_{k}$ for each $k$. 

\begin{theo} \label{theo:rdiri_is_di} 
\begin{enumerate}
\item
Let $X \in \mathcal{D}(\mathcal{A}_\mathrm{dir} (T))$ and $k\in\{1,\ldots,n\}$. Let $B_k\in\mathcal{D}(\mathcal{A}_{\mathrm{dir}}(T_k))$ as above and let $Y = X[B_{k}/T_{k}]$. Then we have the following.
\begin{align}
\max_{\xi \in \Omega(T)} \mathrm{cost}(Y, \xi) \leq \max_{\xi \in \Omega(T)} \mathrm{cost}(X, \xi) 
\label{eq:rdir0_is_d0a01}
\\
\max_{\xi \in \Omega_0(T)} \mathrm{cost}(Y, \xi) \leq \max_{\xi \in \Omega_0(T)} \mathrm{cost}(X, \xi) 
\label{eq:rdir0_is_d0a02}
\\
\max_{\xi \in \Omega_1(T)} \mathrm{cost}(Y, \xi) \leq \max_{\xi \in \Omega_1(T)} \mathrm{cost}(X, \xi) 
\label{eq:rdir0_is_d0a03}
\end{align}
\item
Let $X \in \mathcal{D}(\mathcal{A}_\mathrm{dir} (T))$. Then there exists an RDA $Z$ on $T$ with the following properties.
\begin{align}
\max_{\xi \in \Omega(T)} \mathrm{cost}(Z, \xi) \leq \max_{\xi \in \Omega(T)} \mathrm{cost}(X, \xi) 
\label{eq:rdiri_is_di01}
\\
\max_{\xi \in \Omega_0(T)} \mathrm{cost}(Z, \xi) \leq \max_{\xi \in \Omega_0(T)} \mathrm{cost}(X, \xi) 
\label{eq:rdiri_is_di02}
\\
\max_{\xi \in \Omega_1(T)} \mathrm{cost}(Z, \xi) \leq \max_{\xi \in \Omega_1(T)} \mathrm{cost}(X, \xi) 
\label{eq:rdiri_is_di03}
\end{align}
\item $R_{\mathrm{dir},i}(T) = d_{i}(T)$.
\end{enumerate}
\end{theo}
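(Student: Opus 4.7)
The plan is to prove parts (1), (2), (3) in order by induction on the height of $T$; the base case (a single leaf) is trivial, and within the inductive step parts (2) and (3) follow from (1) by formal manipulation. The nontrivial work is in (1), where the induction hypothesis $R_{\mathrm{dir},i}(T_k)=d_i(T_k)$ on each proper subtree $T_k$ will be essential.

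For part (1), fix a type $\mathbf{t}\in\{0,1\}^n$ and restrict to $\mathbf{t}$-type $\omega$. By Lemma~\ref{lemm:rep_subalgox} the cost decomposes as a sum of $T_j$-contributions depending only on $\omega_j$, so the maximum over $\mathbf{t}$-type $\omega$ splits as $\sum_j \max_{\omega_j\in\Omega_{t_j}(T_j)}[T_j\text{-contrib}]$. Lemma~\ref{lemm:rep_subalgoy} gives that the $j\neq k$ summands agree under $X$ and $Y$, so only the $T_k$-term differs. Unwinding \eqref{eq:alphayktk03}, I will derive
\[
[T_k\text{-contrib of }Y]=\mathrm{Pr}_X[E^{k,\mathbf{t}}]\cdot\mathrm{cost}(B_k,\omega_k),\qquad [T_k\text{-contrib of }X]=\mathrm{Pr}_X[E^{k,\mathbf{t}}]\cdot\mathrm{cost}(\tilde{X}_k,\omega_k),
\]
where $\tilde{X}_k\in\mathcal{D}(\mathcal{A}_\mathrm{dir}(T_k))$ is the $T_k$-subalgorithm of $X$ conditioned on $E^{k,\mathbf{t}}$ (when $\mathrm{Pr}_X[E^{k,\mathbf{t}}]=0$, both terms vanish). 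Maximizing over $\omega_k\in\Omega_{t_k}(T_k)$, the $Y$-side becomes $\mathrm{Pr}_X[E^{k,\mathbf{t}}]\cdot d_{t_k}(T_k)$ since $B_k$ achieves $d_{t_k}(T_k)$; the $X$-side is at least $\mathrm{Pr}_X[E^{k,\mathbf{t}}]\cdot d_{t_k}(T_k)$ by the induction hypothesis, because any element of $\mathcal{D}(\mathcal{A}_\mathrm{dir}(T_k))$ has max cost $\geq R_{\mathrm{dir},t_k}(T_k)=d_{t_k}(T_k)$. Summing over $j$ and taking max over types (all, or those with a fixed root value) yields the three inequalities \eqref{eq:rdir0_is_d0a01}--\eqref{eq:rdir0_is_d0a03}.

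For part (2), set $Z:=X[B_1/T_1][B_2/T_2]\cdots[B_n/T_n]$ and apply part (1) iteratively; each intermediate distribution remains in $\mathcal{D}(\mathcal{A}_\mathrm{dir}(T))$ and the three bounds persist. An unwinding of Definition~\ref{defi:rep_subalgo02} gives the product form $Z(\gamma)=\pi_X(\sigma_\gamma)\prod_{k=1}^n B_k(\gamma_k)$ with $\pi_X(\sigma):=\sum_{\alpha:\sigma_\alpha=\sigma}X(\alpha)$, which is precisely the RDA structure (permutation distribution $\pi_X$ with subtree RDAs $B_1,\ldots,B_n$). Part (3) is then immediate: for any $X\in\mathcal{D}(\mathcal{A}_\mathrm{dir}(T))$, part (2) produces an RDA $Z$ with $d_i(T)\leq\max_{\omega\in\Omega_i(T)}\mathrm{cost}(Z,\omega)\leq\max_{\omega\in\Omega_i(T)}\mathrm{cost}(X,\omega)$, so taking the infimum over $X$ gives $d_i(T)\leq R_{\mathrm{dir},i}(T)$, and the reverse inequality is immediate from the inclusion of RDAs in $\mathcal{D}(\mathcal{A}_\mathrm{dir}(T))$. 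The main obstacle is precisely the lower bound $\max_{\omega_k\in\Omega_{t_k}(T_k)}\mathrm{cost}(\tilde{X}_k,\omega_k)\geq d_{t_k}(T_k)$ in part (1): since $\tilde{X}_k$ is only a randomized directional algorithm in the broad sense on $T_k$ and need not carry RDA structure, its worst-case cost is a priori bounded below only by $R_{\mathrm{dir},t_k}(T_k)$, which can a priori be strictly smaller than $d_{t_k}(T_k)$. Closing this gap is exactly the job of the inductive hypothesis, which is why parts (1)--(3) must be proved together as a single induction on the height of $T$.
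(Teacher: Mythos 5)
Your proposal is correct and takes essentially the same approach as the paper: simultaneous induction on tree height, Lemmas~\ref{lemm:rep_subalgox} and~\ref{lemm:rep_subalgoy} to isolate the $T_k$-contribution, and the inductive identity $R_{\mathrm{dir},i}(T_k)=d_i(T_k)$ to close the gap. The one minor difference is in part (2): you show directly that $X[B_1/T_1]\cdots[B_n/T_n]$ factors as $\pi_X(\sigma_\gamma)\prod_k B_k(\gamma_k)$ and hence \emph{is} an RDA, whereas the paper sidesteps that verification by introducing a separate RDA $Z$ with the same $\pi$ and subalgorithms and checking only cost equality; your variant is slightly more direct but equivalent in substance.
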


\begin{proof}
By induction on the height of $T$, we are going to show all three assertions simultaneously. 
If the height is 0 then the assertions are obvious. 
Assume that assertions (1), (2) and (3) hold for all trees whose height are less than that of $T$. 
By induction hypothesis on (3), $B_{k}$ achieves $R_{\mathrm{dir},0}(T_{k})$ and $R_{\mathrm{dir},1}(T_{k})$, too. 

For each $A\in\mathcal{D}(\mathcal{A}_{\mathrm{dir}}(T))$ and $\sigma \in \mathfrak{S}_{n}$, let $\pi_{A} (\sigma) $ denote the probability of $A$ being order $\sigma$, that is, 
\begin{equation} \label{eq:pix}
\pi_{A} (\sigma) = \sum_{\substack{\alpha \in \mathcal{A}_\mathrm{dir} (T) \\ \sigma_{\alpha} = \sigma}} A(\alpha).
\end{equation}

(1). We show \eqref{eq:rdir0_is_d0a02} in the case of $k=1$. 
The other cases of \eqref{eq:rdir0_is_d0a02} and the other equations \eqref{eq:rdir0_is_d0a01}, \eqref{eq:rdir0_is_d0a03} are shown in the same way. 
Let $\omega \in \Omega_0(T)$ be an assignment that maximizes $\mathrm{cost}(Y,\omega)$ under the constraint that the root has value 0. Assume that $\mathbf{t}\in \{ 0,1 \}^{n}$ and $\omega$ is $\mathbf{t}$-type. 

Case 1: Suppose we have $\mathrm{Pr}_X [ E^{1,\mathbf{t}} ] = 0$. Then we have the following. 
\begin{equation}
\max_{\xi\in\Omega_0(T)}\mathrm{cost}(Y,\xi) = \mathrm{cost}(Y,\omega) = \mathrm{cost}(X,\omega) \leq \max_{\xi\in\Omega_0(T)}\mathrm{cost}(X,\xi)
\end{equation}

Case 2: Supppose we have $\mathrm{Pr}_X [ E^{1,\mathbf{t}} ] > 0$. We define $A_{1} \in \mathcal{D}(\mathcal{A}_\mathrm{dir} (T_{1}))$ by defining its probability for each $\alpha_{(1)} \in \mathcal{A}_{\mathrm{dir}}(T_{1})$ as follows. 
\begin{equation} \label{eq:rdiri_is_di01b}
A_{1} (\alpha_{(1)}) = \frac{1}{\mathrm{Pr}_X [ E^{1,\mathbf{t}} ]} 
\sum_{\substack{\alpha \in  \mathcal{A}_\mathrm{dir} (T) :\\ \alpha_{1} = \alpha_{(1)} \\ \text{ and } E^{ 1, \mathbf{t}}(\sigma_{\alpha})  }}
X(\alpha)
\end{equation}

We may assume the following, where the suffix of the $\max$ is $\xi_{(1)}\in\Omega_{t_1}(T_1)$.
\begin{equation} \label{eq:rdiri_is_di02b}
\mathrm{cost}(B_{1}, \omega_{1}) = \max_{\xi_{(1)}\in\Omega_{t_1}(T_1)} \mathrm{cost}(B_{1},\xi_{(1)}).
\end{equation}

Now, let $\eta_{(1)}$ be an assignment to $T_{1}$ that sets the value of $v_{1}$ to $t_{1}$ and satisfies the following.
\begin{equation} \label{eq:rdiri_is_di03b}
\mathrm{cost}(A_{1}, \eta_{(1)}) = \max_{\xi_{(1)}\in\Omega_{t_1}(T_1)} \mathrm{cost}(A_{1},\xi_{(1)})
\end{equation}

Since $B_1$ achieves $R_{\mathrm{dir},t_1} (T_1)$, we have the following. 
\begin{equation} \label{eq:rdiri_is_di04}
\mathrm{cost}(B_{1}, \omega_{1}) \leq \mathrm{cost}(A_{1}, \eta_{(1)})
\end{equation}

Let $\omega^{\prime}$ be the assignment given by substituting $\eta_{(1)}$ for $\omega_{1}$ in $\omega$. 
Then $\omega^{\prime}$ is $\mathbf{t}$-type, too. 
By Lemmas~\ref{lemm:rep_subalgox}, \ref{lemm:rep_subalgoy}, and \eqref{eq:defofe1} we have the following. 
\begin{align} 
&\mathrm{cost}(X,\omega^{\prime}) - \mathrm{cost}(Y,\omega) 
\notag 
\\
=& 
\mathrm{Pr}_X [ E^{1,\mathbf{t}} ] \mathrm{cost}(A_{1}, \eta_{(1)}) 
-
\sum_{\alpha_{(1)} \in \mathcal{A}_\mathrm{dir} (T_{1})} 
\left(
\sum_{\substack{\alpha \in  \mathcal{A}_\mathrm{dir} (T) :\\ \alpha_{1} = \alpha_{(1)} \\ \text{ and } E^{k,\mathbf{t}} (\sigma_{\alpha})  }}
Y (\alpha)
\right)
\times 
\mathrm{cost} (\alpha_{(1)}, \omega_{1})
\label{eq:rep_subalgo02}
\end{align}

We look at the coefficient of $\mathrm{cost} (\alpha_{(1)}, \omega_{1})$ in \eqref{eq:rep_subalgo02}. 
Note that we may replace the bound variable $\alpha$ by other letter, say $\gamma$. We have

\begin{align}
& \sum_{\substack{\gamma \in  \mathcal{A}_\mathrm{dir} (T) :\\ \gamma_{1} = \alpha_{(1)} \\ \text{ and } E^{1,\mathbf{t}} (\sigma_{\gamma})  }}
Y (\gamma) 
\notag
\\
=&  \sum_{\substack{\gamma \in  \mathcal{A}_\mathrm{dir} (T) :\\ \gamma_{1} = \alpha_{(1)} \\ \text{ and } E^{1,\mathbf{t}} (\sigma_{\gamma})  }}
\mathrm{Pr}_{X} [X \equiv \gamma ~ (\hat{1})] \cdot B_{1} (\gamma_{1}) 
\quad \text{[By \eqref{eq:alphayktk03} and the definition of $Y$.]}
\notag
\\
=&  \sum_{\substack{\gamma \in  \mathcal{A}_\mathrm{dir} (T) :\\ \gamma_{1} = \alpha_{(1)} \\ \text{ and } E^{1,\mathbf{t}} (\sigma_{\gamma})  }}
\mathrm{Pr}_{X} [X \equiv \gamma ~ (\hat{1})] \cdot B_{1} (\alpha_{(1)}) 
\notag
\\
=& B_{1} (\alpha_{(1)}) \sum_{\substack{\gamma \in  \mathcal{A}_\mathrm{dir} (T) :\\ \gamma_{1} = \alpha_{(1)} \\ \text{ and } E^{1,\mathbf{t}} (\sigma_{\gamma})  }}
\mathrm{Pr}_{X} [X \equiv \gamma ~ (\hat{1})].
\label{eq:rep_subalgo03}
\end{align}

Then we look at the second factor of \eqref{eq:rep_subalgo03}. 
Under the constraint that $\gamma \in  \mathcal{A}_\mathrm{dir} (T) $ and $\gamma_{1} = \alpha_{(1)}$, 
tuple $(\sigma_{\gamma}, \gamma_{2}, \dots, \gamma_{n})$ can take arbitrary value in $\mathfrak{S}_{n} \times \mathcal{A}_\mathrm{dir} (T_{2}) \times \cdots \mathcal{A}_\mathrm{dir} (T_{n})$. 
Thus, we have the following. 

\begin{align}
& \sum_{\substack{\gamma \in  \mathcal{A}_\mathrm{dir} (T) :\\ \gamma_{1} = \alpha_{(1)} \\ \text{ and } E^{1,\mathbf{t}} (\sigma_{\gamma})  }}
\mathrm{Pr}_{X}[X \equiv \gamma ~ (\hat{1})] 
\notag
\\
=& \sum_{\sigma \in \mathfrak{S}_{n}: E^{1,\mathbf{t}} (\sigma)  } 
\sum_{\substack{\gamma_{(2)} \in \mathcal{A}_\mathrm{dir} (T_{2}) \\ \dots \\ \gamma_{(n)} \in \mathcal{A}_\mathrm{dir} (T_{n})}} 
\mathrm{Pr}_{X}[X \equiv \gamma ~ (\hat{1})] 
\label{eq:rep_subalgo04}
\\
& \text{[$\gamma$ in \eqref{eq:rep_subalgo04} denotes the unique $\gamma \in \mathcal{A}_\mathrm{dir} (T)$ that has order $\sigma$, }
\notag
\\
& ~ \text{$\gamma_{1} = \alpha_{(1)}$, and $\gamma_{j} = \gamma_{(j)}$ for each $j \in \{ 2, \dots, n \}$. ]}
\notag
\\
=& \sum_{\sigma \in \mathfrak{S}_{n}: E^{1,\mathbf{t}} (\sigma)  } 
\pi_X(\sigma)
\quad \text{[see \eqref{eq:pix} for the definition of $\pi_{X}$]}
\notag
\\
=&
\mathrm{Pr}_{X}[E^{1,\mathbf{t}}]
\label{eq:rep_subalgo06}
\end{align}

By \eqref{eq:rep_subalgo03} and \eqref{eq:rep_subalgo06}, we get
\begin{equation} \label{eq:rep_subalgo07}
\sum_{\substack{\gamma \in  \mathcal{A}_\mathrm{dir} (T) :\\ \gamma_{1} = \alpha_{(1)} \\ \text{ and } E^{1,\mathbf{t}} (\sigma_{\gamma})  }}
Y (\gamma) 
= B_{1} (\alpha_{(1)}) \mathrm{Pr}_{X}[E^{1,\mathbf{t}}],
\end{equation}

therefore
\begin{align} 
&\sum_{\alpha_{(1)} \in \mathcal{A}_\mathrm{dir} (T_{1}) } 
\left(
\sum_{\substack{\alpha \in  \mathcal{A}_\mathrm{dir} (T) :\\ \alpha_{1} = \alpha_{(1)} \\ \text{ and } E^{k,\mathbf{t}} (\sigma_{\alpha})  }}
Y (\alpha)
\right)
\times 
\mathrm{cost} (\alpha_{(1)}, \omega_{1})
\notag
\\
=&
\mathrm{Pr}_{X}[E^{1,\mathbf{t}}]
\sum_{\alpha_{(1)} \in \mathcal{A}_\mathrm{dir} (T_{1})} 
B_{1} (\alpha_{(1)}) \mathrm{cost} (\alpha_{(1)}, \omega_{1})
\notag
\\
=&
\mathrm{Pr}_{X}[E^{1,\mathbf{t}}] ~ 
\mathrm{cost} (B_{(1)}, \omega_{1}).
\label{eq:rep_subalgo08}
\end{align}

By \eqref{eq:rdiri_is_di04}, \eqref{eq:rep_subalgo02}, and\eqref{eq:rep_subalgo08}, we have
\begin{align}
&\mathrm{cost}(X,\omega^{\prime}) - \mathrm{cost}(Y,\omega) 
\notag
\\
=&  \mathrm{Pr}_{X}[E^{1,\mathbf{t}}] (\mathrm{cost}(A_{1}, \eta_{(1)}) - \mathrm{cost} (B_{1}, \omega_{1}))
\notag
\\
\geq & 0
\label{eq:rep_subalgo09}
\end{align}

and we can see that
\begin{align} 
\max_{\xi\in\Omega_0(T)} \mathrm{cost}(Y,\xi )
&=
\mathrm{cost}(Y, \omega) \quad \text{[by the definition of $\omega$]}
\notag \\
&\leq \mathrm{cost}(X, \omega^{\prime}) \quad \text{[by \eqref{eq:rep_subalgo09}]}
\notag \\
&\leq \max_{\xi\in\Omega_0(T)} \mathrm{cost}(X, \xi).\label{eq:rdiri_is_di05}
\end{align}

(2). We show induction step of \eqref{eq:rdiri_is_di02}. 
The other equations \eqref{eq:rdiri_is_di01}, \eqref{eq:rdiri_is_di03} are shown in the same way. 
By starting from $X$, we can repeatedly apply (1) for $k \in \{1,\dots,n\}$. 
We define $Y \in \mathcal{A}_\mathrm{dir}(T)$ by setting $Y:=( \cdots (X [B_{1}/T_{1}]) \cdots ) [B_{n}/T_{n}]$. 
Then we have the following, where (b) holds by Lemma~\ref{lemm:prob_perm}, and (c) is shown in the same way as \eqref{eq:rep_subalgo08}.  
\begin{enumerate}
\item[(a)]
$
\displaystyle \max_{\xi \in \Omega_0(T)} \mathrm{cost}(Y, \xi) \leq \max_{\xi \in \Omega_0(T)} \mathrm{cost}(X, \xi).
$
\item [(b)] $\pi_{Y} = \pi_{X}$. 
\item [(c)] For each $j \in \{1,\dots,n\}$, $\mathbf{t} \in \{ 0, 1 \}^{n}$ and $\omega \in \Omega(T)$ of $\mathbf{t}$-type,
\begin{equation} \label{eq:cost_y_omega}
\mathrm{cost}(Y,\omega) = \sum_{j=1}^{n} \mathrm{Pr}_{Y} [E^{j,\mathbf{t}}] ~ \mathrm{cost}(B_{j}, \omega_{j}).
\end{equation}
\end{enumerate}

Let $Z$ be the RDA such that $\pi_{Z} = \pi_{X}$ and for each $j \in \{1,\dots,n\}$, whenever $Z$ evaluates $T_{j}$, $Z$ behaves as $B_{j}$. Then for each $j \in \{1,\dots,n\}$, $\mathbf{t} \in \{ 0, 1 \}^{n}$, and $\omega \in \Omega(T)$ of $\mathbf{t}$-type, the following holds. 
\begin{align} 
\mathrm{cost}(Z,\omega) 
&= \sum_{j=1}^{n} \mathrm{Pr}_{Z} [E^{j,\mathbf{t}}] ~ \mathrm{cost}(B_{j}, \omega_{j}) 
\quad \text{[This is easily shown.]}
\notag
\\
&= \sum_{j=1}^{n} \mathrm{Pr}_{Y} [E^{j,\mathbf{t}}] ~ \mathrm{cost}(B_{j}, \omega_{j}) 
\quad \text{[by $\pi_{Z}=\pi_{Y}$]}
\notag
\\
&=\mathrm{cost}(Y,\omega)
\quad \text{[by \eqref{eq:cost_y_omega}]}
\label{eq:cost_z_omega}
\end{align}

By (a) and \eqref{eq:cost_z_omega}, we get \eqref{eq:rdiri_is_di02}. 
(Note that We do not have to show that $Z$ and $Y$ are identical as Boolean decision trees. For our purpose, (a) and \eqref{eq:cost_z_omega} are sufficient.) 

(3). Follows from (2).
\end{proof}

\section{Randomized depth-first algorithms} \label{sec:problem2}

In this section, we are going to prove that $R_{\mathrm{DF}, i} (T) = R_{\mathrm{dir}, i} (T)$ for $i=0,1$.
Again, we assume that the root of tree $T$ has $n$ child nodes $v_{1}, \dots, v_{n}$, and the corresponding subtrees are $T_{1},\dots,T_{n}$. For $\mathbf{t} = t_1 \cdots t_n \in \{ 0,1 \}^n $, we let expression ``$\bar{v}$ is $\mathbf{t}$'' denote ``$v_{1}, \dots, v_{n}$ have values $t_{1}, \dots, t_{n}$ respectively.''

\subsection{Outline of the proof in the remaining} \label{subsec:4outline}

Roughly speaking, the goal and the outline of the proof in this section is as follows. 

(1) Recall von Neumann-type equilibria in Definition~\ref{defi:vonneumanneq}. 
Our immediate goal is their equality $\overline{P}_{\mathrm{DF},i} (T) = \overline{P}_{\mathrm{dir},i} (T)$. 
The final goal $R_{\mathrm{DF}, i} (T) = R_{\mathrm{dir}, i} (T)$ will follow from the equality above. 

(2) Without loss of generality, we may assume that the root of $T$ is labeled with AND. 
The proof of $\overline{P}_{\mathrm{DF},i} (T) = \overline{P}_{\mathrm{dir},i} (T)$ for $i=0$ is more difficult than that for $i=1$. 

(3) As to the proof for $i=0$, our ideal scenario would be this: For any distribution $S$ on $\Omega_{0}(T)$ and for any $\alpha \in \mathcal{A}_{\mathrm{DF}} (T)$, there exists  $B \in \mathcal{D}(\mathcal{A}_{\mathrm{dir}} (T))$ such that $\mathrm{cost}(\alpha, S) \geq \mathrm{cost}(B,S)$. 
In general, it is hard to find such $B$. 

(4) By cleverly devising a method, we will get a result very close to the ideal scenario mentioned above. Arrows in Figure~\ref{fig:dependency01} illustrates interdependence among definitions in this section. 

\begin{figure}[h] 
  \centering
  \includegraphics[width=110mm,bb=0 0 1422 596]{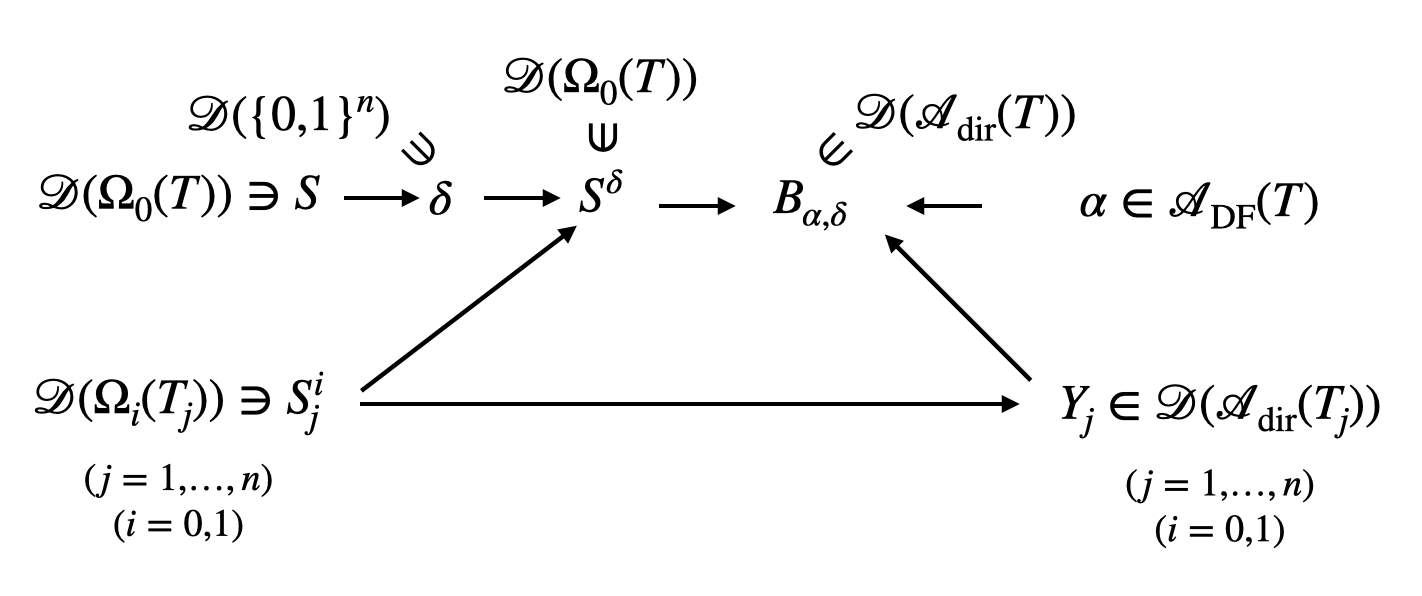}
  \caption{Interdependence of definitions}
  \label{fig:dependency01}
\end{figure}

(5) Probability distribution $S^{\delta}$ in Figure~\ref{fig:dependency01} will be defined as follows. 
We fix a probability distribution $S \in \mathcal{D} (\Omega_{0} (T))$ with a certain good property. 
We extract $\delta \in \mathcal{D} (\{ 0,1 \}^{n})$ from $S$. 
Probability distribution $\delta$ describes the behavior of $S$ at the depth-1 level. 
For each subtree $T_{j}$ and $i=0,1$, we take probability distributions $S^{i}_{j} \in \mathcal{D} (\Omega_{i} (T_{j}))$ of a certain good nature, where $i$ denotes the value of $v_{j}$, the roof of $T_{j}$. By means of $\delta$ and $\{ S^{i}_{j} \}_{i,j}$, we define $S^{\delta}$. At the depth-1 level of $T$, $S^{\delta}$ behaves $S$-like. On each $T_{j}$, $S^{\delta}$ behaves $S^{i}_{j} $-like. 

(6) The ``chimera'' randomized directional algorithm $B_{\alpha,\delta}$ in Figure~\ref{fig:dependency01} will be defined as follows. 
By induction hypothesis on the height of trees, we have $\overline{P}_{\mathrm{DF},i} (T_{j}) = \overline{P}_{\mathrm{dir},i} (T_{j})$, and we can take a randomized directional algorithm $Y_{j}$ on $T_{j}$ that behaves well both for $S^{0}_{j}$ and $S^{1}_{j}$. A depth-first algorithm $\alpha$ on $T$ is given, where $\alpha$ may not be directional. Randomized directional algorithm $B_{\alpha,\delta}$ on $T$ is defined by means of $\alpha, Y_{j}$, and $\delta$. At the depth-1 level of $T$, $B_{\alpha,\delta}$ behaves similarly to the movement of $\alpha$ with respect to $S^{\delta}$. On each $T_{j}$, $B_{\alpha,\delta}$ behaves $Y_{j}$-like. 

(7) We will show $\mathrm{cost}(\alpha, S^{\delta}) \geq \mathrm{cost}(B_{\alpha,\delta},S^{\delta})$ (see \eqref{eq:lboundalphasdelta2}), which is a key to the proof of the main result. 

In subsection~\ref{subsection:tools}, we define $S^{\delta}$ and investigate its behavior with respect to directional algorithms. 
In subsection~\ref{subsection:main}, we construct $B_{\alpha,\delta}$ and show the main result.

\subsection{Mixed strategy for independent distributions} \label{subsection:tools}

In this subsection, we define a certain mixed strategy for independent distributions (Definition~\ref{defi:sdeltadir}). We are going to show that this type of mixed strategy is very effective against directional algorithms (Lemma~\ref{lemm:sdelta_det}). 

\begin{defi} \label{defi:sdeltadir} 
Suppose that $\delta \in \mathcal{D}(\{ 0,1 \}^{n})$, in other words $\delta$ is a probability distribution on $\{0,1\}^n$, 
and that $S_{j}^{i} \in \mathcal{D}(\Omega_{i}(T_{j}))$ for each $j \in \{ 1,\dots, n\}$ and $i \in \{ 0,1 \}$. 
We let $\langle \delta; \{ S_{j}^{i} \}_{j,i} \rangle$ denote the following probability distribution on $\Omega(T)$, 
where $\omega_{j} \in \Omega(T_{j})$ for each $j$.
\begin{equation}
\langle \delta; \{ S_{j}^{i} \}_{j,i} \rangle (\omega_{1} \cdots \omega_{n}) 
= \sum_{\mathbf{t} \in \{ 0,1 \}^{n}} \delta (\mathbf{t}) \prod_{j=1}^{n} S_{j}^{t_{j}} (\omega_{j})
\end{equation} 
\end{defi}

\begin{lemm} \label{lemm:prsdelta_deltau}
Suppose that $\delta$ and $\{ S_{j}^{i} \}_{j,i} $ are as above. Let $S^{\delta}$ denote $\langle \delta; \{ S_{j}^{i} \}_{j,i} \rangle$. 
\begin{enumerate}
\item
$S^{\delta} (\omega_{1} \cdots \omega_{n}) = \delta (\mathbf{t}^{\omega}) \prod_{j=1}^{n} S_{j}^{t_{j}^{\omega}} (\omega_{j})$, where $\mathbf{t}^{\omega} \in \{ 0,1 \}^{n}$ is the string such that $t_{j}^{\omega}$ is the value of $v_{j}$ when $\omega_{j}$ is assigned to $T_{j}$.  
\item
$\sum_{\omega \in \Omega(T) } S^{\delta} (\omega) =1 $. Thus $S^{\delta}$ is indeed probability distribution. 
\item
For each $\mathbf{t} \in \{0,1 \}^{n}$, we have 
$\mathrm{Pr}_{S^{\delta}} [\text{$\bar{v}$ is $\mathbf{t}$}] = \delta (\mathbf{t})$.
\end{enumerate}
\end{lemm}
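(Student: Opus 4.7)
The plan is to prove the three assertions in sequence, each following from the definition of $S^{\delta}$ together with the fact that each $S_j^{i}$ is supported on $\Omega_{i}(T_j)$.

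For assertion (1), the key observation is that $S_j^{t_j}(\omega_j)$ vanishes whenever $\omega_j$ is not of type $t_j$, that is, whenever $v_j$ does not take value $t_j$ under $\omega_j$. Applied to the product $\prod_{j=1}^n S_j^{t_j}(\omega_j)$ appearing in Definition~\ref{defi:sdeltadir}, this means the product is zero unless $t_j = t_j^{\omega}$ for every $j$, i.e., unless $\mathbf{t} = \mathbf{t}^{\omega}$. Hence the sum over $\mathbf{t} \in \{0,1\}^n$ collapses to the single term indexed by $\mathbf{t}^{\omega}$, yielding the claimed formula.

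For assertion (2), I would start from the original defining sum in Definition~\ref{defi:sdeltadir} rather than from (1), because then the factorization is transparent. Swapping the summation order gives
\begin{equation}
\sum_{\omega \in \Omega(T)} S^{\delta}(\omega) = \sum_{\mathbf{t} \in \{0,1\}^n} \delta(\mathbf{t}) \prod_{j=1}^{n} \left( \sum_{\omega_j \in \Omega(T_j)} S_j^{t_j}(\omega_j) \right),
\end{equation}
since assignments to $T$ split as products of assignments to $T_1,\ldots,T_n$. Each inner sum equals $1$ because $S_j^{t_j} \in \mathcal{D}(\Omega_{t_j}(T_j))$ is a probability distribution (and vanishes off $\Omega_{t_j}(T_j)$, so extending the range of summation is harmless). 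What remains is $\sum_{\mathbf{t}} \delta(\mathbf{t}) = 1$ since $\delta \in \mathcal{D}(\{0,1\}^n)$.

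For assertion (3), I would compute $\Pr_{S^{\delta}}[\bar{v}\text{ is }\mathbf{t}]$ by summing $S^{\delta}(\omega)$ over those $\omega$ with $\mathbf{t}^{\omega} = \mathbf{t}$, using the formula from (1). This gives $\delta(\mathbf{t})$ times $\prod_j \sum_{\omega_j \in \Omega_{t_j}(T_j)} S_j^{t_j}(\omega_j)$, and the product of the inner factors equals $1$ as in the previous step. I expect no real obstacle here: all three parts are essentially bookkeeping on the product structure of $S^{\delta}$, and the only subtle point worth stating explicitly is that $S_j^{t_j}$ is supported on $\Omega_{t_j}(T_j)$, which is what makes the $\mathbf{t}$-sum collapse in (1) and also what forces the inner sums in (2) and (3) to equal $1$.
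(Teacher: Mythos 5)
Your proof is correct and follows the same approach as the paper's: part (1) uses the support condition on $S_j^{t_j}$ to collapse the $\mathbf{t}$-sum, part (2) swaps the order of summation and factors the inner sum into products each equal to $1$, and part (3) restricts the outer sum to $\Omega_{t_1}(T_1)\times\cdots\times\Omega_{t_n}(T_n)$ and again factors. No gaps.
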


The proof of Lemma \ref{lemm:prsdelta_deltau} is routine, which will be given in Appendix.

\begin{lemm} \label{lemm:sdelta_det}
Let $T$ be an AND--OR tree with nonzero height. Let $v_1,\ldots,v_n$ be the depth-1 nodes of $T$, and $T_1,\ldots,T_n$ the corresponding subtrees. For each $S \in \mathcal{D}(\Omega(T))$, set $\delta_S \in \mathcal{D}( \{ 0,1 \}^{n} )$ as
\begin{equation} \label{eq:prszerov_det}
\delta_S (t_{1} \cdots t_{n}) = \mathrm{Pr}_{S} [ \text{$\bar{v}$ is $\mathbf{t}$} ] .
\end{equation}
Then there exists a series of distributions $\{S^i_j\}_{i\in\{0,1\},j\in\{1,\ldots,n\}}$ which satisfies the following.

(1) For each $i$ and $j$, we have $S^i_j \in \mathcal{D}(\Omega_i(T_j))$ and it achieves $P_{\mathrm{dir},i}(T_j)$.

(2) For any $S \in \mathcal{D}(\Omega(T))$, we have

\begin{equation} \label{eq:costzdeltabars0_det}
\min_{\alpha \in \mathcal{A}_\mathrm{dir}(T)} \mathrm{cost} (\alpha, S)
\leq 
\min_{\alpha \in \mathcal{A}_\mathrm{dir}(T)} \mathrm{cost} (\alpha, S ^{\delta}),
\end{equation}

where $S^{\delta}$ denotes $\langle \delta_S; \{ S_{j}^{i} \}_{j,i} \rangle$. 

(3) There exist $S^0 \in \mathcal{D}(\Omega_0 (T))$, $S^1 \in \mathcal{D}(\Omega_1 (T))$ and $\tilde{\alpha} \in \mathcal{A}_{\mathrm{dir}}(T)$ where
\begin{itemize}
\item $S^0 = \langle \delta_{0}; \{S^i_j\}_{j,i} \rangle$ for some $\delta_{0} \in\mathcal{D}(\{0,1\}^n)$ and it achieves $P_{\mathrm{dir},0}(T)$, 
\item $S^1 = \langle \delta_{1}; \{S^i_j\}_{j,i} \rangle $ for some $\delta_{1} \in\mathcal{D}(\{0,1\}^n)$ and it achieves $P_{\mathrm{dir},1}(T)$, and
\item $\tilde{\alpha}$ minimizes the costs of both $S^0$ and $S^1$, that is,
\begin{equation} \label{eq:defminimizer0}
\mathrm{cost} (\tilde{\alpha},S^0) = \min_{\alpha \in \mathcal{A}_{\mathrm{dir}} (T)} \mathrm{cost} (\alpha,S^0)
\end{equation}
and
\begin{equation} \label{eq:defminimizer1}
\mathrm{cost} (\tilde{\alpha},S^1) = \min_{\alpha \in \mathcal{A}_{\mathrm{dir}} (T)} \mathrm{cost} (\alpha,S^1)
\end{equation}
holds.
\end{itemize}
\end{lemm}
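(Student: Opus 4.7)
The plan is to prove (1), (2), (3) in order, using the cost decomposition of a directional algorithm along its evaluation order $\sigma$ and Lemma~\ref{lemm:sw86lem3_1} applied to each depth-1 subtree. Assertion (1) is by compactness: for each $j$ and $i$, $\mathcal{D}(\Omega_{i}(T_{j}))$ is compact and $S'\mapsto\min_{\gamma}\mathrm{cost}(\gamma,S')$ is the minimum of finitely many linear maps on it, hence continuous; fix any maximizer as $S^{i}_{j}$.

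For (2), fix $S$, set $\delta=\delta_{S}$, and let $\gamma$ be any element of $\mathcal{A}_{\mathrm{dir}}(T)$ with order $\sigma$. Since whether $\gamma$ probes $T_{\sigma(k)}$ depends only on the preceding values $v_{\sigma(1)},\ldots,v_{\sigma(k-1)}$ (all $1$ for an AND root, all $0$ for OR), the cost decomposes as
\begin{equation*}
\mathrm{cost}(\gamma,S)\;=\;\sum_{k=1}^{n}\sum_{i\in\{0,1\}} q_{k,i}(\delta,\sigma)\,\mathrm{cost}\bigl(\gamma_{\sigma(k)},S^{\sigma}_{k,i}\bigr),
\end{equation*}
where $q_{k,i}(\delta,\sigma)$ is the $\delta$-probability of the joint event ``$\gamma$ probes $T_{\sigma(k)}$ and $v_{\sigma(k)}=i$'', and $S^{\sigma}_{k,i}$ is the conditional distribution of $\omega_{\sigma(k)}$ given that event. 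Because $S$ and $S^{\delta}$ share the marginal $\delta$ on $\bar v$, the same $q_{k,i}$ appear for both; and under the product structure of $S^{\delta}$ the conditional reduces to $S^{i}_{\sigma(k)}$. By Lemma~\ref{lemm:sw86lem3_1}, Theorem~\ref{theo:rdiri_is_di}, and Proposition~\ref{prop:vonneumanneq}, for each $j$ there is an RDA $A^{*}_{j}$ on $T_{j}$ with $\mathrm{cost}(A^{*}_{j},S')\leq P_{\mathrm{dir},i}(T_{j})$ for every $S'\in\mathcal{D}(\Omega_{i}(T_{j}))$ and both $i$. Replacing each $\gamma_{\sigma(k)}$ by $A^{*}_{\sigma(k)}$ while keeping $\sigma$ fixed yields a $\gamma'\in\mathcal{D}(\mathcal{A}_{\mathrm{dir}}(T))$ with
\begin{equation*}
\mathrm{cost}(\gamma',S)\;\leq\;\sum_{k,i}q_{k,i}(\delta,\sigma)\,P_{\mathrm{dir},i}(T_{\sigma(k)})\;\leq\;\sum_{k,i}q_{k,i}(\delta,\sigma)\,\mathrm{cost}(\gamma_{\sigma(k)},S^{i}_{\sigma(k)})\;=\;\mathrm{cost}(\gamma,S^{\delta}),
\end{equation*}
the middle inequality using $\mathrm{cost}(\gamma_{\sigma(k)},S^{i}_{\sigma(k)})\geq P_{\mathrm{dir},i}(T_{\sigma(k)})$ since $S^{i}_{\sigma(k)}$ is a $P_{\mathrm{dir},i}$-achiever. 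Minimizing over $\gamma$ and invoking Proposition~\ref{prop:vonneumanneq}(3) to identify $\min$ over $\mathcal{A}_{\mathrm{dir}}(T)$ with $\min$ over $\mathcal{D}(\mathcal{A}_{\mathrm{dir}}(T))$ yields (2).

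For (3), assume the root is AND (the OR case is symmetric). Take $\delta_{1}$ to be the Dirac mass at $(1,\ldots,1)$, so $S^{1}=\prod_{j}S^{1}_{j}$; every directional algorithm probes every subtree on $\Omega_{1}(T)$, so $\min_{\gamma}\mathrm{cost}(\gamma,S^{1})=\sum_{j}P_{\mathrm{dir},1}(T_{j})=P_{\mathrm{dir},1}(T)$, and $S^{1}$ achieves $P_{\mathrm{dir},1}(T)$. For $\delta_{0}$, assertion (2) applied to an arbitrary maximizer of $P_{\mathrm{dir},0}(T)$ shows the max is already attained within distributions of the form $\langle\delta;\{S^{i}_{j}\}\rangle$ for $\delta$ supported on $\{0,1\}^{n}\setminus\{(1,\ldots,1)\}$; let $\delta_{0}$ be any such maximizer. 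To produce $\tilde\alpha$, take the RDA $A^{*}_{j}$ on $T_{j}$ from Lemma~\ref{lemm:sw86lem3_1}; since $\mathrm{cost}(A^{*}_{j},S^{i}_{j})$ is pinned between $d_{i}(T_{j})=P_{\mathrm{dir},i}(T_{j})$ above and $\min_{\gamma}\mathrm{cost}(\gamma,S^{i}_{j})=P_{\mathrm{dir},i}(T_{j})$ below, every $\gamma\in\mathrm{supp}(A^{*}_{j})$ attains $\mathrm{cost}(\gamma,S^{i}_{j})=P_{\mathrm{dir},i}(T_{j})$ simultaneously for both $i$; fix one such $\gamma$ as $\tilde\alpha_{j}$. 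Finally, let $\sigma^{*}$ be any minimizer of $\sigma\mapsto\sum_{k,i}q_{k,i}(\delta_{0},\sigma)\,P_{\mathrm{dir},i}(T_{\sigma(k)})$ and let $\tilde\alpha\in\mathcal{A}_{\mathrm{dir}}(T)$ have order $\sigma^{*}$ and sub-algorithms $\tilde\alpha_{j}$; the same decomposition shows $\tilde\alpha$ minimizes cost against both $S^{0}$ and $S^{1}$.

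The principal obstacle is manufacturing the deterministic common minimizer on each $T_{j}$: Lemma~\ref{lemm:sw86lem3_1} delivers only an RDA, and the ``expectation equals minimum forces every support element to be a minimizer'' argument is what produces a deterministic algorithm from it, crucially relying on $S^{i}_{j}$ being a $P_{\mathrm{dir},i}$-achiever chosen in (1). A secondary subtlety in (2) is that $\gamma'$ is merely a randomized directional algorithm in the broad sense, not an RDA, so Proposition~\ref{prop:vonneumanneq} is needed to compare it with the deterministic minimum appearing in the statement.
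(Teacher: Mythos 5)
Your proof is correct, but it follows a genuinely different route from the paper's. The paper proves the lemma by a self-contained induction on the height of $T$: the distributions $S^i_j$ are built recursively via part (3) of the induction hypothesis applied to each $T_j$, and the key inequality in (2) is reduced, subtree by subtree (Claim~2), to the inductive hypothesis for (2) and (3) on $T_j$ — this reduction requires knowing that each $S^i_j$ itself has the nested $\langle\delta;\cdot\rangle$-structure. You avoid the induction entirely by leaning on results already proved before Lemma~\ref{lemm:sdelta_det}, namely Lemma~\ref{lemm:sw86lem3_1}, Theorem~\ref{theo:rdiri_is_di}(3), and Yao's principle (Proposition~\ref{prop:vonneumanneq}), which together give $d_i(T_j)=R_{\mathrm{dir},i}(T_j)=P_{\mathrm{dir},i}(T_j)$. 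This identity is what lets you take the $S^i_j$ to be \emph{arbitrary} $P_{\mathrm{dir},i}$-achievers (chosen merely by compactness) rather than recursively structured ones, and what lets the single RDA $A^{*}_j$ of Lemma~\ref{lemm:sw86lem3_1} serve as a uniformly good replacement against any conditional $S^{\sigma}_{k,i}\in\mathcal{D}(\Omega_i(T_{\sigma(k)}))$ — the inequality $\mathrm{cost}(A^{*}_{\sigma(k)},S^{\sigma}_{k,i})\le d_i(T_{\sigma(k)})=P_{\mathrm{dir},i}(T_{\sigma(k)})\le\mathrm{cost}(\gamma_{\sigma(k)},S^{i}_{\sigma(k)})$ is exactly what the paper instead obtains via its Claim~2 induction. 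Your ``support of an $d_0$-and-$d_1$-achieving RDA consists entirely of simultaneous $P_{\mathrm{dir},0}$- and $P_{\mathrm{dir},1}$-minimizers'' observation for part (3) is also new relative to the paper, which instead gets $\tilde\alpha$ by unwinding the decomposition with the inductive (3). What the paper's route buys is independence from Section~3 and an explicit recursive description of the achieving distributions; what yours buys is a shorter, non-inductive argument once Theorem~\ref{theo:rdiri_is_di} is in hand. One small point you should make explicit: when you assert $\min_\gamma\mathrm{cost}(\gamma,S^1)=\sum_j P_{\mathrm{dir},1}(T_j)=P_{\mathrm{dir},1}(T)$, the equality $P_{\mathrm{dir},1}(T)=\sum_j P_{\mathrm{dir},1}(T_j)$ deserves a one-line justification (every directional algorithm probes every subtree when the AND root gets value $1$, so the cost against any $S\in\mathcal{D}(\Omega_1(T))$ splits as $\sum_j\mathrm{cost}(\alpha_j,S_j)$ with independently chosen $\alpha_j$, yielding both the upper bound and, with the product of achievers, equality).
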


\begin{proof}
Our proof consists of a good amount of work. Proofs of some equations are elementary but not short. To avoid sidetracking discussion, some of them will be given in Appendix.

We show by induction on the height of $T$. We only see the case where the root node of $T$ is an AND node. (The OR case is similar.) It is straightforward if the height of $T$ is 1. For each $j\in\{1,\ldots,n\}$, we set $S^0_j \in \mathcal{D}( \Omega_0 (T_j) )$ and $S^1_j \in \mathcal{D}( \Omega_1 (T_j) )$ as follows.

If $T_j$ is a leaf, then $S^0_j$ and $S^1_j$ are trivial. Suppose $T_j$ has height greater than 0. Let $T_{j,1},\ldots,T_{j,m}$ be the depth-1 subtrees of $T_j$. By induction hypothesis, there exists a series of distribution $\{S^i_{j,k}\}_{i\in\{0,1\},k\in\{1,\ldots,m\}}$ which satisfies (1), (2) and (3) regarding $T_j$. In particular, we set $S^0_j$ and $S^1_j$ as in (3) of $T_j$. Note that if $T_j$ has OR root, we have

\begin{equation} \label{eq:subtree0dist}
S^{0}_{j} (\xi) := \prod_{k=1}^m S^0_{j,k} (\xi_{k}), 
\quad
S^{1}_{j} := \langle \delta_{j,1} ; \{ S_{j,k}^{i} \}_{k,i} \rangle ~ \text{ (for some $\delta_{j,1}$)}, 
\end{equation}

\begin{equation} \label{eq:subtree1dist}
S^{0}_{j} := \langle \delta_{j,0} ; \{ S_{j,k}^{i} \}_{k,i} \rangle ~ \text{ (for some $\delta_{j,0}$)}, 
\quad  
S^1_j (\xi) := \prod_{k=1}^m S^1_{j,k} (\xi_{k}).
\end{equation}

(1) clearly holds by definition of $S^0_j$ and $S^1_j$.

(2). Let $\sigma$ be a permutation on $\{1,\ldots,n\}$. If a directional algorithm $\alpha \in \mathcal{A}_{\mathrm{dir}}(T)$ evaluates the depth-1 subtrees of $T$ in the order of $T_{\sigma(1)}, \ldots, T_{\sigma(n)}$, then we say $\alpha$ is an {\it order $\sigma$} algorithm. We show the following.

\ 

{\bf Claim 1.} For each permutation $\sigma \in \mathfrak{S}_n$, we have
\begin{equation} \label{eq:mincostineq_sigma}
\min_{\alpha: \text{ order } \sigma} \mathrm{cost} (\alpha, S) \leq \min_{\alpha: \text{ order } \sigma} \mathrm{cost} (\alpha, S^\delta) ,
\end{equation}

\ 

(Proof of Claim 1.) Let $\alpha$ be a directional algorithm of order $\sigma$. Without loss of generality, we can assume that $\sigma = (1,2,\ldots,n)$, that is, $\alpha$ evaluates the subtrees $T_1,\ldots,T_n$ in numerical order. For each $j\in\{1,\ldots,n\}$, let $\alpha_j$ denote the $T_j$ part of $\alpha$. Note that since $\alpha$ is directional, each $\alpha_j$ does not depend on the query-answer history. We have
\begin{equation} \label{eq:generalcost_dir}
\begin{aligned}
& \mathrm{cost} (\alpha,S) \\
=& \sum_{j=1}^n \mathrm{Pr}_S [ \alpha \text{ evaluates } T_j] \\
& \sum_{\omega_j \in \Omega(T_j) } \mathrm{Pr}_S [ \omega_j \text{ is assigned to } T_j | \alpha \text{ evaluates } T_j ] \mathrm{cost} (\alpha_j,\omega_j) \\
=& \sum_{j=1}^n \mathrm{Pr}_S [ \alpha \text{ evaluates } T_j] \mathrm{cost} (\alpha_j,S_j),
\end{aligned}
\end{equation}
where $S_j\in\mathcal{D}(\Omega(T_j))$ is defined as
\begin{equation} \label{eq:defpartialdist}
S_j(\xi) = \mathrm{Pr}_S [ \xi \text{ is assigned to } T_j| \alpha \text{ evaluates } T_j ]
\end{equation}
for each $j\in\{1,\ldots,n\}$. Similarly, we have
\begin{equation} \label{eq:deltacost_dir}
\mathrm{cost} (\alpha,S^\delta) = \sum_{j=1}^n \mathrm{Pr}_{S^\delta} [ \alpha \text{ evaluates } T_j] \mathrm{cost} (\alpha_j,S^\delta_j),
\end{equation}
where $S^\delta_j\in\mathcal{D}(\Omega(T_j))$ is defined as
\begin{equation} \label{eq:defpartialdistb}
S^\delta_j(\xi) = \mathrm{Pr}_{S^\delta} [ \xi \text{ is assigned to } T_j| \alpha \text{ evaluates } T_j ]
\end{equation}

\noindent
for each $j\in\{1,\ldots,n\}$. Note that $S_j$ [resp. $S^\delta_j$] is determined by $S$ [resp. $S^\delta$] and $\sigma$, but not by specific procedure of $\alpha$ in each subtrees. By definition of $S^\delta$ 
and Lemma~\ref{lemm:prsdelta_deltau}, we have

\begin{equation} \label{eq:evalprob}
\mathrm{Pr}_S [ \alpha \text{ evaluates } T_j] = \mathrm{Pr}_{S^\delta} [ \alpha \text{ evaluates } T_j]
\end{equation}

\noindent
for each $j\in\{1,\ldots,n\}$. We denote this value by $p_j$. Similarly, we have

\begin{equation} \label{eq:zeroprob}
\mathrm{Pr}_S [v_j \text{ is } 0 \ | \ \alpha \text{ evaluates } T_j] = \mathrm{Pr}_{S^\delta} [v_j \text{ is } 0 \ | \ \alpha \text{ evaluates } T_j],
\end{equation}

\noindent
for each $j\in\{1,\ldots,n\}$. We denote this value by $q_j$. For each $j$, we set $U^0_j \in \mathcal{D}(\Omega_0(T_j))$ and $U^1_j \in \mathcal{D}(\Omega_1(T_j))$ as

\begin{equation} \label{eq:defpartialdist_root0}
U^0_j (\xi) = \mathrm{Pr}_S [\xi \text{ is assigned to } T_j \ | \ \alpha \text{ evaluates } T_j \text{ and } v_j \text{ is } 0]
\end{equation}

\noindent
and

\begin{equation} \label{eq:defpartialdist_root1}
U^1_j (\xi) = \mathrm{Pr}_S [\xi \text{ is assigned to } T_j \ | \ \alpha \text{ evaluates } T_j \text{ and } v_j \text{ is } 1] .
\end{equation}

\noindent
We now have the following. See Appendix for a detailed proof.
\begin{equation} \label{eq:partialdistcomp}
S_j = q_j U^0_j + (1-q_j) U^1_j, \quad S^\delta_j = q_j S^0_j + (1-q_j) S^1_j.
\end{equation}

\noindent
We now show the following, which is the key fact to prove Claim 1.

\ 

{\bf Claim 2.} For each $j\in\{1,\ldots,n\}$, we have 
\begin{equation} \label{eq:partialcostineq} 
\displaystyle \min_{\alpha_j \in \mathcal{A}_{\mathrm{dir}}(T_{j})} \mathrm{cost} (\alpha_j,S_j) \leq \min_{\alpha_j \in \mathcal{A}_{\mathrm{dir}}(T_{j})} \mathrm{cost} (\alpha_j,S^\delta_j).
\end{equation}

\ 

(Proof of Claim 2.) We only show the case where the root node of $T_j$ is an OR node. (The AND case is similar). It is clear if $T_j$ is a leaf, so we assume that the height of $T_j$ is nonzero. By induction hypothesis of (2), the distribution $S' := \langle \delta_{S_j} ; \{ S^i_{j,k} \}_{k,i} \rangle$ satisfies 

\begin{equation} \label{eq:subtreecostineq}
\min_{\alpha_j \in \mathcal{A}_{\mathrm{dir}} (T_j) } \mathrm{cost} (\alpha_j,S_j) \leq \min_{\alpha_j \in \mathcal{A}_{\mathrm{dir}} (T_j) } \mathrm{cost} (\alpha_j,S').
\end{equation}

Furthermore, since $T_j$ has OR root, for each $\xi \in \Omega_{0} (T_{j})$, we have $\mathbf{t}^{\xi} = 0^m$, where $m$ is the number of child nodes of $T_{j}$. 
By means of this fact, we can see that
\begin{equation} \label{eq:Sprimewritten}
S' = q_j S^0_j + (1-q_j) S'_1,
\end{equation}
where $S'_1 = \langle \delta_{U^1_j} ; \{ S^i_{j,k} \}_{k,i} \rangle$. See Appendix for a detailed proof of \eqref{eq:Sprimewritten}.
 
Now, let $\beta \in \mathcal{A}_{\mathrm{dir}} (T_j)$ be a minimizer of $S'_1$, that is, a directional algorithm which satisfies
\begin{equation} \label{eq:Sprime1minimizer}
\mathrm{cost} (\beta,S'_1) = \min_{\alpha_j \in \mathcal{A}_{\mathrm{dir}} (T_j) } \mathrm{cost} (\alpha_j,S'_1).
\end{equation} 
Without loss of generality, we can assume that $\beta$ evaluates the subtrees $T_{j,1},\ldots,T_{j,m}$ in numerical order. Then we have
\begin{equation} \label{eq:betacostSprime1}
\begin{aligned}
& \mathrm{cost} (\beta,S'_1) \\
=& \sum_{\mathbf{t}' \in \{0,1\}^m } \delta_{U^1_j} (\mathbf{t}') \sum_{k=1}^m (1-t'_1) \cdots (1-t'_{k-1}) \mathrm{cost} (\beta_k,S^{t'_k}_{j,k}) \\
\geq & \sum_{\mathbf{t}' \in \{0,1\}^m } \delta_{U^1_j} (\mathbf{t}') \sum_{k=1}^m (1-t'_1) \cdots (1-t'_{k-1}) \min_{\gamma \in \mathcal{A}_{\mathrm{dir}} (T_{j,k}) }\mathrm{cost} (\gamma,S^{t'_k}_{j,k}) .
\end{aligned}
\end{equation}
Note that in these equations, $(1-t'_1) \cdots (1-t'_{j-1}) \neq 0$ if and only if $t'_1=\cdots= t'_{k-1}=0$, which holds when $\beta$ evaluates the root values of $T_{j,1},\ldots,T_{j,k-1}$ as all 0 and succeeds to evaluation of $T_{j,k}$. Otherwise, the computation halts before the evaluation of $T_{j,k}$.

By induction hypothesis of (3), without loss of generality, we can assume that for each $k \in \{1,\ldots,m\}$, $\beta_k$ minimizes the cost of both $S^0_{j,k}$ and $S^1_{j,k}$. This implies that
\begin{equation} \label{eq:betacostSprime0}
\begin{aligned}
 \mathrm{cost} (\beta,S^0_j) 
=& \mathrm{cost} (\beta,S'_0) \\
=& \sum_{k=1}^m \mathrm{cost} (\beta_k,S^0_{j,k}) \\
=& \sum_{k=1}^m \min_{\gamma \in \mathcal{A}_{\mathrm{dir}} (T_{j,k}) } \mathrm{cost} (\gamma,S^0_{j,k}) \\
\leq & \min_{\alpha_j \in \mathcal{A}_{\mathrm{dir}}(T_j)} \sum_{k=1}^m \mathrm{cost} ((\alpha_j)_k,S^0_{j,k}) \\
=& \min_{\alpha_j \in \mathcal{A}_{\mathrm{dir}}(T_j)} \mathrm{cost} (\alpha_j,S'_0) \\
=& \min_{\alpha_j \in \mathcal{A}_{\mathrm{dir}}(T_j)} \mathrm{cost} (\alpha_j,S^0_j),
\end{aligned}
\end{equation}
that is, $\beta$ also minimizes the cost of $S^0_j$. Hence, by definition of $S^1_j$ and \eqref{eq:Sprimewritten}, we have
\begin{equation} \label{eq:partialmincost}
\begin{aligned}
& \min_{\alpha_j \in \mathcal{A}_{\mathrm{dir}}(T_j)} \mathrm{cost} (\alpha_j,S') \\
=& \min_{\alpha_j \in \mathcal{A}_{\mathrm{dir }}(T_j)}  [q_j \mathrm{cost} (\alpha_j,S^0_j) + (1-q_j) \mathrm{cost} (\alpha_j,S'_1)] \\
=& q_j \mathrm{cost} (\beta,S^0_j) + (1-q_j) \mathrm{cost} (\beta,S'_1) \\
=& q_j \min_{\alpha_j \in \mathcal{A}_{\mathrm{dir }}(T_j)}  \mathrm{cost} (\alpha_j,S^0_j) + (1-q_j) \min_{\alpha_j \in \mathcal{A}_{\mathrm{dir}(T)}} \mathrm{cost} (\alpha_j,S'_1) \\
\leq & q_j \min_{\alpha_j \in \mathcal{A}_{\mathrm{dir }}(T_j)}  \mathrm{cost} (\alpha_j,S^0_j) + (1-q_j) \min_{\alpha_j \in \mathcal{A}_{\mathrm{dir}(T)}} \mathrm{cost} (\alpha_j,S^1_j) \\
\leq & \min_{\alpha_j \in \mathcal{A}_{\mathrm{dir }}(T_j)}  [q_j \mathrm{cost} (\alpha_j,S^0_j) + (1-q_j) \mathrm{cost} (\alpha_j,S^1_j)] \\
=& \min_{\alpha_j \in \mathcal{A}_{\mathrm{dir }}(T_j)}  \mathrm{cost} (\alpha_j,S^\delta_j).
\end{aligned}
\end{equation}
By \eqref{eq:subtreecostineq} and \eqref{eq:partialmincost}, we get \eqref{eq:partialcostineq}. (End of the proof of Claim 2.)

Considering that $p_j$, $q_j$ and $S_j$ depend only on $S$ and $\sigma$, Claim 2 implies that
\begin{equation} \label{eq:fixedordermincost}
\begin{aligned}
 \min_{\alpha: \text{ order } \sigma} \mathrm{cost} (\alpha, S) 
=& \min_{\alpha: \text{ order } \sigma} \left[ \sum_{j=1}^n p_j \mathrm{cost} (\alpha_j,S_j) \right] \\
=& \sum_{j=1}^n p_j \min_{\alpha_j \in \mathcal{A}_{\mathrm{dir }}(T_j)} \mathrm{cost} (\alpha_j,S_j) \quad 
\text{[by $\alpha$ directional]}\\
\leq & \sum_{j=1}^n p_j \min_{\alpha_j \in \mathcal{A}_{\mathrm{dir }}(T_j)}  \mathrm{cost} (\alpha_j,S^\delta_j) \\ 
=& \min_{\alpha: \text{ order } \sigma} \left[ \sum_{j=1}^n p_j \mathrm{cost} (\alpha_j,S^\delta_j) \right] \\
=& \min_{\alpha: \text{ order } \sigma} \mathrm{cost} (\alpha, S^\delta).
\end{aligned}
\end{equation}
(End of the proof of Claim 1.)

By Claim 1, we have
\begin{equation} \label{eq:globalmincostineq}
\begin{aligned}
\min_{\alpha \in \mathcal{A}_\mathrm{dir}(T)} \mathrm{cost} (\alpha, S)
=& \min_{\sigma\in\mathfrak{S}_n} \min_{\alpha: \text{ order } \sigma} \mathrm{cost} (\alpha, S) \\
\leq & \min_{\sigma\in\mathfrak{S}_n} \min_{\alpha: \text{ order } \sigma} \mathrm{cost} (\alpha, S^\delta) \\
=& \min_{\alpha \in \mathcal{A}_\mathrm{dir}(T)} \mathrm{cost} (\alpha, S^\delta),
\end{aligned}
\end{equation}
where $\sigma$ runs the permutations on $\{1,\ldots,n\}$. This completes the proof of (2).

(3) By (2) and $T$ having AND root, the distribution
\begin{equation} \label{eq:definS1}
S^1 := \prod_{j=1}^n S^1_j
\end{equation}
achieves $P_{\mathrm{dir},1}(T)$. Let $\tilde{S} \in \mathcal{D}(\Omega_0(T))$ be a distribution which achieves $P_{\mathrm{dir},0}(T)$ and
\begin{equation} \label{eq:definS0}
S^0 := \langle \delta_{\tilde{S}} ; \{ S^i_j \}_{j,i} \rangle = \sum_{\mathbf{t}\in\{0,1\}^n} \delta_{\tilde{S}} (\mathbf{t}) \prod_{j=1}^n S^{t_j}_j .
\end{equation}
By (2), $S^0$ also achieves $P_{\mathrm{dir},0}(T)$. Let $\tilde{\alpha} \in \mathcal{A}_{\mathrm{dir}} (T)$ be a minimizer of $S^0$, that is, a directional algorithm which satisfies
\begin{equation} \label{eq:S0minimizer}
\mathrm{cost} (\tilde{\alpha},S^0) = \min_{\alpha \in \mathcal{A}_{\mathrm{dir}} (T) } \mathrm{cost} (\alpha,S^0).
\end{equation}
Without loss of generality, we can assume that $\tilde{\alpha}$ evaluates the subtrees $T_1,\ldots,T_n$ in numerical order. Then we have
\begin{equation} \label{eq:S0mincost}
 \mathrm{cost} (\tilde{\alpha},S^0) 
= \sum_{\mathbf{t}\in\{0,1\}^n} \delta_{\tilde{S}} (\mathbf{t}) \sum_{j=1}^n t_1 \cdots t_{j-1} \mathrm{cost} (\tilde{\alpha}_j, S^{t_j}_j).
\end{equation}
Thus, by induction hypothesis of (3), we can assume that for each $j\in\{1,\ldots,n\}$, $\alpha_j$ minimizes the cost of both $S^0_j$ and $S^1_j$. This implies that
\begin{equation} \label{eq:S1mincost}
\begin{aligned}
 \mathrm{cost} (\tilde{\alpha},S^1)
=& \sum_{j=1}^n \mathrm{cost} (\tilde{\alpha}_j,S^1_j) \\
=& \sum_{j=1}^n \min_{\alpha_j \in \mathcal{A}_{\mathrm{dir}} (T_j) } \mathrm{cost} (\alpha_j,S^1_j) \\
\leq & \min_{\alpha \in \mathcal{A}_{\mathrm{dir}} (T) } \sum_{j=1}^n \mathrm{cost} (\alpha_j,S^1_j)\\
=& \min_{\alpha \in \mathcal{A}_{\mathrm{dir}} (T) } \mathrm{cost} (\alpha,S^1),
\end{aligned}
\end{equation}
in other words, $\tilde{\alpha}$ also minimizes the cost of $S^1$.
\end{proof}

\begin{lemm}
\label{lemm:sdelta_ndir0} 
Let $T$ be an AND--OR tree with nonzero height. Let $v_1,\ldots,v_n$ be the depth-1 nodes of $T$, and $T_1,\ldots,T_n$ the corresponding subtrees. For each $S \in \mathcal{D}(\mathcal{W}(T))$, set $\delta_S \in \mathcal{D}( \{ 0,1 \}^{n} )$ as
\begin{equation} \label{eq:prszerov}
\delta_S (t_{1} \cdots t_{n}) = \mathrm{Pr}_{S} [ \text{$\bar{v}$ is $\mathbf{t}$} ] .
\end{equation}
Then there exists a series of distributions $\{S^i_j\}_{i\in\{0,1\},j\in\{1,\ldots,n\}}$ which satisfies the following.

(1) For each $i$ and $j$, we have $S^i_j \in \mathcal{D}(\Omega_i(T_j))$ and it achieves $\overline{P}_{\mathrm{dir},i}(T_j)$, that is,
\[
\min_{A \in \mathcal{D} ( \mathcal{A}_{\mathrm{dir}} (T_j) )} \mathrm{cost} (A,S^i_j) = \overline{P}_{\mathrm{dir},i} (T_j).
\]

(2) For any $S \in \mathcal{D}(\Omega (T))$, we have

\begin{equation} \label{eq:costzdeltabars0}
\min_{Y \in \mathcal{D} ( \mathcal{A}_\mathrm{dir}(T) )} \mathrm{cost} (Y, S)
\leq 
\min_{Y \in \mathcal{D} ( \mathcal{A}_\mathrm{dir}(T) )} \mathrm{cost} (Y, S ^{\delta}),
\end{equation}

where $S^{\delta}$ denotes $\langle \delta_S; \{ S_{j}^{i} \}_{j,i} \rangle$. 
\end{lemm}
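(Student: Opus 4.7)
The plan is to reduce Lemma~\ref{lemm:sdelta_ndir0} to Lemma~\ref{lemm:sdelta_det} via Proposition~\ref{prop:vonneumanneq}. The key observation is that for any fixed distribution $S$ on assignments, $\mathrm{cost}(A,S)$ is linear (indeed, convex) in $A \in \mathcal{D}(\mathcal{A}_{\mathrm{dir}}(T))$, so the minimum over the simplex $\mathcal{D}(\mathcal{A}_{\mathrm{dir}}(T))$ is attained at a vertex, i.e., at a deterministic directional algorithm. This is precisely what Proposition~\ref{prop:vonneumanneq}(3) encodes, and Proposition~\ref{prop:vonneumanneq}(4) tells us that the quantities $\overline{P}_{\mathrm{dir},i}$ and $P_{\mathrm{dir},i}$ coincide.

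First, I would invoke Lemma~\ref{lemm:sdelta_det} to obtain the series $\{S^i_j\}_{i,j}$ satisfying its conclusions (1) and (2). Note that the hypothesis ``$S \in \mathcal{D}(\mathcal{W}(T))$'' in the current lemma is read as $S \in \mathcal{D}(\Omega(T))$, matching the hypothesis of Lemma~\ref{lemm:sdelta_det}.

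For property (1), I would apply Proposition~\ref{prop:vonneumanneq}(3) with $\mathcal{A}' = \mathcal{A}_{\mathrm{dir}}(T_j)$ and $S = S^i_j$ to obtain
\begin{equation*}
\min_{A \in \mathcal{D}(\mathcal{A}_{\mathrm{dir}}(T_j))} \mathrm{cost}(A, S^i_j) = \min_{\alpha \in \mathcal{A}_{\mathrm{dir}}(T_j)} \mathrm{cost}(\alpha, S^i_j) = P_{\mathrm{dir},i}(T_j),
\end{equation*}
where the second equality holds because $S^i_j$ achieves $P_{\mathrm{dir},i}(T_j)$ by Lemma~\ref{lemm:sdelta_det}(1). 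Then Proposition~\ref{prop:vonneumanneq}(4) gives $P_{\mathrm{dir},i}(T_j) = \overline{P}_{\mathrm{dir},i}(T_j)$, yielding the required ``$S^i_j$ achieves $\overline{P}_{\mathrm{dir},i}(T_j)$'' in the randomized sense.

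For property (2), the same appeal to Proposition~\ref{prop:vonneumanneq}(3), applied with $\mathcal{A}' = \mathcal{A}_{\mathrm{dir}}(T)$ and with $S$ and $S^{\delta}$ in turn, gives
\begin{equation*}
\min_{Y \in \mathcal{D}(\mathcal{A}_{\mathrm{dir}}(T))} \mathrm{cost}(Y, S) = \min_{\alpha \in \mathcal{A}_{\mathrm{dir}}(T)} \mathrm{cost}(\alpha, S),
\end{equation*}
and analogously for $S^{\delta}$. Inequality \eqref{eq:costzdeltabars0} then follows immediately from \eqref{eq:costzdeltabars0_det} in Lemma~\ref{lemm:sdelta_det}(2). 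I do not expect any real obstacle here, since all the combinatorial work—the inductive construction of $S^i_j$ on subtrees, and the fixed-order argument reducing the comparison of $\mathrm{cost}(\alpha,S)$ versus $\mathrm{cost}(\alpha,S^{\delta})$ to the per-subtree comparison—has already been carried out in the proof of Lemma~\ref{lemm:sdelta_det}. The present lemma is essentially a translation into the language of randomized algorithms, and its only content beyond Lemma~\ref{lemm:sdelta_det} is the routine invocation of Proposition~\ref{prop:vonneumanneq}.
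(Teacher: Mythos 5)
Your proposal is correct and follows essentially the same route as the paper's own proof: take the $\{S^i_j\}$ from Lemma~\ref{lemm:sdelta_det} and transfer both conclusions to the randomized setting via Proposition~\ref{prop:vonneumanneq} (parts (3) and (4)). Your reading of ``$\mathcal{D}(\mathcal{W}(T))$'' as $\mathcal{D}(\Omega(T))$ is also the intended one.
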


\begin{proof}
Let a series of distributions $\{S^i_j\}_{i\in\{0,1\},j\in\{1,\ldots,n\}}$ be as in Lemma \ref{lemm:sdelta_det}. Then, by Proposition \ref{prop:vonneumanneq}, each $S^i_j$ also achieves $\overline{P}_{\mathrm{dir},i} (T_j)$, which meets (1). Again by the lemma and proposition, for any $S \in \mathcal{D} ( \Omega (T) )$, we get
\begin{equation} \label{eq:dettorand}
\begin{aligned}
 \min_{Y \in \mathcal{D} ( \mathcal{A}_{\mathrm{dir}} (T) )} \mathrm{cost} (Y,S)
=& \min_{\alpha \in \mathcal{A}_{\mathrm{dir}} (T)} \mathrm{cost} (\alpha,S) \\
\leq & \min_{\alpha \in \mathcal{A}_{\mathrm{dir}} (T)} \mathrm{cost} (\alpha,S^\delta) \\
=& \min_{Y \in \mathcal{D} ( \mathcal{A}_{\mathrm{dir}} (T) )} \mathrm{cost} (Y,S^\delta),
\end{aligned}
\end{equation}
where $S^\delta = \langle \delta_S; \{ S_{j}^{i} \}_{j,i} \rangle$. This implies (2).
\end{proof}

\subsection{Construction of the ``chimera'' algorithm} \label{subsection:main}

Now we are going to define the ``chimera'' algorithm $B_{\alpha,\delta} \in \mathcal{D} (\mathcal{A}_\mathrm{DF}(T))$, which is the most important tool of the main result. The ``chimera" nickname of this algorithm arises from the fact that it is composed of parts extracted from various algorithms and probability distributions.

For each $j \in \{ 1,\dots, n\}$, suppose that $Y_{j} \in \mathcal{D}(\mathcal{A}_\mathrm{dir} (T_{j}) )$ and $S_{j}^{i} \in \mathcal{D}( \Omega_{i} (T_{j}) )$. 
Suppose that $\alpha \in \mathcal{A}_\mathrm{DF} (T)$. Note that $\alpha$ is not necessarily a directional algorithm. 
Let $\delta \in \{ 0, 1 \}^{n}$, and let $S^{\delta} = \langle \delta; \{ S_{j}^{i} \}_{j,i} \rangle$. 
When we define $B_{\alpha,\delta}$, we use not only $\alpha$ and $\delta$ but also $\{ Y_{j} \}_{j}$ and $\{ S_{j}^{i} \}_{j,i}$. However we omit writing $\{ Y_{j} \}_{j}$ and $\{ S_{j}^{i} \}_{j,i}$ in the suffix of $B_{\alpha,\delta}$.  

Recall that each element of $\mathcal{A}(T)$ is a Boolean decision tree such that each node is a leaf of $T$, each internal node of this Boolean decision tree has one incoming arrow and two outgoing arrows, and one outgoing arrow is labeled with Boolean value 0, and the other is labeled with Boolean value 1. 
We may regard an element of $\mathcal{D}(\mathcal{A} (T))$ as a probabilistic Boolean decision tree whose root is the empty string and each internal node may have plural outgoing arrows labeled with the same Boolean value; besides a Boolean value, an outgoing arrow is labeled with (conditional) probability so that the sum of probabilities of outgoing arrows from the same internal node with the same Boolean value is 1. 

\begin{defi} \label{defi:balphadelta}
We define $B_{\alpha,\delta}$ as follows.
\begin{itemize}
\item Let $j_{1}$ be the $j \in \{ 1,\dots, n \}$ such that $\alpha$ begins its computation with probing $T_{j}$. 
$B_{\alpha, \delta}$ begins with probing $T_{j_{1}}$ with probability 1.
\item For each $j \in \{ 1,\dots, n \}$, whenever $B_{\alpha, \delta}$ probes $T_{j}$, $B_{\alpha, \delta}$ works as $Y_{j}$. 
The fact that the definition of $Y_{j}$ is independent of $i \in \{ 0,1 \}$ (= a constraint on the value of $v_{j}$) works here. 
\item For each $k \in \{ 1,\dots, n \}$, we let ``$T_{j_{1}}, \dots, T_{j_{k}}$'' denote the following event: ``From the beginning of computation, $T_{j_{1}}, \dots, T_{j_{k}}$ are probed in this order (the computation may or may not halt with $T_{j_{k}}$.)'' 

\item For each $k \in \{ 1,\dots, n-1 \}$ and for each injection $\sigma : \{ 1,\dots, k+1 \} \to \{ 1, \dots, n \}$,
we define $B_{\alpha, \delta}$'s transition probability as follows. 
Suppose that $m$ is an internal node of $B_{\alpha,\delta}$ as a probabilistic Boolean decision tree (see the paragraph just before Definition) such that  just after $T_{\sigma (1)}, \dots, T_{\sigma (k)}$ happened $B_{\alpha,\delta}$  founds that $v_{\sigma (k)}$ is 1 at $m$ (provided that the root of $T$ is labeled with AND; ``0 at $m$'' for OR). 
For each outgoing arrow from $m$ to a leaf $\ell$ of $T_{\sigma (k+1)}$, which we denote by arrow $(m,\ell)$, we define its (conditional) probability as follows. 
\begin{align}
& \text{(The probability with which arrow $(m,\ell)$ is labeled)}
\notag
\\
:=&
\mathrm{Pr}_{\alpha, S^{\delta}} [T_{\sigma (1)}, \dots, T_{\sigma (k)}, T_{\sigma (k+1)} | T_{\sigma (1)}, \dots, T_{\sigma (k)}]
\notag
\\
& ~ \times \mathrm{Pr}_{Y_{\sigma (k+1)}} [\text{$Y_{\sigma (k+1)}$ begins with $\ell$} ]
\label{eq:transprob_b}
\end{align}
\end{itemize}
This completes the definition of $B_{\alpha,\delta}$.
\end{defi}

\begin{exam} \label{exam:evaluates1}
Suppose that $k=3 < n$, $i \in \{ 0,1 \}$ and $t_{j} \in \{ 0,1 \}$ ($j=1,2$).
\begin{enumerate}
\item 
Suppose that $\omega_{j} \in \Omega_{t_{j}} (T_{j})$ ($j=1,2$). 
Then, the following quantity is determined only by $k(=3)$, $i$, $t_{1}$, and $t_{2}$ without depending on the choice of $(\omega_{1}, \omega_{2})$. Let $p(k,i,t_{1},t_{2})$ be the common value. 

\begin{equation} \label{eq:eval1}
\mathrm{Pr}_{S^{\delta}} [v_{3} \text{ is } i |  \omega_{1}, \omega_{2} \text{ are assigned to } T_{1}, T_{2} \text{ respectively.} ]
\end{equation}
\item Suppose that $E$ is an event on $\Omega_{t_{1}} (T_{1}) \times \Omega_{t_{2}} (T_{2})$, that is, an event implying that the roots of $T_{1}$ and $T_{2}$ have value $t_{1}, t_{2}$, respectively. Then we have the following. In particular, $\mathrm{Pr}_{S^{\delta}} [v_{3} \text{ is } i | E ]$ is determined only by $k$, $i$, $t_{1}$, and $t_{2}$ without depending on the choice of $E$. 

\begin{equation} \label{eq:eval2}
\mathrm{Pr}_{S^{\delta}} [v_{3} \text{ is } i | E ]
=
p(3,i,t_{1},t_{2})
\end{equation}
\item 
Assume that the root of $T$ is labeled with AND. Then we have the following. 
\begin{equation} \label{eq:eval3}
\mathrm{Pr}_{\alpha, S^{\delta}} [v_{3} \text{ is } i | T_{1},T_{2},T_{3} ]
=
p(3,i,1,1)
\end{equation} 
\end{enumerate}
\end{exam}

\begin{proof}
(1) In the following, we denote event ``$\omega_{1}, \omega_{2}$ are assigned to $T_{1}, T_{2}$ respectively'' by ``$\omega_{1}, \omega_{2}$'', and denote event ``$\omega_{3}$ is  assigned to $T_{3}$ '' by ``$\omega_{3}$''. Similar convention applies to ``$\omega_{1}, \omega_{2}, \omega_{3}$'' as well. 

\begin{align} \label{eq:evalpr1}
\mathrm{Pr}_{S^{\delta}} [v_{3} \text{ is } i | \omega_{1}, \omega_{2} ]
&=\sum_{ \xi \in \Omega_{i} (T_{3})} \mathrm{Pr}_{S^{\delta}} [ \xi | \omega_{1}, \omega_{2} ]
\notag
\\
&=
\sum_{\xi \in \Omega_{i} (T_{3})} 
\frac{ \mathrm{Pr}_{S^{\delta}} [ \omega_{1}, \omega_{2}, \xi ] }{ \mathrm{Pr}_{S^{\delta}} [ \omega_{1}, \omega_{2} ] }
\notag
\\
&=
\sum_{\xi \in \Omega_{i} (T_{3})} 
\frac{ \sum_{\omega} S^{\delta} ( \omega_{1} \omega_{2} \xi \omega_{4} \cdots \omega_{n} ) }{ \sum_{\eta \in \Omega (T_{3})} \sum_{\omega} S^{\delta}  ( \omega_{1} \omega_{2} \eta \omega_{4} \cdots \omega_{n} ) }
\notag
\\
& \text{[$\omega =(\omega_{4},\dots,\omega_{n})$ runs over $\Omega (T_{4}) \times \cdots \times \Omega (T_{n})$]}
\end{align}

Let $t_{j} \in \{ 0,1 \}$ be the value of $v_{j}$ determined by $\omega_{j}$ for $j=1,2$. 
For each $\eta$, let $i_{\eta} \in \{ 0,1 \}$ be the value of $v_{3}$ determined by $\eta$. 
For each $\omega$, let $\mathbf{t}^{\omega }=t_{4}\cdots t_{n} \in \{ 0,1 \}^{n-3}$ be the values of $v_{4}, \dots, v_{n}$ determined by $\omega$. 
Let $P=S^{t_{1}}_{1}(\omega_{1}) \cdot S^{t_{2}}_{2}(\omega_{2})$. 
By Lemma \ref{lemm:prsdelta_deltau}, we have the following. 

\begin{align} \label{eq:evalpr2}
\mathrm{Pr}_{S^{\delta}} [v_{3} \text{ is } i | \omega_{1}, \omega_{2} ]
&=
\frac
{\sum_{\xi  \in \Omega_{i} (T_{3})} \sum_{\omega} \delta ( t_{1} t_{2} i \mathbf{t}^{\omega}) P S^{i}_{3} (\xi) \prod_{j=4}^{n} S^{t_{j}}_{j} (t_{j})}
{\sum_{\eta  \in \Omega (T_{3})} \sum_{\omega} \delta ( t_{1} t_{2} i_{\eta} \mathbf{t}^{\omega}) P S^{i_{\eta}}_{3} (\eta)  \prod_{j=4}^{n} S^{t_{j}}_{j} (t_{j})}
\notag
\\
&=
\frac
{\sum_{\xi  \in \Omega_{i} (T_{3})} S^{i}_{3} (\xi) \sum_{\omega} \delta ( t_{1} t_{2} i \mathbf{t}^{\omega}) \prod_{j=4}^{n} S^{t_{j}}_{j} (t_{j})}
{\sum_{\eta  \in \Omega (T_{3})} S^{i_{\eta}}_{3} (\eta)  \sum_{\omega} \delta ( t_{1} t_{2} i_{\eta} \mathbf{t}^{\omega}) \prod_{j=4}^{n} S^{t_{j}}_{j} (t_{j})}
\end{align}

Now, the value of the last formula is determined only by $k(=3)$, $i$, $t_{1}$, and $t_{2}$ without depending on the choice of $(\omega_{1}, \omega_{2}) \in \Omega (T_{1}) \times \Omega (T_{2})$. 
Thus we have shown (1). 

Assertion (2) follows immediately from assertion (1). 

(3) 
Let $E$ be the event ``$T_{1},T_{2},T_{3}$'' with respect to $\alpha$. 
Then $E$ is an event on $\Omega_{1} (T_{1}) \times \Omega_{1} (T_{2})$. 
We have $\mathrm{Pr}_{\alpha, S^{\delta}} [v_{3} \text{ is } i | T_{1},T_{2},T_{3} ]= \mathrm{Pr}_{S^{\delta}} [v_{3} \text{ is } i | E]$. Since we assumed that the root of $T$ is labeled with AND, $E$ implies that the roots of $T_{1}$ and $T_{2}$ have value 1. Therefore we can apply (2) to $\mathrm{Pr}_{S^{\delta}} [v_{3} \text{ is } i | E]$, which equals $p(3,i,1,1)$. 

\end{proof}

\begin{lemm} \label{lemm:balphadeltaclaim}
\begin{enumerate}
\item For each $\alpha \in \mathcal{A}_\mathrm{DF}(T)$, we have $B_{\alpha, \delta} \in \mathcal{D} (\mathcal{A}_\mathrm{dir}(T))$ even if $\alpha$ is not directional.

\item We have $\mathrm{Pr} [\text{$B_{\alpha, \delta}$ evaluates $T_{j}$}] = \mathrm{Pr}_{S^{\delta}} [\text{$\alpha$ evaluates $T_{j}$}] $, where $\mathrm{Pr}$ without suffix denotes probability measured by $B_{\alpha,\delta}$ and $S^{\delta}$. 

\item For each $j \in \{ 1, \dots, n \}$ and $i \in \{ 0,1 \}$, wee have the following, where $\mathrm{Pr}$ without suffix is as above. 
\begin{equation}
\mathrm{Pr} [\text{$v_{j}$ has value $i$}|\text{$B_{\alpha, \delta}$ evaluates $T_{j}$}]
= \mathrm{Pr}_{S^{\delta}} [\text{$v_{j}$ has value $i$}|\text{$\alpha$ evaluates $T_{j}$}]
\end{equation} 
\end{enumerate}
\end{lemm}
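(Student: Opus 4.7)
For (1), each realization of $B_{\alpha,\delta}$---obtained by resolving the random transition choices together with the randomization inside each $Y_j$---specifies a fixed order in which subtrees are visited, together with a deterministic directional algorithm $\gamma_j \in \mathcal{A}_{\mathrm{dir}}(T_j)$ inside each visited subtree. The resulting leaf-query order is fixed subtree-by-subtree, with alpha--beta pruning precisely as permitted for elements of $\mathcal{A}_{\mathrm{dir}}(T)$ under Definition~\ref{defi:symbols}, so each realization lies in $\mathcal{A}_{\mathrm{dir}}(T)$. To see that $B_{\alpha,\delta}$ is a valid probability distribution, one checks that at each transition node $m$ (where $v_{\sigma(k)} = 1$ has just been recorded), the sum of outgoing arrow probabilities equals $1$: by \eqref{eq:transprob_b} this sum factors as
\[
\left(\sum_{\sigma(k+1)} \mathrm{Pr}_{\alpha,S^\delta}\bigl[T_{\sigma(k+1)} \text{ next} \mid T_{\sigma(1)},\ldots,T_{\sigma(k)}, v_{\sigma(k)}=1\bigr]\right) \cdot \left(\sum_\ell \mathrm{Pr}_{Y_{\sigma(k+1)}}[\ell \text{ first}]\right),
\]
and both factors equal $1$---the second because $Y_{\sigma(k+1)} \in \mathcal{D}(\mathcal{A}_{\mathrm{dir}}(T_{\sigma(k+1)}))$, and the first because at $m$ we have $v_{\sigma(k)}=1$, so $\alpha$ must continue to some next subtree (AND-root case).

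For (2) and (3) I will prove the following strengthened claim by induction on the prefix length $k$: for every injection $\tau:\{1,\ldots,k\}\to\{1,\ldots,n\}$ and every $\mathbf{t} \in \{0,1\}^k$, writing $E_{\tau,\mathbf{t}}$ for the event that the first $k$ subtrees visited are $T_{\tau(1)},\ldots,T_{\tau(k)}$ in this order and $v_{\tau(j)} = t_j$ for each $j$,
\[
\mathrm{Pr}_{B_{\alpha,\delta},S^\delta}[E_{\tau,\mathbf{t}}] = \mathrm{Pr}_{\alpha,S^\delta}[E_{\tau,\mathbf{t}}].
\]
The base case $k=1$ holds because both algorithms begin with $T_{j_1}$ deterministically and the $S^\delta$-marginal of $v_{j_1}$ is intrinsic to $S^\delta$. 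The induction step $k \to k+1$ factors $\mathrm{Pr}[E_{\tau,\mathbf{t}}]$ as (probability of the $k$-truncation) times (probability of transitioning to $T_{\tau(k+1)}$) times (probability of $v_{\tau(k+1)} = t_{k+1}$ given the prefix history). The first factor matches by inductive hypothesis; the second matches by the very definition \eqref{eq:transprob_b} of the arrow labels of $B_{\alpha,\delta}$; and the third matches by Example~\ref{exam:evaluates1}, whose part (2) shows that this conditional depends only on the already-observed root values $(t_1,\ldots,t_k)$, not on which specific leaves of $T_{\tau(1)},\ldots,T_{\tau(k)}$ were probed in reaching the current state.

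Claims (2) and (3) follow by summing or ratioing the strong claim over appropriate $(\tau,\mathbf{t})$: for (2), sum over all $(\tau,\mathbf{t})$ such that $\tau(l)=j$ for some $l$ and $t_1=\cdots=t_{l-1}=1$ (the continuation conditions for AND-root); for (3), take the ratio of that sum further restricted to $t_l = i$ over the unrestricted version used in (2). The main obstacle is the third factor in the inductive step: a priori, the random leaf-level exploration that $B_{\alpha,\delta}$ performs inside previously visited subtrees could distort the posterior distribution of $v_{\tau(k+1)}$ relative to what $\alpha$'s deterministic exploration would give, but Example~\ref{exam:evaluates1} rules this out by exploiting the conditional independence across subtrees that is built into $S^\delta = \langle \delta; \{S_j^i\}_{j,i}\rangle$.
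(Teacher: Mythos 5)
Your proof is correct and uses the same key ingredients as the paper's (the definition of $B_{\alpha,\delta}$ for the depth-1 transitions, and Example~\ref{exam:evaluates1} for the invariance of the posterior on $v_{\tau(k+1)}$), but you package the argument differently: you prove a single strengthened claim by induction on the prefix length, tracking both the visit order $\tau|_k$ and the observed root values $\mathbf{t}|_k$ simultaneously, and then read off (2) and (3) by summing and conditioning. The paper instead proves (2) by inducting only on the visit order (its \eqref{eq:transprob_b2}), and proves (3) afterward as a separate argument that sums over concrete assignments $\omega_1,\omega_2,\ldots$ and invokes Example~\ref{exam:evaluates1}. Your unified formulation is arguably cleaner: the transition probability in \eqref{eq:transprob_b} (viewed as a conditional probability from a decision-tree node $m$ where $v_{\sigma(k)}$ was found to be $1$) only sums to one if it is read as conditioned on $v_{\sigma(k)}=1$, which in turn makes the single-step comparison between $B_{\alpha,\delta}$ and $\alpha$ depend on the conditional distribution of $v_{\sigma(k)}$ --- exactly what your strengthened induction hypothesis carries along. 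You make that normalization explicit in your verification of part (1), which is a useful clarification of a point the paper's Definition~\ref{defi:balphadelta} leaves implicit.

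One spot that would benefit from a sentence of elaboration: when you invoke Example~\ref{exam:evaluates1} for the third factor, the example as stated applies to events on $\Omega_{t_1}(T_1)\times\cdots\times\Omega_{t_k}(T_k)$, but the event ``$E_{\tau|_k,\mathbf{t}|_k}\wedge\text{next is }T_{\tau(k+1)}$'' for $B_{\alpha,\delta}$ also depends on $B_{\alpha,\delta}$'s internal coin tosses. The fix (which the paper implements concretely in \eqref{eq:ex4badpr4a}) is to condition on those coins first --- they are independent of $S^\delta$, and for each fixed coin outcome the event is a set in the required product space --- and then apply the example. This is what your phrase ``conditional independence across subtrees that is built into $S^\delta$'' is pointing at, but stating the conditioning on the coins explicitly would make the step airtight.
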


\begin{proof}
(1) is immediate from the definition of $B_{\alpha,\delta}$. 

(2) Let $k \in \{ 1,\dots, n-1 \}$ and $\sigma : \{ 1,\dots, k+1 \} \to \{ 1, \dots, n \}$ be an injection. We let $L_{j}$ denote the leaves of $T_{j}$. 
In the context where we look at the end of computation of $T_{\sigma (k)}$, 
we let ``last $m$'' denote the event that the last leaf (of $T_{\sigma (k)}$) probed is $m$, 
and we let ``next $\ell$'' denote the event that (after $T_{\sigma (k)}$) the first leaf probed is $\ell$. 
We have the following, 
where $\mathrm{Pr}$ without suffix denotes probability measured by $B_{\alpha,\delta}$ and $S^{\delta}$. 

\begin{align}
& 
\mathrm{Pr} [T_{\sigma (1)}, \dots, T_{\sigma (k)}, T_{\sigma (k+1)} | T_{\sigma (1)}, \dots, T_{\sigma (k)}]
\notag
\\
=&
\sum_{\ell \in L_{\sigma (k+1)}} 
\mathrm{Pr} [ \text{ next } \ell  | T_{\sigma (1)}, \dots, T_{\sigma (k)}]
\notag
\\
=&
\sum_{\substack{m \in L_{\sigma (k)} \\ \ell \in L_{\sigma (k+1)}}} 
\mathrm{Pr} [ \text{ last } m \land \text{ next } \ell | T_{\sigma (1)}, \dots, T_{\sigma (k)}]
\notag
\\
=&
\sum_{\substack{m \in L_{\sigma (k)} \\ \ell \in L_{\sigma (k+1)}}} 
\mathrm{Pr} [ \text{ next } \ell | T_{\sigma (1)}, \dots, T_{\sigma (k)}  \land \text{ last } m ]
\times 
\mathrm{Pr}[ \text{ last } m | T_{\sigma (1)}, \dots, T_{\sigma (k)} ]
\notag
\\
=&
\sum_{\substack{m \in L_{\sigma (k)} \\ \ell \in L_{\sigma (k+1)}}} 
\mathrm{Pr}_{\alpha, S^{\delta}} [  T_{\sigma (1)}, \dots, T_{\sigma (k+1)} | T_{\sigma (1)}, \dots, T_{\sigma (k)}  ]
\notag
\\
& \quad 
\times 
\mathrm{Pr}_{Y_{\sigma (k+1)}} [Y_{\sigma (k+1)} \text{ begins with } \ell]
\times 
\mathrm{Pr}[ \text{ last } m | T_{\sigma (1)}, \dots, T_{\sigma (k)} ]
\notag
\\
& \quad \text{[by \eqref{eq:transprob_b}]}
\notag
\\
=&
\mathrm{Pr}_{\alpha, S^{\delta}} [T_{\sigma (1)}, \dots, T_{\sigma (k)}, T_{\sigma (k+1)} | T_{\sigma (1)}, \dots, T_{\sigma (k)}]
\label{eq:transprob_b2}
\end{align} 

By \eqref{eq:transprob_b2} and induction on $k^{\prime} \in \{ 1,\dots, n \}$,
\begin{equation}
\mathrm{Pr} [T_{\sigma (1)}, \dots, T_{\sigma (k^{\prime})}]
= \mathrm{Pr}_{\alpha, S^{\delta}} [T_{\sigma (1)}, \dots, T_{\sigma (k^{\prime})}]
\end{equation}
Thus, (2) holds. 

(3) It suffices to show the following claim.

\ 

{\bf Claim.} Let $j_1,\ldots,j_s \in \{1,\ldots,n\}$ be mutually distinct numbers and $j_s=j$. Then we have
\begin{equation}
\mathrm{Pr} [ v_{j_s} \text{ is } i | T_{j_1} , T_{j_2} , \ldots , T_{j_s} ] = \mathrm{Pr}_{\alpha,S^{\delta}} [ v_{j_s} \text{ is } i | T_{j_1} , T_{j_2} , \ldots , T_{j_s} ].
\end{equation}

For simplicity, we only see the case where $s=3$, $j_1=1$, $j_2=2$, and $j_3=3$. The general case can be shown similarly.

We have

\begin{align} \label{eq:ex4badpr4a}
&\mathrm{Pr}_{B_{\alpha,\delta}, S^{\delta}} [v_{3} \text{ is } i \land T_{1},T_{2},T_{3} ]
\notag
\\
=&
\sum_{\omega_{1}, \omega_{2}} \mathrm{Pr}_{B_{\alpha,\delta}, S^{\delta}} [v_{3} \text{ is } i \land \omega_{1}, \omega_{2} \land T_{1},T_{2},T_{3}  ]
\notag
\\
& \text{[$\omega_{j}$ runs over $\Omega_{1} (T_{j})$ for $j=1,2$]}
\notag
\\
=&
\sum_{\omega_{1}, \omega_{2}} \mathrm{Pr}_{S^{\delta}} [v_{3} \text{ is } i \land \omega_{1}, \omega_{2} ] 
\times 
\mathrm{Pr}_{B_{\alpha, \delta}} [ T_{1},T_{2},T_{3} | v_3 \text{ is } i \land \omega_{1}, \omega_{2} ]
\notag
\\
=&
\sum_{\omega_{1}, \omega_{2}} \mathrm{Pr}_{S^{\delta}} [v_{3} \text{ is } i \land \omega_{1}, \omega_{2} ] 
\times 
\mathrm{Pr}_{B_{\alpha, \delta}} [ T_{1},T_{2},T_{3} | \omega_{1}, \omega_{2} ]
\notag
\\
=&
\sum_{\omega_{1}, \omega_{2}} \mathrm{Pr}_{S^{\delta}} [v_{3} \text{ is } i | \omega_{1}, \omega_{2} ] 
\times
\mathrm{Pr}_{S^{\delta}} [\omega_{1},\omega_{2}] 
\times 
\mathrm{Pr}_{B_{\alpha, \delta}} [ T_{1},T_{2},T_{3} | \omega_{1}, \omega_{2} ]
\notag
\\
=&
\sum_{\omega_{1}, \omega_{2}} \mathrm{Pr}_{S^{\delta}} [v_{3} \text{ is } i | \omega_{1}, \omega_{2} ] 
\times
\mathrm{Pr}_{B_{\alpha, \delta},S^{\delta}} [ \omega_{1}, \omega_{2} \land T_{1},T_{2},T_{3}   ]
\end{align} 

By Example~\ref{exam:evaluates1}, we get the following. 
\begin{equation} \label{eq:ex4badpr4b}
\mathrm{Pr}_{S^{\delta}} [v_{3} \text{ is } i | \omega_{1}, \omega_{2} ] = \mathrm{Pr}_{\alpha,S^{\delta}} [v_{3} \text{ is } i | T_{1},T_{2},T_{3} ]
\end{equation}
Thus we have the following. 

\begin{align} \label{eq:ex4badpr4c}
&\mathrm{Pr}_{B_{\alpha,\delta}, S^{\delta}} [v_{3} \text{ is } i \land T_{1},T_{2},T_{3} ]
\notag
\\
=&
\sum_{\omega_{1}, \omega_{2}} \mathrm{Pr}_{\alpha,S^{\delta}} [v_{3} \text{ is } i | T_{1},T_{2},T_{3} ] 
\times
\mathrm{Pr}_{B_{\alpha, \delta},S^{\delta}} [ \omega_{1}, \omega_{2} \land T_{1},T_{2},T_{3}   ]
\notag
\\
=&
\mathrm{Pr}_{\alpha,S^{\delta}} [v_{3} \text{ is } i | T_{1},T_{2},T_{3} ] 
\times
\mathrm{Pr}_{B_{\alpha, \delta},S^{\delta}} [ T_{1},T_{2},T_{3} ]
\end{align}

Dividing both sides by $\mathrm{Pr}_{B_{\alpha, \delta},S^{\delta}} [ T_{1},T_{2},T_{3} ]$, 
we get 
$\mathrm{Pr}_{B_{\alpha,\delta},S^{\delta}} [v_{3} \text{ is } i | T_{1},T_{2},T_{3} ] = \mathrm{Pr}_{\alpha,S^{\delta}} [v_{3} \text{ is } i | T_{1},T_{2},T_{3} ]$. 

\end{proof}

\subsection{Proof of main theorem}

\begin{lemm}
\label{lemm:1distsubtreemin}
Let $T$ be an AND--OR tree and $T_1,\ldots,T_n$ its depth-1 subtrees. Assume that the root of $T$ is an AND node. For each $j\in\{1,\ldots,n\}$, let $S_j\in\mathcal{D}(\Omega_1(T_j))$, that is, $S_j$ be a distribution which gives the value 1 to the root of $T_j$, and for each $\omega=\omega_1\cdots\omega_n\in\Omega(T)$, let
\begin{equation} \label{eq:1distsubtreemin01}
S(\omega) = \prod_{j=1}^n S_j(\omega_j).
\end{equation}
Then, for any $X\in\mathcal{D}(\mathcal{A}_{\mathrm{DF}}(T))$, we have
\begin{equation} \label{eq:1distsubtreemin02}
\mathrm{cost} (X,S) \geq \sum_{j=1}^n \min_{Y_j\in\mathcal{D}(\mathcal{A}_{\mathrm{DF}}(T_j))} \mathrm{cost} (Y_j,S^1_j).
\end{equation}
If the root of $T$ is an OR node, then similar inequality holds with $S_j \in\mathcal{D}(\Omega_0(T_j))$ for each $\in\{1,\ldots,n\}$.
\end{lemm}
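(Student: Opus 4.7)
The plan is to exploit that the root is AND and every $S_j$ forces $v_j=1$ with probability one, so the root of $T$ evaluates to $1$ almost surely; by alpha--beta pruning, no AND node can be pruned early if all its children evaluate to $1$. Hence every deterministic depth-first algorithm $\alpha$ in the support of $X$ must probe each of $T_1,\ldots,T_n$ against every $\omega$ in the support of $S$. The rest of the argument is a pointwise decomposition followed by a linearity-of-expectation argument using the product structure of $S$.

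First I would decompose the cost pointwise. Fix $\alpha\in\mathcal{A}_{\mathrm{DF}}(T)$ with $X(\alpha)>0$ and $\omega=\omega_1\cdots\omega_n$ with $S(\omega)>0$. Because $\alpha$ is depth-first, its queries to the leaves of $T_j$ form a contiguous block in the execution trace. Writing $h_j=h_j(\alpha,\omega)$ for the query--answer history accumulated before $\alpha$ first enters $T_j$, the restriction $\alpha[h_j]$ of $\alpha$ to $T_j$ after that history is itself a deterministic depth-first algorithm on $T_j$ depending only on $\alpha$ and $h_j$. Since all subtrees are visited, we get
\begin{equation}
\mathrm{cost}(\alpha,\omega)=\sum_{j=1}^{n}\mathrm{cost}(\alpha[h_j],\omega_j).
\end{equation}

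Next I would average. Taking expectation over $\alpha\sim X$ and $\omega\sim S$ and using linearity,
\begin{equation}
\mathrm{cost}(X,S)=\sum_{j=1}^{n}\mathbb{E}\bigl[\mathrm{cost}(\alpha[h_j],\omega_j)\bigr].
\end{equation}
The crucial observation is the product structure of $S$: the history $h_j$ is built entirely from $\alpha$ and from answers to leaves of $T_k$ with $k\neq j$ (no leaf of $T_j$ has yet been queried at that moment), so $h_j$ is measurable with respect to $\alpha$ and $(\omega_k)_{k\neq j}$. Since $S=\prod_k S_k$, the marginal $\omega_j$ is independent of $(\alpha,h_j)$ and distributed according to $S_j$. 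Conditioning on $(\alpha,h_j)$ and then averaging therefore gives
\begin{equation}
\mathbb{E}\bigl[\mathrm{cost}(\alpha[h_j],\omega_j)\,\big|\,\alpha,h_j\bigr]=\mathrm{cost}(\alpha[h_j],S_j)\geq \min_{Y_j\in\mathcal{D}(\mathcal{A}_{\mathrm{DF}}(T_j))}\mathrm{cost}(Y_j,S_j),
\end{equation}
so $\mathbb{E}[\mathrm{cost}(\alpha[h_j],\omega_j)]\geq \min_{Y_j}\mathrm{cost}(Y_j,S_j)$; summing over $j$ yields the inequality. The OR-root case is symmetric: with $S_j\in\mathcal{D}(\Omega_0(T_j))$ every $v_j$ is $0$, so alpha--beta pruning again never triggers and the same decomposition applies.

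The main obstacle I anticipate is making the independence of $\omega_j$ from $(\alpha,h_j)$ fully rigorous: since $\alpha$ is not required to be directional, the order in which subtrees are entered and hence $h_j$ can depend adaptively on prior answers, so the index $j$ is accessed through a random permutation. The cleanest way around this is to index the summation by ``the $k$-th subtree entered'' rather than by $j$, carry through the same calculation (using that the event ``$\alpha$ enters $T_j$ at step $k$ with history $h$'' is determined by $\alpha$ and $(\omega_k)_{k\neq j}$), and then re-sum over $j$. Once this bookkeeping is in place, everything reduces to the straightforward averaging above.
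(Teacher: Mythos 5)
Your argument is correct, and it takes a genuinely different route from the paper's. The paper proves the Claim (the deterministic version of the inequality) by induction on the number of nodes in $T$: it peels off the first depth-1 subtree $T_k$ that $\alpha$ enters, writes
\[
\mathrm{cost}(\alpha,S)=\mathrm{cost}(\alpha_k,S_k)+\sum_{\xi\in\Omega_1(T_k)}S_k(\xi)\,\mathrm{cost}(\alpha^{\xi},S'),
\]
and applies the induction hypothesis to $T'$ (the tree with $T_k$ removed) and the residual algorithm $\alpha^{\xi}$. You instead give a one-shot decomposition $\mathrm{cost}(\alpha,\omega)=\sum_j\mathrm{cost}(\alpha[h_j],\omega_j)$ and close it with a single conditional-expectation step, using that under the product measure $S$ the block $\omega_j$ is independent of $(\alpha,(\omega_k)_{k\neq j})$, hence of $h_j$ and of $\alpha[h_j]$. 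In effect your proof is the fully unrolled version of the paper's induction: where the paper applies the induction hypothesis $n-1$ times successively, you apply the independence once per index $j$. What your version buys is no explicit induction and a cleaner exhibition of where the product structure of $S$ is really used; what the paper's version buys is avoiding any measure-theoretic/conditioning language, which keeps the proof entirely at the level of finite sums.

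One small remark: the obstacle you flag at the end (the fact that $j$ is reached at a random position because $\alpha$ is adaptive, and the suggested workaround of summing over ``the $k$-th subtree entered'' and then re-summing) is not actually needed. For a \emph{fixed} index $j$, the depth-first property already guarantees that every leaf queried before $\alpha$ first enters $T_j$ lies outside $T_j$, so $h_j$ and hence $\alpha[h_j]$ are functions of $\alpha$ and $(\omega_k)_{k\neq j}$ alone; independence of $\omega_j$ from this pair is immediate from $S=\prod_k S_k$ and from $\alpha\sim X$ being drawn independently of $\omega$. The conditional-expectation step you wrote is therefore already rigorous as stated, and the permutation bookkeeping can be dropped. (You do also need, as the paper does implicitly, that the minimum over $\mathcal{D}(\mathcal{A}_{\mathrm{DF}}(T_j))$ equals the minimum over $\mathcal{A}_{\mathrm{DF}}(T_j)$, i.e.\ Proposition~\ref{prop:vonneumanneq}(3), so that $\mathrm{cost}(\alpha[h_j],S_j)\geq\min_{Y_j}\mathrm{cost}(Y_j,S_j)$; worth citing explicitly.)
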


To show Lemma \ref{lemm:1distsubtreemin}, it is sufficient to show \eqref{eq:1distsubtreemin02} with arbitrary $\alpha\in\mathcal{A}_{\mathrm{DF}}(T)$ in the place of $X$. For a detailed proof, see Appendix.

\begin{theo} \label{theo:rdfvsrdirforp} 
For each $i=0,1$, it holds that $\overline{P}_{\mathrm{DF},i}(T)=\overline{P}_{\mathrm{dir},i}(T)$
\end{theo}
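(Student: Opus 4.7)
The plan is to prove the two inequalities separately. The inequality $\overline{P}_{\mathrm{DF},i}(T) \leq \overline{P}_{\mathrm{dir},i}(T)$ is immediate from $\mathcal{A}_\mathrm{dir}(T) \subseteq \mathcal{A}_\mathrm{DF}(T)$. For the reverse, I will induct on the height of $T$, and without loss of generality assume that the root is labeled AND; the OR case follows by swapping the roles of $0$ and $1$. The cases $i=1$ and $i=0$ require different arguments, the latter being the place where the chimera construction is essential.

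For $i=1$, I take $S^{1} := \prod_{j} S^{1}_{j}$, where each $S^{1}_{j}$ is obtained from the induction hypothesis together with Lemma~\ref{lemm:sdelta_det}. By Lemma~\ref{lemm:sdelta_det}(3), $S^{1}$ achieves $\overline{P}_{\mathrm{dir},1}(T)$. Applying Lemma~\ref{lemm:1distsubtreemin} to any $X \in \mathcal{D}(\mathcal{A}_\mathrm{DF}(T))$ gives $\mathrm{cost}(X,S^{1}) \geq \sum_{j} \min_{Y \in \mathcal{D}(\mathcal{A}_\mathrm{DF}(T_{j}))} \mathrm{cost}(Y,S^{1}_{j})$, which by induction equals $\sum_{j}\overline{P}_{\mathrm{dir},1}(T_{j})$; the latter sum is $\overline{P}_{\mathrm{dir},1}(T)$ because an arbitrary directional algorithm using optimal sub-algorithms for each $T_{j}$ attains this value on $S^{1}$. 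This yields $\overline{P}_{\mathrm{DF},1}(T) \geq \overline{P}_{\mathrm{dir},1}(T)$.

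For $i=0$, I invoke the chimera $B_{\alpha,\delta_{0}}$ of Definition~\ref{defi:balphadelta}. Fix distributions $\{S^{i}_{j}\}_{j,i}$ and a directional minimizer $\tilde{\alpha}$ from Lemma~\ref{lemm:sdelta_det}(3) applied to $T$, set $Y_{j} := \tilde{\alpha}_{j}$ (a simultaneous minimizer of $\mathrm{cost}(\,\cdot\,,S^{0}_{j})$ and $\mathrm{cost}(\,\cdot\,,S^{1}_{j})$), and write $S^{0} = \langle \delta_{0}; \{S^{i}_{j}\}_{j,i} \rangle$ for the distribution achieving $\overline{P}_{\mathrm{dir},0}(T)$ provided by the same lemma. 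For an arbitrary $\alpha \in \mathcal{A}_\mathrm{DF}(T)$ I will show the key inequality $\mathrm{cost}(\alpha, S^{0}) \geq \mathrm{cost}(B_{\alpha,\delta_{0}}, S^{0})$; since $B_{\alpha,\delta_{0}} \in \mathcal{D}(\mathcal{A}_\mathrm{dir}(T))$ by Lemma~\ref{lemm:balphadeltaclaim}(1), this gives $\min_{A \in \mathcal{D}(\mathcal{A}_\mathrm{DF}(T))}\mathrm{cost}(A,S^{0}) \geq \min_{B \in \mathcal{D}(\mathcal{A}_\mathrm{dir}(T))}\mathrm{cost}(B,S^{0}) = \overline{P}_{\mathrm{dir},0}(T)$, hence $\overline{P}_{\mathrm{DF},0}(T) \geq \overline{P}_{\mathrm{dir},0}(T)$.

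To prove the key inequality, I decompose both sides as $\sum_{j} \mathrm{Pr}[\text{evaluates } T_{j}] \cdot \mathbb{E}[\text{cost on }T_{j} \mid \text{evaluates } T_{j}]$. Lemma~\ref{lemm:balphadeltaclaim}(2) equates the evaluation probabilities on the two sides. Setting $q^{i}_{j} := \mathrm{Pr}_{S^{0}}[v_{j}=i \mid \alpha \text{ evaluates } T_{j}]$, Lemma~\ref{lemm:balphadeltaclaim}(3) together with the independence of $Y_{j}$ from the preceding history shows that the $B_{\alpha,\delta_{0}}$-side conditional expectation equals $q^{0}_{j}\mathrm{cost}(Y_{j},S^{0}_{j}) + q^{1}_{j}\mathrm{cost}(Y_{j},S^{1}_{j})$. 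For the $\alpha$-side, I further condition on the query--answer history $h$ up to the moment $\alpha$ enters $T_{j}$, so that $\alpha$ restricts to a deterministic $\alpha_{j|h} \in \mathcal{A}_\mathrm{DF}(T_{j})$. The main obstacle is to verify that, conditional on $h$ and on $v_{j}=i$, the assignment $\omega_{j}$ is distributed exactly as $S^{i}_{j}$; this follows from the factorization in Lemma~\ref{lemm:prsdelta_deltau}(1), because once $v_{j}$ is fixed, the only channel of dependence between $\omega_{j}$ and the rest of $\omega$ is the vector of root values $\bar v$, which is already captured by the conditioning on $h$. Given this, the induction hypothesis $\overline{P}_{\mathrm{DF},i}(T_{j}) = \overline{P}_{\mathrm{dir},i}(T_{j})$ together with the achievement property of $S^{i}_{j}$ yields $\mathrm{cost}(\alpha_{j|h},S^{i}_{j}) \geq \mathrm{cost}(Y_{j},S^{i}_{j})$ for each $h$ and each $i$; averaging over $h$ and using $\mathbb{E}_{h}[\mathrm{Pr}[v_{j}=i \mid h, \text{evaluates } T_{j}]] = q^{i}_{j}$ produces precisely the lower bound needed, completing the induction.
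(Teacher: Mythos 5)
Your overall strategy mirrors the paper's: induct on the height, assume an AND root, handle $i=1$ via the product distribution and Lemma~\ref{lemm:1distsubtreemin}, and handle $i=0$ via the chimera $B_{\alpha,\delta}$. A genuine streamlining is that you take $S^{0}$ directly from Lemma~\ref{lemm:sdelta_det}(3), rather than (as the paper does) starting from an arbitrary $S$ achieving $\overline{P}_{\mathrm{dir},0}(T)$ and then passing to $S^{\delta}$ via Lemma~\ref{lemm:sdelta_ndir0}; this shortens the final chain of inequalities. Using $Y_{j}:=\tilde{\alpha}_{j}$ instead of a randomized $Y_{j}$ from the induction hypothesis is also fine, since a deterministic minimizer is a degenerate randomized one and matches by Proposition~\ref{prop:vonneumanneq}(3).

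However, there is a gap at the step ``the induction hypothesis $\overline{P}_{\mathrm{DF},i}(T_{j}) = \overline{P}_{\mathrm{dir},i}(T_{j})$ together with the achievement property of $S^{i}_{j}$ yields $\mathrm{cost}(\alpha_{j|h},S^{i}_{j}) \geq \mathrm{cost}(Y_{j},S^{i}_{j})$,'' and the analogous step in the $i=1$ case. What you actually need is
\[
\min_{Z\in\mathcal{D}(\mathcal{A}_{\mathrm{DF}}(T_{j}))}\mathrm{cost}(Z,S^{i}_{j})
\;\geq\;
\min_{Z'\in\mathcal{D}(\mathcal{A}_{\mathrm{dir}}(T_{j}))}\mathrm{cost}(Z',S^{i}_{j}),
\]
i.e.\ that $S^{i}_{j}$ achieves $\overline{P}_{\mathrm{DF},i}(T_{j})$, not merely $\overline{P}_{\mathrm{dir},i}(T_{j})$. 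Lemma~\ref{lemm:sdelta_det}(1) guarantees only the latter, and the equality $\overline{P}_{\mathrm{DF},i}(T_{j}) = \overline{P}_{\mathrm{dir},i}(T_{j})$ of values does not by itself transfer a maximin strategy for the smaller algorithm class to the larger one: you still only get $\min_{Z}\mathrm{cost}(Z,S^{i}_{j})\leq\overline{P}_{\mathrm{DF},i}(T_{j})$, which is the wrong direction. The paper avoids this by proving a \emph{stronger} statement by induction, namely that there exist $S^{i}$ (and an $A$) with $\min_{X\in\mathcal{D}(\mathcal{A}_{\mathrm{DF}})}\mathrm{cost}(X,S^{i})=\min_{Y\in\mathcal{D}(\mathcal{A}_{\mathrm{dir}})}\mathrm{cost}(Y,S^{i})=\overline{P}_{\mathrm{dir},i}$, and the $S^{i}_{j}$ it feeds into $S^{\delta}$ come equipped with this two-sided achievement from the inductive hypothesis \eqref{eq:costgammasjk}. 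To repair your argument you would likewise need to strengthen the induction: carry along, for each height, the assertion that the particular $S^{i}_{j}$ produced by the Lemma~\ref{lemm:sdelta_det} construction satisfy the matching-minima identity, and close the loop at the end of your $i=0$ chain by observing (as the paper does after \eqref{eq:pbardf0sandwitch}) that all inequalities become equalities, so $S^{0}$ inherits the property.
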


\begin{proof}
We begin with showing assertion by induction on the height of $T$. 

There exists $A \in  \mathcal{D}(\mathcal{A}_\mathrm{dir}(T))$ such that 
for each $i \in \{ 0, 1 \}$, there exists 
$S^{i} \in \mathcal{D}(\Omega_{i}(T))$ with the following properties. 
\begin{align}
\mathrm{cost} (A,S^{i})
&=
\overline{P}_{\mathrm{DF},i}(T)
\notag
\\
=&
\min_{X \in  \mathcal{D}(\mathcal{A}_\mathrm{DF}(T))} \mathrm{cost} (X,S^{i})
\notag
\\
=&
\min_{Y \in  \mathcal{D}(\mathcal{A}_\mathrm{dir}(T))} \mathrm{cost} (Y,S^{i})
\notag
\\
=&
\overline{P}_{\mathrm{dir},i}(T)
\end{align}

The case when height is 0 is obvious. 
Suppose that the assertion of the theorem holds for all trees of less height. 
Suppose that the root of $T$ is labeled with AND. 

By induction hypothesis, for each $j \in \{ 1,\dots, n\}$, we can fix $Y_{j} \in \mathcal{D}(\mathcal{A}_\mathrm{dir} (T_{j}) )$ and $S_{j}^{i} \in \mathcal{D}( \Omega_{i} (T_{j}) )$ with the following properties. 
\begin{align}
\mathrm{cost} (Y_{j},S_{j}^{i})
&=
\overline{P}_{\mathrm{DF},i}(T_{j})
\notag
\\
=&
\min_{X \in \mathcal{D}(\mathcal{A}_\mathrm{DF}(T_{j}))} \mathrm{cost} (X,S^{i}_j)
\notag
\\
=&
\min_{Y \in  \mathcal{D}(\mathcal{A}_\mathrm{dir}(T_{j}))} \mathrm{cost} (Y,S^{i}_j)
\notag
\\
=&
\overline{P}_{\mathrm{dir},i}(T_{j}) \label{eq:costgammasjk}
\end{align}

We begin with investigating $\overline{P}_{\mathrm{DF},0} (T) $. 
Fix $S \in \mathcal{D}(\Omega_{0} (T))$ with the following property. 
\begin{equation} \label{eq:propertyofs}
\min_{Y \in \mathcal{D}(\mathcal{A}_\mathrm{dir} (T) )} \mathrm{cost}(Y, S) 
= \overline{P}_{\mathrm{dir},0} (T) 
\end{equation}

We define a set $\Omega^{\text{depth-1}}_{0}$ of strings as follows. Note that each element of this set is not a truth assignment to the leaves but that to depth-1 nodes. 
\begin{equation}
\Omega^{\text{depth-1}}_{0} :=\{ \mathbf{t} \in \{ 0,1 \}^{n} : 
\text{$\bar{v}$ is $\mathbf{t}$ } 
\implies \text{the root of $T$ has value $0$.}
\}
\end{equation}

Let $\delta \in \mathcal{D}( \Omega^{\text{depth-1}}_{0} )$ be the following probability distribution. 
\(
\delta (t_{1} \cdots t_{n}) = \mathrm{Pr}_{S} [ \text{$\bar{v}$ is $\mathbf{t}$ } ] 
\). 
Let $S^{\delta}$ denote $\langle \delta; \{ S_{j}^{i} \}_{j,i} \rangle$ $($see Definition~\ref{defi:sdeltadir}$)$. 
For each $j \in \{ 1,\dots, n \}$, $i \in \{ 0,1 \}$, $\alpha \in \mathcal{A}_\mathrm{DF}(T)$, and $\beta \in \mathcal{A}_\mathrm{DF}(T_{j})$, we define the following events.

$E_{j} (\alpha): $ ~ $\alpha$ probes $T_{j}$. 

$E_{j}^{(i)}: $ ~ $v_{j}$ has value $i$. 

$E_{j}^{\prime} (\alpha, \beta): $ ~ $\alpha$ probes $T_{j}$ working as $\beta$.

Then, we have the following, where $\mathrm{Pr}$ is the probability measure given by $S^{\delta}$. 

\begin{align}
&\mathrm{cost}(\alpha, S^{\delta})
\notag
\\
= & 
\sum_{j=1}^{n} \mathrm{Pr}[E_{j} (\alpha)]
\Bigl(
\mathrm{Pr}[E_{j}^{(0)}  |  E_{j} (\alpha)]
\sum_{\beta \in \mathcal{A}_\mathrm{DF} (T_{j})} \mathrm{Pr}[E_{j}^{\prime} (\alpha, \beta)  |  E_{j} (\alpha) \land E_{j}^{(0)}]
\mathrm{cost}(\beta, S_{j}^{0})
\notag
\\
& +
\mathrm{Pr}[E_{j}^{(1)}  |  E_{j} (\alpha)]
\sum_{\beta \in \mathcal{A}_\mathrm{DF} (T_{j})} \mathrm{Pr}[E_{j}^{\prime} (\alpha, \beta)  |  E_{j} (\alpha) \land E_{j}^{(1)}]
\mathrm{cost}(\beta, S_{j}^{1})
\Bigr)
\end{align}

In each part of the form $\mathrm{cost}(\beta, S_{j}^{i})$, by Proposition~\ref{prop:vonneumanneq} (3), 
the following holds. 
\begin{align} \label{eq:costbetasjk}
\mathrm{cost}(\beta, S_{j}^{i}) 
\geq &\min_{\gamma \in \mathcal{A}_\mathrm{DF} (T_{j}) }\mathrm{cost}(\gamma, S_{j}^{i})
\notag
\\
= & \min_{\gamma \in \mathcal{A}_\mathrm{dir} (T_{j}) }\mathrm{cost}(\gamma, S_{j}^{i})
\notag
\\
=&
\mathrm{cost}(Y_{j}, S_{j}^{i})
\end{align}

To sum up, the following holds. 
\begin{align}
&\mathrm{cost}(\alpha, S^{\delta})
\notag
\\
\geq & 
\sum_{j=1}^{n} \mathrm{Pr}[E_{j} (\alpha)]
\Bigl(
\mathrm{Pr}[E_{j}^{(0)} | E_{j} (\alpha)]
\sum_{\substack{\beta \in \\ \mathcal{A}_\mathrm{DF} (T_{j}) }} \mathrm{Pr}[E_{j}^{\prime} (\alpha, \beta) | E_{j} (\alpha) \land E_{j,0}]
\mathrm{cost}(Y_{j}, S_{j}^{0})
\notag
\\
& +
\mathrm{Pr}[E_{j}^{(1)}  |  E_{j} (\alpha)]
\sum_{\beta \in \mathcal{A}_\mathrm{DF} (T_{j})} \mathrm{Pr}[E_{j}^{\prime} (\alpha, \beta)  |  E_{j} (\alpha) \land E_{j,1}]
\mathrm{cost}(Y_{j}, S_{j}^{1})
\Bigr)
\notag
\\
= & 
\sum_{j=1}^{n} \mathrm{Pr}[E_{j} (\alpha)]
\Bigl(
\mathrm{Pr}[E_{j}^{(0)}  |  E_{j} (\alpha)]
\mathrm{cost}(Y_{j}, S_{j}^{0})
+
\mathrm{Pr}[E_{j}^{(1)}  |  E_{j} (\alpha)]
\mathrm{cost}(Y_{j}, S_{j}^{1})
\Bigr) \label{eq:lboundalphasdelta}
\end{align}

Now we use $B_{\alpha,\delta}$ in Definition~\ref{defi:balphadelta} with respect to $\alpha$, $\delta$, $\{ Y_{j} \}_{j}$, and $\{ S_{j}^{i} \}_{j,i}$ in the current proof. 
By Lemma~\ref{lemm:balphadeltaclaim} and \eqref{eq:lboundalphasdelta}, we have the following, where $\mathrm{Pr}$ denotes probability measured by $B_{\alpha,\delta}$ and $S^{\delta}$. 

\begin{align}
&\mathrm{cost}(\alpha, S^{\delta})
\notag
\\
\geq & 
\sum_{j=1}^{n} \mathrm{Pr} [\text{$B_{\alpha,\delta}$ probes $T_{j}$}] 
\Bigl(
\mathrm{Pr} [\text{$v_{j}$ has value $0$}|\text{$B_{\alpha,\delta}$ probes $T_{j}$}]
\mathrm{cost}(Y_{j}, S_{j}^{0})
\notag
\\
& \qquad +
\mathrm{Pr} [\text{$v_{j}$ has value $1$}|\text{$B_{\alpha,\delta}$ probes $T_{j}$}]
\mathrm{cost}(Y_{j}, S_{j}^{1})
\Bigr) 
\notag
\\
= & \mathrm{cost}(B_{\alpha,\delta},S^{\delta})
\notag
\\ 
\geq &
\min_{\beta \in \mathcal{A}_\mathrm{DF} (T)} \mathrm{cost}(B_{\beta,\delta}, S^{\delta})
\label{eq:lboundalphasdelta2}
\end{align}

Since $\alpha \in \mathcal{A}_\mathrm{DF}(T)$ was arbitrary, any $X \in \mathcal{D}(\mathcal{A}_\mathrm{DF}(T))$ satisfies the following. 
\begin{align}
\mathrm{cost}(X, S^{\delta})
= & 
\sum_{\alpha \in \mathcal{A}_\mathrm{DF}(T)} 
X(\alpha) \mathrm{cost}(\alpha,S^{\delta})
\notag
\\
\geq &
\min_{\beta \in \mathcal{A}_\mathrm{DF} (T)} \mathrm{cost}(B_{\beta,\delta}, S^{\delta})
\label{eq:lboundssdelta}
\end{align}

Since $X \in \mathcal{D}(\mathcal{A}_\mathrm{DF}(T))$ was arbitrary, we have the following. 

\begin{align}
&\overline{P}_{\mathrm{DF},0} (T) 
\notag
\\
\geq & 
\min_{X \in \mathcal{D}(\mathcal{A}_\mathrm{DF} (T) )} \mathrm{cost}(X, S^{\delta}) 
\quad \text{[by definition of $\overline{P}_{\mathrm{DF},0}$]}
\notag
\\ 
\geq &
\min_{\beta \in \mathcal{A}_\mathrm{DF} (T)} \mathrm{cost}(B_{\beta,\delta}, S^{\delta})
\quad \text{[by \eqref{eq:lboundssdelta}]}
\notag
\\ 
\geq &\min_{Y \in \mathcal{D}(\mathcal{A}_\mathrm{dir} (T) )} \mathrm{cost}(Y, S^{\delta}) 
\quad \text{[by Claim (i), $B_{\beta,\delta}$ is directional.]}
\notag
\\ 
\geq &
\min_{Y \in \mathcal{D}(\mathcal{A}_\mathrm{dir} (T) )} \mathrm{cost}(Y, S) 
\quad \text{[by Lemma~\ref{lemm:sdelta_ndir0}]}
\notag
\\
= &
\overline{P}_{\mathrm{dir},0} (T) \quad \text{[by \eqref{eq:propertyofs}]}
\notag
\\
\geq & \overline{P}_{\mathrm{DF},0} (T)
\label{eq:pbardf0sandwitch}
\end{align}

Hence, we have the following.

\begin{align}
\overline{P}_{\mathrm{DF},0} (T) 
= & 
\min_{X \in \mathcal{D}(\mathcal{A}_\mathrm{DF} (T) )} \mathrm{cost}(X, S^{\delta}) 
\notag
\\ 
= & 
\min_{\beta \in \mathcal{A}_\mathrm{DF} (T)} \mathrm{cost}(B_{\beta,\delta}, S^{\delta})
\notag
\\
= &\min_{Y \in \mathcal{D}(\mathcal{A}_\mathrm{dir} (T) )} \mathrm{cost}(Y, S^{\delta}) 
\notag
\\ 
= &
\overline{P}_{\mathrm{dir},0} (T) 
\label{eq:thecommonvalue}
\end{align}

The common value of \eqref{eq:thecommonvalue} is $\mathrm{cost}(B_{\beta,\delta}, S^{\delta})$ for some $\beta$, and we have $A:=B_{\beta,\delta} \in \mathcal{D}(\mathcal{A}_\mathrm{dir} (T))$ even if $\beta$ is not directional. 

Next, we investigate $\overline{P}_{\mathrm{DF},1}$. 
Let $\langle S_{j}^{1} \rangle_{j=1}^{n}$ denote $S^{\delta^{\prime}}$ for the $\delta^{\prime}$ such that $\delta^{\prime} (1^{n}) = 1$. In other words: 
\begin{equation}
\langle S_{j}^{1} \rangle_{j=1}^{n} (\omega_{1} \cdots \omega_{n}) = \prod_{j=1}^{n} S_{j}^{1} (\omega_{j})
\end{equation}
For $A$ defined above, we have the following. 

\begin{align}
\mathrm{cost}(A, \langle S_{j}^{1} \rangle_{j=1}^{n}) 
&=
\sum_{j=1}^{n} \mathrm{cost}(Y_{j},S_{j}^{1}) 
\notag
\\
=&
\sum_{j=1}^{n} \overline{P}_{\mathrm{dir},1}(T_{j}) ~ (=\sum_{j=1}^{n} \overline{R}_{\mathrm{dir},1}(T_{j}))
\notag
\\
&\quad \text{[by Proposition~\ref{prop:vonneumanneq} and \eqref{eq:costgammasjk}]}
\notag
\\
=&\overline{R}_{\mathrm{dir},1}(T) ~ (=\overline{P}_{\mathrm{dir},1}(T) )
\notag
\\
&
\text{[by Proposition~\ref{prop:vonneumanneq} and Theorem \ref{theo:rdiri_is_di}(3)]}
\label{eq:costas11s1n_01}
\end{align}

We also have the following, where $\langle Y_{1} \to \dots \to Y_{n}\rangle$ denotes the algorithm executes $\{ Y_{j} \}_{j}$ in this order. 

\begin{align}
\min_{\substack{X \in \\ \mathcal{D}(\mathcal{A}_\mathrm{DF}(T))}} \mathrm{cost}(X, \langle S_{j}^{1} \rangle_{j=1}^{n}) 
\leq & 
\min_{\substack{Y \in \\ \mathcal{D}(\mathcal{A}_\mathrm{dir}(T))}} \mathrm{cost}(Y, \langle S_{j}^{1} \rangle_{j=1}^{n}) 
\notag
\\
\leq &\mathrm{cost}(\langle Y_{1} \to \dots \to Y_{n}\rangle, \langle S_{j}^{1} \rangle_{j=1}^{n}) 
\notag
\\
=&\sum_{j=1}^{n} \mathrm{cost}(Y_{j},S_{j}^{1}) 
\notag
\\
=&\sum_{j=1}^{n} \min_{\substack{X_{(j)} \in \\ \mathcal{D}(\mathcal{A}_\mathrm{DF}(T_{j}))}}\mathrm{cost}(X_{(j)},S_{j}^{1}) 
\quad \text{[by \eqref{eq:costgammasjk}]}
\notag
\\
\leq & \min_{\substack{X \in \\ \mathcal{D}(\mathcal{A}_\mathrm{DF}(T))}} \mathrm{cost}(X, \langle S_{j}^{1} \rangle_{j=1}^{n}) 
\label{eq:costas11s1n_02}
\end{align}

The last inequality of \eqref{eq:costas11s1n_02} follows from Lemma \ref{lemm:1distsubtreemin}. By \eqref{eq:costas11s1n_01} and \eqref{eq:costas11s1n_02}, we get

\begin{align}
\mathrm{cost}(A, \langle S_{j}^{1} \rangle_{j=1}^{n}) 
= & 
\min_{X \in \mathcal{D}(\mathcal{A}_\mathrm{DF} (T) )} \mathrm{cost}(X, \langle S_{j}^{1} \rangle_{j=1}^{n}) 
\notag
\\ 
= &\min_{Y \in \mathcal{D}(\mathcal{A}_\mathrm{dir} (T) )} \mathrm{cost}(Y, \langle S_{j}^{1} \rangle_{j=1}^{n}) 
\notag
\\ 
= &
\overline{P}_{\mathrm{dir},1} (T).
\label{eq:thecommonvalue2}
\end{align}

Moreover, we have the following. 

\begin{equation}
\min_{\substack{X \in \\ \mathcal{D}(\mathcal{A}_\mathrm{DF}(T))}} \mathrm{cost}(X, \langle S_{j}^{1} \rangle_{j=1}^{n}) 
\leq
\overline{P}_{\mathrm{DF},1}(T) 
\leq 
\overline{P}_{\mathrm{dir},1}(T)
\label{eq:costas11s1n_03}
\end{equation}

Hence, $\overline{P}_{\mathrm{DF},1}(T) $ equals the common value of \eqref{eq:thecommonvalue2}. This completes our proof of Theorem~\ref{theo:rdfvsrdirforp}. 
\end{proof}

\section{Concluding remarks} \label{sec:remark}

The following corollary sums up our result.

\begin{coro}
\label{coro:main}
Let $T$ be an AND--OR tree.
\begin{enumerate}
\item $R_{\mathrm{DF},i}(T) = R_{\mathrm{dir},i}(T) = d_i(T)$ for each $i\in\{0,1\}$.
\item $R_{\mathrm{DF}}(T) = R_{\mathrm{dir}}(T) = d(T)$.
\end{enumerate}
\end{coro}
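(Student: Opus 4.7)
The plan is to deduce the corollary by chaining the three main pieces already in place: Theorem~\ref{theo:rdiri_is_di}(3) ($R_{\mathrm{dir},i}(T)=d_i(T)$), Theorem~\ref{theo:rdfvsrdirforp} ($\overline{P}_{\mathrm{DF},i}(T)=\overline{P}_{\mathrm{dir},i}(T)$), and Yao's principle from Proposition~\ref{prop:vonneumanneq}. No new combinatorial argument is needed; the corollary is essentially a bookkeeping step.

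For part (1), I would first apply parts (4) and (5) of Proposition~\ref{prop:vonneumanneq} to convert the randomized complexities on both sides into their distributional-equilibrium forms, obtaining $R_{\mathrm{DF},i}(T)=P_{\mathrm{DF},i}(T)=\overline{P}_{\mathrm{DF},i}(T)$ and $R_{\mathrm{dir},i}(T)=P_{\mathrm{dir},i}(T)=\overline{P}_{\mathrm{dir},i}(T)$. Theorem~\ref{theo:rdfvsrdirforp} then collapses the middle to give $R_{\mathrm{DF},i}(T)=R_{\mathrm{dir},i}(T)$, and Theorem~\ref{theo:rdiri_is_di}(3) finishes the chain with $R_{\mathrm{dir},i}(T)=d_i(T)$.

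For part (2), the inclusion chain in \eqref{eq:inclusionrelation} (together with the fact that every RDA is a directional algorithm in the broad sense) immediately gives $R_{\mathrm{DF}}(T)\le R_{\mathrm{dir}}(T)\le d(T)$. For the matching lower bound, I note that since $\Omega_i(T)\subseteq\Omega(T)$, taking max over a larger set only increases the value, so $R_{\mathrm{DF}}(T)\ge R_{\mathrm{DF},i}(T)=d_i(T)$ for each $i\in\{0,1\}$, and hence $R_{\mathrm{DF}}(T)\ge\max(d_0(T),d_1(T))$. It then remains to observe that $d(T)=\max(d_0(T),d_1(T))$: the inequality $\ge$ is immediate, and $\le$ follows from Lemma~\ref{lemm:sw86lem3_1}, which supplies a single RDA simultaneously achieving both $d_0(T)$ and $d_1(T)$.

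I do not anticipate any real obstacle in this argument, as the deep content (the directional/depth-first collapse and Saks--Wigderson's simultaneous-optimum lemma) has already been absorbed into Theorems~\ref{theo:rdiri_is_di} and~\ref{theo:rdfvsrdirforp} and Lemma~\ref{lemm:sw86lem3_1}. The only point requiring a moment's care is the passage from the restricted equilibria $R_{\mathrm{DF},i},R_{\mathrm{dir},i},d_i$ to the unrestricted ones in part (2); this is settled cleanly by the $\max$-decomposition above, which leans on Lemma~\ref{lemm:sw86lem3_1} for the nontrivial direction.
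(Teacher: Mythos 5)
Your argument is correct and rests on the same ingredients as the paper's proof (Theorems~\ref{theo:rdiri_is_di} and~\ref{theo:rdfvsrdirforp}, Proposition~\ref{prop:vonneumanneq}, and the max-decomposition via Lemma~\ref{lemm:sw86lem3_1}). The paper establishes all three decompositions $R_{\mathrm{DF}}=\max\{R_{\mathrm{DF},0},R_{\mathrm{DF},1}\}$, $R_{\mathrm{dir}}=\max\{R_{\mathrm{dir},0},R_{\mathrm{dir},1}\}$, $d=\max\{d_0,d_1\}$ and then invokes part (1), whereas you prove only the decomposition for $d$ and sandwich $R_{\mathrm{DF}}$ and $R_{\mathrm{dir}}$ between $\max\{d_0,d_1\}$ and $d$; this is a slightly more economical variant of the same bookkeeping, not a genuinely different route.
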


\begin{proof}
The first equation directly follows from Theorems \ref{theo:rdiri_is_di} and \ref{theo:rdfvsrdirforp}. Let $A$ be a randomized algorithm which achieves $R_{\mathrm{DF}}(T)$. For each $i\in\{0,1\}$, we have
\begin{equation}
\label{eq:dfileqdf}
R_{\mathrm{DF},i}(T) \leq \max_{\omega\in\Omega_i} \mathrm{cost} (A,\omega) \leq \max_{\omega\in\Omega} \mathrm{cost} (A,\omega) = R_{\mathrm{DF}}(T).
\end{equation}
By Lemma \ref{lemm:sw86lem3_1}, there exists an RDA $B$ which achieves both $d_0(T)$ and $d_1(T)$. By the first equation, $B$ also achieves $R_{\mathrm{DF},0}(T)$ and $R_{\mathrm{DF},1}(T)$. Thus, by \eqref{eq:dfileqdf}, we get
\begin{equation}
R_{\mathrm{DF}}(T) \leq \max_{\omega\in\Omega} \mathrm{cost} (B,\omega) = \max \{ R_{\mathrm{DF},0}(T), R_{\mathrm{DF},1}(T) \} \leq R_{\mathrm{DF}}(T),
\end{equation}
which implies 
\begin{equation}
R_{\mathrm{DF}}(T) = \max \{ R_{\mathrm{DF},0}(T), R_{\mathrm{DF},1}(T) \}.
\end{equation}
Similarly, we have
\begin{equation}
R_{\mathrm{dir}}(T) = \max \{ R_{\mathrm{dir},0}(T), R_{\mathrm{dir},1}(T) \}
\end{equation}
and
\begin{equation}
d(T) = \max \{ d_0(T), d_1(T) \}.
\end{equation}
Hence we obtain the second equation.
\end{proof}

Together with Example \ref{exam:vereshchagin}, we also have the following.

\begin{coro}
There exists an AND--OR tree $V$ which satisfies $R(V) < R_{\mathrm{DF}}(V)$, that is, no randomized depth-first algorithm achieves $R(V)$.
\end{coro}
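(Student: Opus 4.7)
The plan is to directly combine Corollary~\ref{coro:main} with Vereshchagin's example (Example~\ref{exam:vereshchagin}), so essentially no new work is needed. I would take $V$ to be the AND--OR tree constructed by Vereshchagin, for which we already have the bounds $R(V) \leq 51$ and $d(V) = 33525/640 = 52.38\ldots$ recorded in the excerpt.

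The first step is to invoke Corollary~\ref{coro:main}(2), which gives the collapse $R_{\mathrm{DF}}(V) = d(V)$ without any assumption on the shape of $V$. Applied to Vereshchagin's tree, this yields $R_{\mathrm{DF}}(V) = 33525/640$ exactly. The second step is simply the chain of (in)equalities
\begin{equation}
R(V) \leq 51 < \tfrac{33525}{640} = d(V) = R_{\mathrm{DF}}(V),
\end{equation}
from which the separation $R(V) < R_{\mathrm{DF}}(V)$ is immediate. The reformulation ``no randomized depth-first algorithm achieves $R(V)$'' follows because, by definition, $R_{\mathrm{DF}}(V)$ is the infimum (attained, since everything is a min over a compact set of mixed strategies on finitely many deterministic algorithms) of $\max_{\omega}\mathrm{cost}(A,\omega)$ over $A \in \mathcal{D}(\mathcal{A}_{\mathrm{DF}}(V))$; if some randomized depth-first $A$ achieved $R(V)$, then $R_{\mathrm{DF}}(V) \leq R(V)$, contradicting the strict inequality above.

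There is essentially no obstacle to this argument: all the hard work has already been done, namely in Theorems~\ref{theo:rdiri_is_di} and \ref{theo:rdfvsrdirforp} which together yield the collapse in Corollary~\ref{coro:main}, and in Vereshchagin's original construction which supplies the numerical separation between $R$ and $d$. The only thing worth stating carefully is the passage from $R_{\mathrm{DF}}(V) > R(V)$ to the nonexistence of a randomized depth-first minimizer for $R$, which is immediate from the definitions of $R$ and $R_{\mathrm{DF}}$ as minima over the respective classes of randomized algorithms together with the inclusion $\mathcal{D}(\mathcal{A}_{\mathrm{DF}}(V)) \subseteq \mathcal{D}(\mathcal{A}(V))$.
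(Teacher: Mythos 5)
Your argument is exactly the paper's: the corollary is stated immediately after Corollary~\ref{coro:main} with the preamble ``Together with Example~\ref{exam:vereshchagin}, we also have the following,'' i.e.\ the intended proof is precisely the chain $R(V) \leq 51 < 33525/640 = d(V) = R_{\mathrm{DF}}(V)$. Your additional remark on why the strict inequality rules out a depth-first minimizer is a correct (if routine) unpacking of the definitions that the paper leaves implicit.
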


In \cite{It25}, it is shown that if an AND--OR tree $T$ satisfies $R(T)=d(T)$, there are infinitely many randomized algorithms which achieve $R(T)$. By our result, the assumption $R=d$ is equivalent to $R=R_{\mathrm{DF}}$.

\begin{coro}
If an AND--OR tree $T$ satisfies $R(T)=R_{\mathrm{DF}}(T)$, it has  uncountably many randomized algorithms which achieves $R(T)$.
\end{coro}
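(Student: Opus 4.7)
The plan is to reduce the hypothesis $R(T)=R_{\mathrm{DF}}(T)$ to the hypothesis $R(T)=d(T)$ via Corollary~\ref{coro:main}, invoke the cited result from \cite{It25}, and then upgrade ``infinitely many'' to ``uncountably many'' by a standard convexity argument on the set of achievers.

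First, I would observe the equivalence already noted in the surrounding text: since $R(T)\leq R_{\mathrm{DF}}(T)\leq d(T)$ holds unconditionally, and since Corollary~\ref{coro:main} has just established $R_{\mathrm{DF}}(T)=d(T)$, the hypothesis $R(T)=R_{\mathrm{DF}}(T)$ forces $R(T)=d(T)$. Applying \cite{It25} under this hypothesis yields at least two distinct randomized algorithms $A_{1},A_{2}\in\mathcal{D}(\mathcal{A}(T))$ with $\max_{\omega\in\Omega(T)}\mathrm{cost}(A_{i},\omega)=R(T)$ for $i=1,2$.

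Next, I would show that the set $\mathcal{E}(T):=\{A\in\mathcal{D}(\mathcal{A}(T)):\max_{\omega}\mathrm{cost}(A,\omega)=R(T)\}$ of achievers is convex. For any $p\in[0,1]$, the algorithm $pA_{1}+(1-p)A_{2}$ is again an element of $\mathcal{D}(\mathcal{A}(T))$, and for each fixed $\omega$ we have $\mathrm{cost}(pA_{1}+(1-p)A_{2},\omega)=p\,\mathrm{cost}(A_{1},\omega)+(1-p)\,\mathrm{cost}(A_{2},\omega)$ by linearity of expected cost in the algorithm distribution. Taking max over $\omega$ gives at most $pR(T)+(1-p)R(T)=R(T)$; the reverse inequality is the definition of $R(T)$. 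Hence every convex combination of $A_{1}$ and $A_{2}$ lies in $\mathcal{E}(T)$.

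Finally, since $A_{1}\neq A_{2}$, the one-parameter family $\{pA_{1}+(1-p)A_{2}:p\in[0,1]\}$ consists of pairwise distinct elements of $\mathcal{E}(T)$ indexed by a continuum, so $\mathcal{E}(T)$ is uncountable. I expect no real obstacle here: the only substantive input is the reduction to $R=d$ using this paper's main theorem, and the convexity step is routine. The subtle point to state carefully is that \cite{It25} must provide at least two distinct achievers (``infinitely many'' certainly suffices), from which the convex hull argument does all the remaining work.
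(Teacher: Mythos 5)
Your proposal is correct and follows essentially the same route as the paper: reduce $R(T)=R_{\mathrm{DF}}(T)$ to $R(T)=d(T)$ using Corollary~\ref{coro:main} (since $R\leq R_{\mathrm{DF}}\leq d$ and $R_{\mathrm{DF}}=d$), then invoke the cited result from~\cite{It25}. The one place you go beyond the paper's terse presentation is worth flagging: the text preceding the corollary paraphrases~\cite{It25} as giving ``infinitely many'' achievers, while the corollary asserts ``uncountably many''; your convex-combination argument on the achiever set $\mathcal{E}(T)$ supplies exactly the bridge needed to justify that upgrade from two (or countably many) achievers to a continuum, and it is a clean, routine step that the paper leaves implicit.
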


\section{Appendix} \label{sec:appendix}

In this section, we give detailed proofs we have omitted in the previous sections.

\subsection{Proof of Lemma \ref{lemm:sw86lem3_1}}

We perform induction on the height of the tree. 
If the height is 0 then the assertion of the lemma is obvious. 
Suppose that the height of $T$ is $h+1$ and assume that the assertion of the lemma holds for all trees of height at most $h$. 
Without loss of generality, we may assume that the root of $T$ is an AND node. 
Let $v_{1},\dots,v_{n}$ be the nodes just under the root of $T$. 
For each $k \in \{ 1,\dots,n \}$, let $T_{k}$  be the subtree whose root is $v_{k}$. 
For each $k$, by induction hypothesis, there is an algorithm $B_{k}$ of $T_{k}$ that achieves both $d_{0}(T_{k})$ and $d_{1}(T_{k})$. Fix such $B_{k}$ for each $k$.

\noindent
Claim 1 ~ If $X$ is an RDA on $T$ such that $X_{k}=B_{k}$ for each $k$ then $X$ achieves $d_{1} (T)$ regardless of the order of evaluation of $v_{1},\dots,v_{n}$ by $X$.

Proof of Claim 1: It is straightforward. Q.E.D.(Claim 1)

\noindent
Claim 2 ~ Suppose that $X$ is an RDA on $T$ that achieves $d_{0} (T)$, $k \in \{ 1, \dots, n \}$ and $Y$ is an RDA obtained by substituting $B_{k}$ for  $X_{k}$ then $Y$ achieves $d_{0} (T)$, too. 

Proof of Claim 2: 
It is enough to show the case of $k=1$. 
Let $\omega=\omega_{1}\cdots \omega_{n} $ be an assignment that maximizes $\mathrm{cost}(Y,\omega)$ under the constraint that the root has value 0, where each $\omega_{k}$ is the $T_{k}$-part of $\omega$. Let $i\in \{0,1\}$ be the value of $v_{1}$ in the presence of $\omega$. Then we may assume the following.
\begin{equation} \label{eq:swlem31a}
\mathrm{cost}(B_{1}, \omega_{1}) = \max_{\xi_{(1)}\in\Omega_i(T_1)} \mathrm{cost}(B_{1},\xi_{(1)})
\end{equation}

Now, let $\eta_{(1)}$ be an assignment to $T_{1}$ that make the value of $v_{1}$ $i$ and satisfying the following.
\begin{equation} \label{eq:swlem31b}
\mathrm{cost}(X_{1}, \eta_{(1)}) = \max_{\xi_{(1)}\in\Omega_i(T_1)} \mathrm{cost}(X_{1},\xi_{(1)})
\end{equation}

By \eqref{eq:swlem31a}, \eqref{eq:swlem31b} and the definition of $B_{1}$, we have the following. 
\begin{equation} \label{eq:swlem31c}
\mathrm{cost}(B_{1}, \omega_{1}) \leq \mathrm{cost}(X_{1}, \eta_{(1)})
\end{equation}

Hence, letting $\omega^{\prime}$ be the assignment given by substituting $\eta_{(1)}$ for $\omega_{1}$, we have the following. 
\begin{align} 
\max_{\xi\in\Omega_0(T)} \mathrm{cost}(Y,\xi )
&=
\mathrm{cost}(Y, \omega) \quad \text{[by the definition of $\omega$]}
\notag \\
&\leq \mathrm{cost}(X, \omega^{\prime}) \quad \text{[by \eqref{eq:swlem31c}]}
\notag \\
&\leq \max_{\xi\in\Omega_0(T)} \mathrm{cost}(X, \xi) 
\notag \\
&=d_{0}(T) \quad \text{[by the assumption on $X$]} \label{eq:swlem31d}
\end{align}
Hence $Y$ achieves $d_{0} (T)$. Q.E.D.(Claim 2)

By repeatedly applying Claim 2 to $X$ in the claim for $k \in \{1,\dots,n\}$, we get an RDA $Z$ such that $Z$ achieves $d_{0}(T)$ and $Z_{k}=B_{k}$ for all $k \in \{1,\dots,n\}$. 
By Claim 1, $Z$ achieves $d_{1} (T)$, too. 

\subsection{Proof of Lemma \ref{lemm:prsdelta_deltau}}

(1) It is easy to see that $\prod_{j=1}^{n} S_{j}^{t_{j}} (\omega_{j}) = 0$ for each $\mathbf{t}$ that does not satisfy the condition mentioned above. 

(2) 
\begin{align}
\sum_{\omega} S^{\delta} (\omega) 
=& 
\sum_{\omega \in \Omega(T)}
\sum_{ \mathbf{t}\in \{ 0,1 \}^{n}} \delta (\mathbf{t}) \prod_{j=1}^{n} S_{j}^{t_{j}} (\omega_{j})
\notag
\\
=&
\sum_{\mathbf{t}\in \{ 0,1 \}^{n} } 
\delta (\mathbf{t}) 
\sum_{\substack{\omega_{1} \in \Omega(T_{1}) \\ \cdots \\ \omega_{n} \in \Omega(T_{n}) } }
\prod_{j=1}^{n} S_{j}^{t_{j}} (\omega_{j})
\notag
\\
=& \sum_{\mathbf{t} \in \{ 0,1 \}^{n} } 
\delta (\mathbf{t}) \sum_{\omega_{1} \in \Omega(T_{1})} S_{1}^{t_{1}} (\omega_{1}) 
\sum_{ \substack{ \omega_{2} \in \Omega(T_{2}) \\ \dots \\ \omega_{n} \in \Omega(T_{n}) } } 
\prod_{j=2}^{n} S_{j}^{t_{j}} (\omega_{j})
\notag
\\
=& \sum_{ \mathbf{t}\in \{ 0,1 \}^{n} } 
\delta (\mathbf{t}) \cdot 1 \cdot 
\sum_{ \substack{ \omega_{2} \in \Omega(T_{2}) \\ \dots \\ \omega_{n} \in \Omega(T_{n}) } } 
\prod_{j=2}^{n} S_{j}^{t_{j}} (\omega_{j})
\notag
\\
=& \cdots 
\notag
\\
=& \sum_{\mathbf{t} \in \{ 0,1 \}^{n} } 
\delta (\mathbf{t}) \cdot 1 
\notag
\\
=& 1
\end{align}

(3)
Fix $\mathbf{t}$ and let $V_{j}=\Omega_{t_{j}}(T_{j})$. 
For each $\omega \in V_{1} \times \cdots \times V_{n}$, $\mathbf{t}^{\omega}$ in the sense of (1) equals this fixed $\mathbf{t}$. 
\begin{align}
\mathrm{Pr}_{S^{\delta}} [ \text{$\bar{v}$ is $\mathbf{t}$}]
=& \sum_{\omega \in V_{1} \times \cdots \times V_{n}} S^{\delta} (\omega) 
\notag
\\
=& \sum_{\omega \in V_{1} \times \cdots \times V_{n}} \delta (\mathbf{t}) \prod_{j=1}^{n} S_{j}^{t_{j}} (\omega_{j})
\quad \text{[by (1)]}
\notag
\\
=& \delta (\mathbf{t}) \sum_{\omega \in V_{1} \times \cdots \times V_{n}} \prod_{j=1}^{n} S_{j}^{t_{j}} (\omega_{j}) 
\notag
\\
=& \delta (\mathbf{t}) \quad \text{[$\sum \prod =1$, in the same way as (2)]}
\end{align}

\subsection{Proof of the two equations}

$\star$ Proof of \eqref{eq:partialdistcomp}:
 \[
S_j = q_j U^0_j + (1-q_j) U^1_j, 
~ 
S^{\delta}_j = q_j S^0_j + (1-q_j) S^1_j
\]

The first equation is straightforward. 
We show the following for each $i=0,1$ and for each $\xi \in \Omega (T_{j})$.

\begin{equation} \label{eq:partialdistcomp001}
S^{i}_{j} (\xi) = \mathrm{Pr}_{S^{\delta}} [\xi \text{ is assigned to } T_j \ | \ \alpha \text{ evaluates } T_j \text{ and } v_j \text{ is } i]
\end{equation}
Then the second equation of \eqref{eq:partialdistcomp} is shown in the same way as the first equation. 

Suppose that $i \in \{ 0,1 \}$ and that $\xi \in \Omega (T_{j})$. 
In the case when $\xi$ does not set $v_{j}$ (the root of $T_{j}$) to $i$, it is easy to see that the both sides of \eqref{eq:partialdistcomp001} is 0. 
In the following, we assume that $\xi$ sets $v_{j}$ to $i$. 
Without loss of generality, we may assume that $j=n$. 

For each $\omega = (\omega_{1},\dots,\omega_{n-1}) \in \Omega (T_{1}) \times \cdots \times \Omega (T_{n-1})$, letting $t_{k}$ be the value of $v_{k}$ set by $\omega_{k}$, put $\mathbf{t}^{\omega}=t_{1}\cdots t_{n-1} i$. 
By Lemma \ref{lemm:prsdelta_deltau}, we have the following. 

\begin{equation} \label{eq:partialdistcomp002}
S^{\delta} (\omega_{1} \cdots \omega_{n-1} \xi) = \delta (\mathbf{t}^{\omega}) S^{i}_{n} (\xi) \prod_{k=1}^{n-1} S_{k}^{t_{k}} (\omega_{k})
\end{equation} 

Sum of the right-hand side over all $\omega  = (\omega_{1},\dots,\omega_{n-1}) \in \Omega (T_{1}) \times \cdots \times \Omega (T_{n-1})$ such that $\alpha$ evaluates $T_{n}$ is as follows. 
Here, $\mathbf{t}^{\prime} = t^{\prime}_{1} \cdots t^{\prime}_{n-1}$ runs over all elements of $\{ 0,1 \}^{n-1}$ such that if we set $(v_{1},\dots,v_{n-1})$ to $(t^{\prime}_{1}, \dots t^{\prime}_{n-1})$ then $\alpha$ evaluates $T_{n}$. 

\begin{align} \label{eq:partialdistcomp003}
\sum_{\substack{\omega \\ \text{$\alpha$ evaluates $T_{n}$}}} S^{\delta} (\omega_{1} \cdots \omega_{n-1} \xi)
=&
S^{i}_{n} (\xi) \sum_{\substack{\omega \\ \text{$\alpha$ evaluates $T_{n}$}}}
\delta (\mathbf{t}^{\omega}) \prod_{k=1}^{n-1} S_{k}^{t_{k}} (\omega_{k})
\notag
\\
=&
S^{i}_{n} (\xi) \sum_{\mathbf{t}^{\prime}}
\sum_{\substack{\omega \\ \mathbf{t}^{\omega}=\mathbf{t}^{\prime} i } }
\delta (\mathbf{t}^{\omega}) \prod_{k=1}^{n-1} S_{k}^{t_{k}} (\omega_{k})
\notag
\\
=&
S^{i}_{n} (\xi) \sum_{\mathbf{t}^{\prime}}
\delta (\mathbf{t}^{\prime} i) 
\sum_{\substack{\omega \\ \mathbf{t}^{\omega}=\mathbf{t}^{\prime} i } }
\prod_{k=1}^{n-1} S_{k}^{t_{k}} (\omega_{k})
\notag
\\
=&
S^{i}_{n} (\xi) \sum_{\mathbf{t}^{\prime}}
\delta (\mathbf{t}^{\prime} i) 
\notag
\\
=&
S^{i}_{n} (\xi) \sum_{\mathbf{t}^{\prime}}
\mathrm{Pr}_{S_{\delta}} [ \overline{v} \text{ is } \mathbf{t}^{\prime} i] 
\quad 
\text{[By Lemma \ref{lemm:prsdelta_deltau}]}
\notag
\\
=&
S^{i}_{n} (\xi) \mathrm{Pr}_{S_{\delta}} \text{[$\alpha$ evaluates $T_n$ and  $v_n$ is $i$]}
\end{align}

Therefore, the following holds. 

\begin{align} \label{eq:partialdistcomp004}
&\mathrm{Pr}_{S^{\delta}} [\xi \text{ is assigned to } T_n \land \alpha \text{ evaluates } T_n \land  v_n \text{ is } i]
\notag
\\
=&\mathrm{Pr}_{S^{\delta}} [\xi \text{ is assigned to } T_n \land \alpha \text{ evaluates } T_n]
\notag
\\
=&
\sum_{\omega_{1},\dots,\omega_{n-1}} S^{\delta} (\omega_{1} \cdots \omega_{n-1} \xi) 
\qquad \text{[Sum over all $\omega$ such that $\alpha$ evaluates $T_{j}$.]}
\notag
\\
=&
S^{i}_{n} (\xi) \mathrm{Pr}_{S_{\delta}} 
\text{[$\alpha$ evaluates $T_n$ and  $v_n$ is $i$]}
\end{align}

Hence, we get \eqref{eq:partialdistcomp001} in the case of $j=n$. 
The other cases are shown in the same way.

\

$\star$ Proof of \eqref{eq:Sprimewritten}:
\[
S' = q_j S^0_j + (1-q_j) S'_1,
\]
where $S'_1 (\xi) = \langle \delta_{U^1_j} ; \{ S^i_{j,k} \}_{k,i} \rangle$.

Suppose $\xi \in \Omega (T_{j})$. For the subtree $T_j$ and $\xi\in\Omega(T_j)$, we define $\mathbf{t}^{\xi}$, $t^{\xi}_{k}$, and ``$\overline{v_{j}}$ is $\mathbf{t}^{\xi}$'' similarly as we did for the original tree $T$. Unless otherwise specified, the symbol $\prod_{k=1}^{m}$ stands for $\prod_{k=1}^{m} S_{j,k}^{t^{\xi}_{k}} (\xi_{k}) $. 

\begin{align}
S^{\prime} (\xi) 
=& \langle \delta_{S_{j}} ; \{ S_{j,k}^{i} \}_{k,i} \rangle (\xi)
\notag
\\
=& \delta_{S_{j}} (\mathbf{t}^{\xi}) \prod_{k=1}^{m} 
\quad \text{[by Lemma \ref{lemm:prsdelta_deltau}]}
\notag
\\
=& \mathrm{Pr}_{S_{j}} [ \overline{v_{j}} \text{ is } \mathbf{t}^{\xi}] \prod_{k=1}^{m} 
\quad \text{[by Lemma \ref{lemm:prsdelta_deltau}]}
\end{align}

By the first equation of \eqref{eq:partialdistcomp},
 we have the following. 

\begin{align} \label{eq:Sprimewritten01}
S^{\prime} (\xi) 
=& 
( q_{j} \mathrm{Pr}_{U^{0}_{j}} [ \overline{v_{j}} \text{ is } \mathbf{t}^{\xi}] 
+
(1-q_{j}) \mathrm{Pr}_{U^{1}_{j}} [ \overline{v_{j}} \text{ is } \mathbf{t}^{\xi}]) \prod_{k=1}^{m} 
\notag
\\
=&
( q_{j} \delta_{U^{0}_{j}} (\mathbf{t}^{\xi}) + (1-q_{j}) \delta_{U^{1}_{j}} (\mathbf{t}^{\xi}) ) \prod_{k=1}^{m} 
\quad \text{[by Lemma \ref{lemm:prsdelta_deltau}]}
\end{align}

Note that in the beginning of the proof of Claim 2, we have assumed that the root of $T_j$ is an OR node. In the case when $\xi$ sets $v_{j}$ to 0, we have $\mathbf{t}^{\xi} = 0^{m}$, $t^{\xi}_{k} = 0$ for each $k$, and $\delta_{U^{0}_{j}} (0^{m}) = 1$. Moreover, $\delta_{U^{1}_{j}} (0^{m}) = 0$ and thus by Lemma \ref{lemm:prsdelta_deltau}, it holds that $S^{\prime}_{1} (\xi) =0$. Therefore by \eqref{eq:Sprimewritten01},  we get the following. 

\begin{align}
S^{\prime} (\xi) 
=& (q_{j} \cdot 1 + (1-q_{j}) 0) \prod_{k=1}^{m} S_{j,k}^{0} (\xi_{k}) 
\notag
\\
=& q_{j} \prod_{k=1}^{m} S_{j,k}^{0} (\xi_{k}) 
\notag
\\
=& q_{j} S^{0}_{j} (\xi) + (1-q_{j})0
\quad \text{[by \eqref{eq:subtree0dist}]}
\notag
\\
=& q_{j} S^{0}_{j} (\xi) + (1-q_{j}) S_{1}^{\prime} (\xi)
\end{align}

In the case when $\xi$ sets $v_{j}$ to 1, we have $\delta_{U^{0}_{j}} (\mathbf{t}^{\xi}) = 0$ and $S^{0}_{j} (\xi) = 0$, thus the following holds by \eqref{eq:Sprimewritten01}.

\begin{align} 
S^{\prime} (\xi) 
=&
( q_{j} \cdot 0 + (1-q_{j}) \delta_{U^{1}_{j}} (\mathbf{t}^{\xi}) ) \prod_{k=1}^{m} 
\notag
\\
=&  q_{j} \cdot 0 +  (1-q_{j}) S_{1}^{\prime} (\xi) 
\quad \text{[by definition of $S_{1}^{\prime}$ and Lemma \ref{lemm:prsdelta_deltau}]}
\notag
\\
=& q_{j} S^{0}_{j} (\xi) + (1-q_{j}) S_{1}^{\prime} (\xi)
\end{align}

Thus, we have shown \eqref{eq:Sprimewritten}.

\subsection{Proof of Lemma \ref{lemm:1distsubtreemin}}

We show by induction on the number of nodes in $T$. We only see the case where $T$ has the AND root (the other case is similar). First, we show the following.

\ 

{\bf Claim.} For any deterministic depth-first algorithm $\alpha$, we have
\begin{equation} \label{eq:1distsubtreemin03}
\mathrm{cost} (\alpha,S) \geq \sum_{j=1}^n \min_{Y_j\in\mathcal{D}(\mathcal{A}_{\mathrm{DF}}(T_j))} \mathrm{cost} (Y_j,S_j).
\end{equation}

\ 

Proof of Claim: 
Let $T_k$ be the first depth-1 subtree $\alpha$ evaluates, and $T'$ be the tree created by cutting off $T_k$ from $T$. For each $\xi\in\Omega_1(T_k)$, let $\alpha_k$ be the $T_k$ part of $\alpha$, and $\alpha^\xi$ the procedure of $\alpha$ after evaluating $T_k$ when the $T_k$ part of the assignment is $\xi$. Let $S'$ be a distribution of $T'$ defined by
\begin{equation} \label{eq:1distsubtreemin04}
S'(\omega') = \prod_{j \neq k} S_{j} (\omega'_j).
\end{equation}
Then, by induction hypothesis, we have
\begin{equation} \label{eq:1distsubtreemin05}
\begin{aligned}
& \mathrm{cost} (\alpha,S) \\
=&  \sum_{\omega\in\Omega(T)} S(\omega) \mathrm{cost} (\alpha,\omega) \\
=& \sum_{\omega_1\in\Omega_1(T_1)} \cdots \sum_{\omega_n\in\Omega_1(T_n)} S_1(\omega_1) \cdots S_n(\omega_n) \mathrm{cost} (\alpha,\omega) \\
=& \sum_{\xi\in\Omega_1(T_k)} \sum_{\omega'\in\Omega_1(T')} S_k(\xi)S'(\omega') [\mathrm{cost} (\alpha_k,\xi) + \mathrm{cost} (\alpha^\xi,\omega')] \\
=&  \mathrm{cost} (\alpha_k,S_k) + \sum_{\xi\in\Omega_1(T_k)} S_k(\xi) \mathrm{cost} (\alpha^\xi,S') \\
\geq & \min_{Y_k\in\mathcal{D}(\mathcal{A}_{\mathrm{DF}}(T_k))} \mathrm{cost} (Y_k,S_k) + \sum_{\xi\in\Omega_1(T_k)} S_k(\xi) \sum_{j \neq k} \min_{Y_j\in\mathcal{D}(\mathcal{A}_{\mathrm{DF}}(T_j))} \mathrm{cost} (Y_j,S_j) \\
=& \sum_{j=1}^n \min_{Y_j\in\mathcal{D}(\mathcal{A}_{\mathrm{DF}}(T_j))} \mathrm{cost} (Y_j,S_j).
\end{aligned}
\end{equation}
\hfill
Q.E.D. (Claim)

By this claim, we have
\begin{equation} \label{eq:1distsubtreemin06}
\begin{aligned}
\mathrm{cost} (X,S) =& \sum_{\alpha\in\mathcal{A}_{\mathrm{DF}}(T)} X(\alpha) \mathrm{cost} (\alpha,S) \\
\geq & \sum_{\alpha\in\mathcal{A}_{\mathrm{DF}}(T)} X(\alpha) \sum_{j=1}^n \min_{Y_j\in\mathcal{D}(\mathcal{A}_{\mathrm{DF}}(T_j))} \mathrm{cost} (Y_j,S_j) \\
=& \sum_{j=1}^n \min_{Y_j\in\mathcal{D}(\mathcal{A}_{\mathrm{DF}}(T_j))} \mathrm{cost} (Y_j,S^1_j).
\end{aligned}
\end{equation}



\end{document}